
\documentclass[journal]{IEEEtran}
\ifCLASSINFOpdf
\else
\fi
\hyphenation{op-tical net-works semi-conduc-tor}

\usepackage{cite}
\usepackage{amsmath,amssymb,amsfonts,amsthm}
\usepackage{algorithmic}
\usepackage{algorithm}
\usepackage{graphicx}
\usepackage{subfigure}
\usepackage{textcomp}
\usepackage{makecell}
\usepackage{color}

\usepackage{mathtools}
\usepackage{mathrsfs}
\usepackage{braket}
\usepackage{arydshln}
\usepackage{indentfirst}
\usepackage{blkarray}
\usepackage{multirow}
\usepackage{bbding}
\usepackage{booktabs}
\usepackage{diagbox}
\usepackage{hyperref}
\hypersetup{hypertex=true,
            colorlinks=true,
            linkcolor=blue,
            anchorcolor=blue,
            citecolor=blue}


\allowdisplaybreaks[4]

\newtheorem{definition}{Definition}
\newtheorem{theorem}[definition]{Theorem}
\newtheorem{lemma}[definition]{Lemma}

\begin{document}

%
\title{Toward Consistent and Efficient Map-based Visual-inertial Localization: Theory Framework and Filter Design}
%
%
%

\author{\IEEEauthorblockN{Zhuqing Zhang\IEEEauthorrefmark{1}, Yang Song\IEEEauthorrefmark{2}, Shoudong Huang\IEEEauthorrefmark{2}, Rong Xiong\IEEEauthorrefmark{1}, and Yue Wang\IEEEauthorrefmark{1}\\}
\IEEEauthorblockA{\IEEEauthorrefmark{1}State Key Laboratory of Industrial Control Technology, \\Zhejiang University, Hangzhou, Zhejiang, China.\\}
\IEEEauthorblockA{\IEEEauthorrefmark{2}Robotics Institute, Faculty
of Engineering and Information Technology,\\ University of Technology
Sydney, Ultimo, NSW 2007, Australia}
\thanks{Corresponding author: Yue Wang (email: wangyue@iipc.zju.edu.cn)}}

\maketitle

\begin{abstract}

This paper focuses on designing a consistent and efficient filter for map-based visual-inertial localization. First, we propose a new Lie group with its algebra, based on which a novel invariant extended Kalman filter (invariant EKF) is designed. We theoretically prove that, when we do not consider the uncertainty of the map information, the proposed invariant EKF can naturally maintain the correct observability properties of the system. To consider the uncertainty of the map information, we introduce a Schmidt filter. With the Schmidt filter, the uncertainty of the map information can be taken into consideration to avoid over-confident estimation while the computation cost only increases linearly with the size of the map keyframes. In addition, we introduce an easily implemented observability-constrained technique because directly combining the invariant EKF with the Schmidt filter cannot maintain the correct observability properties of the system that considers the uncertainty of the map information. Finally, we validate our proposed system's high consistency, accuracy, and efficiency via extensive simulations and real-world experiments.             
\end{abstract}

\begin{IEEEkeywords}
Visual inertial localization, consistency, invariant extended Kalman filter
\end{IEEEkeywords}

%
\IEEEpeerreviewmaketitle

\section{Introduction}
%
%
%
%

\IEEEPARstart{L}{ocalization} is a basic module for intelligent robots. Recently, cameras and inertial measurement units (IMUs) have been widely used in localization for their lightweight and low cost, giving birth to visual-inertial odometry (VIO). Under the efforts of researchers from different groups, there already exist lots of mature VIO algorithms with properties of high accuracy and high efficiency\cite{msckf,openvins,s-msckf,ukf-msckf,vinsmono,rovio,SVO2,okvis}. However, VIO inevitably suffers from drift for a long-term running. In practice, it is more appreciated to perform localization with global information, so that on the one hand, the drift of the estimator can be bounded; on the other hand, further tasks like navigation can be implemented by acquiring the global position. A typical implementation of drift-free localization is employing a pre-built map. 

In general, a good map-based visual-inertial localization (VIL) algorithm should not only be error-bounded, but be consistent and computational efficient while fusing the pre-built map information. 
If a system is inconsistent, then the uncertainty of the estimated value is usually too optimistic, which will lead to unreliable estimation\cite{oc}. According to previous works, there are two main reasons for the inconsistency of the system: One is that the estimator is failed to maintain the correct observability properties of the system\cite{converg and consist,consistVIN}; another is that the uncertainty of the fused information (e.g., the pre-built map) is not properly used by the system\cite{consistent}. 

One theoretically sound method to consistently fuse a pre-built map with VIO is to regard the problem as an extension of visual-inertial simultaneous localization and mapping (VI-SLAM), which takes all measurements from both the local odometry session and the mapping session to formulate a batch optimization problem. Unfortunately, even if this scheme yields consistent localization, the ever-growing size of the state prohibits online computation. A popular solution makes a compromise to improve the computational efficiency by only maintaining a local odometry and the relative transformation between the odometry frame and the map frame, without considering the uncertainty of the map, i.e., assuming the map is \textit{perfect}\cite{maplab,vinsgps}. However, as mentioned in \cite{consistent}, \textit{perfect} map assumption will lead to overconfident estimation. In particular, Lee et al. \cite{gpsvio} point out that the formulation like \cite{maplab,vinsgps} may cause the observability-deficient issue, which leads to inconsistency. Therefore, they propose a solution by representing the local odometry state directly in the global frame, discarding the relative transformation between the local odometry and the global frames, which is only suitable for map agnostic gravity-aligned global measurements, e.g., global positioning system (GPS).

We consider a more general situation: 1) the pre-built map has uncertainties, i.e., \textit{imperfect map}; 2) the map-based measurements (global measurements) have uncertainties; 3) the transformation between the map frame and the inertial frame is 6 degress of freedom (DoF), i.e., gravity-unaligned. Under this formulation, the problem in \cite{gpsvio} can be regarded as a special case, where GPS acts like a \textit{perfect map} providing global measurements with uncertainies and the transformation between the map and the inertial frame is 4 DoF. We argue that this general situation is common in practice, where the maps are obtained indoors or via structure from motion (SFM)\cite{colmap}. How to consistently and efficiently fuse such general map information is a non-trivial problem, and the theoretical consistency analysis remains vague, naturally leaving the corresponding solution very limited.


In this work, we aim to fill the gap by proposing observability-based consistency theories and developing an observability-constrained filter for consistent and efficient map-based VIL. Specifically, we employ a visual pre-built map, which contains map keyframe poses and map features positions with uncertainties. We regard the relative transformation that connects the local inertial frame and the map (global) frame as the \textit{augmented variable}, and the whole VIL system that maintains a local VIO state and the \textit{augmented variable} simultaneously as the \textit{augmented system}. Further, two kinds of \textit{augmented systems} are analyzed --- the \textit{perfect augmented system} that assumes the map is \textit{perfect} and the \textit{imperfect augmented system} that considers the uncertainty of the map. According to the theoretical analyses toward the observability properties of the \textit{(im)perfect augmented system}, we get the following conclusions: \textbf{a)} The ideal \textit{perfect augmented system} for our map-based VIL problem has four dimensions that are unobservable; \textbf{b)} The ideal \textit{imperfect augmented system} for our map-based VIL problem has ten (four plus six) dimensions that are unobservable; \textbf{c)} The ever-changing estimation of \textit{augmented variable} is the root that results in the deficiency of the original unobservable directions, only three unobservable directions of the \textit{(im)perfect augmented system} being preserved. 

Based on the above conclusions, we design a consistent and efficient map-based VIL. First of all, we design an invariant EKF based on our proposed novel Lie group and Lie algebra, which can naturally maintain the correct observability properties of the \textit{perfect augmented system}. Then, inspired by \cite{consistent, schmidtekf}, we introduce the Schmidt EKF to take the uncertainty of the map into consideration while keeping the computation at a low level. Moreover, to keep the correct observability properties of the \textit{imperfect augmented system}, we follow the idea of \cite{oc} to constrain the updating of the \textit{augmented variable}. Finally, a multi-state constraint is introduced to improve the accuracy of the localization. With our final multi-state observability constrained Schmidt invariant Kalman filter (MSOC-S-IKF), the system can correctly maintain the ten unobservable dimensions of the \textit{imperfect augmented system} while satisfying the real-time demand.

In summary, the contributions of this paper are as follows which have been verified by extensive simulations and real-world experiments:
\begin{itemize}
    \item Theoretically analyze the observability properties of the \textit{perfect/imperfect augmented system}, and ground these theories to physical explanations.
    \item Introduce a new Lie group and algebra, and propose a novel invariant EKF for the \textit{perfect augmented system} based on this new Lie group and algebra, which is proven to maintain the consistency of the \textit{perfect augmented system} naturally.
    \item Propose a consistent and efficient filter that combines the invariant EKF, Schmidt filter, multi-state constraint, and observability constraint, such that the system can consider the uncertainty of the map information, keep the cost of computation and storage at a low level, maintain the correct observability properties of the \textit{imperfect augmented system} and perform accurate localization.
\end{itemize}
\vspace{-2mm}

\section{Related works}
\subsection{Visual-inertial odometry}
Visual-inertial odometry (VIO) can be divided into filter-based and optimization-based in general. For the filter-based VIO, the IMU information is modeled in the propagation progress, while the relative motion calculated by the vision sensor is employed to update the state\cite{combine vi}. The famous frameworks include loosely-coupled multi-sensor fusion (MSF) \cite{msf}, which fuses the results from IMU and the visual odometry in an outer EKF framework. i.e., the estimations from the IMU are independent of those from the visual odometry. A more popular tightly-coupled framework is multi-state constrained Kalman filter (MSCKF)\cite{msckf,openvins,s-msckf,ukf-msckf}, which maintains a sliding window so that multiple states can be used to constrain the updating, and therefore results in more accurate estimation. Besides, benefiting from the multi-state constraint, MSCKF does not maintain feature points in the state vector, realizing a very efficient VIO. Another tightly-coupled filter-based VIO employs a robocentric formulation \cite{rovio}. And it also utilizes iterative EKF to update the state such that the nonlinear system can be relinearized to obtain accurate estimation. However, keeping features in the state and utilizing iterative EKF make it less efficient compared with MSCKF.        

For the optimization-based VIO, the measurements are formalized as a graph and optimized by iterative algorithms. OKVIS \cite{okvis} is a famous early implementation of the optimization-based VIO, where IMU measurements and visual measurements are tightly coupled in a factor graph. To reduce the optimization variables, preintegration \cite{preintegration} is proposed as a factor between the two keyframes. Following this framework, a popular VIO VINS-Mono \cite{vinsmono} is proposed and utilized in many aerial and ground moving vehicles. Forster et al. \cite{SVO2} propose a semi-direct-based VIO system, where both pixel intensities and features are used to perform robust and efficient estimation.

Compared with the filter-based VIO, the optimization-based VIO requires much more computation. Considering accuracy and efficiency, we use Open-VINS\cite{openvins}, a variation of MSCKF, as the backbone of our proposed algorithm. 
\vspace{-2mm}

\subsection{Drift-free visual-inertial localization}
Although the research on VIO has gained much progress, VIO inevitably suffers from drift. For the filter-based VIL, to bound the drift in VIO, GPS \cite{gpsvio} and map-based measurements \cite{consistent,maplab} are introduced using the augmented state formulation for fusion. These works make tradeoffs between the computation and the consideration of uncertainties of the map. One challenge is to design a filter that can be consistent and efficient simultaneously. In \cite{consistent}, the Schmidt filter is introduced to consider the uncertainty of the map. However, the system maintains all the map features, whose size (denoted as $n$) can be tens of thousands, in the state vector, not suitable for large scenes. In this paper, we propose to only maintain the map keyframe poses in the state, whose size (denoted as $m$) is much smaller than $n$, improving the computational efficiency. Moreover, as mentioned in \cite{converg and consist, gpsvio}, the system with the formulation like \cite{consistent}, i.e. the \textit{augmented system}, will suffer from inconsistency due to the linearization toward nonlinear systems, which is not addressed in \cite{consistent}.

For the optimization-based VIL, to reduce the drift of the VIO, global measurements can also be integrated into the system. In VINS-Fusion\cite{vinsmono,vinsgps,vinslocal,vinscalib}, which is the extension of VINS-Mono \cite{vinsmono}, map-based measurements are fused in an outer-loop pose-graph optimization, i.e., loosely-coupled. In \cite{tcgp}, a tightly-coupled approach is proposed to optimize all measurements together with respect to all states. When both odometry and map-based measurements are integrated, a degeneracy analysis is performed in \cite{degeneration}. In general, optimization-based VIL methods have superior accuracy than filtering-based methods because the nonlinear system can be re-linearized with a cost of more expensive resources. However, the uncertainties of the map are ignored in all these methods, which will make the estimation unrealiable.
\vspace{-2mm}

\subsection{Observability analysis and consistency maintaining}\label{sec:related works observability}
The inconsistency problem of point feature based EKF SLAM is firstly demonstrated in \cite{first consist}. Its authors find that when a stationary robot observes a new feature multiple times, the uncertainty along the robot's orientation will decrease, which is against our intuition since the observation toward a new feature should not provide information about the ego state of the robot. A further analysis of inconsistency problem is discussed in \cite{converg and consist}, where the authors also consider the case that a robot observes a feature from two positions. In \cite{fej}, the authors firstly theoretically analyze the observability properties of the EKF-SLAM and argue that due to the linearization toward the nonlinear system, the unobservable directions become observable. Based on this analysis, first-estimated Jacobian (FEJ) is proposed to maintain the correct observability properties of the system, which turns out to be effective for improving the system's consistency. Furthermore, in their follow-up work, an observability constrained method is introduced \cite{oc}. Different from designing constraints to maintain the correct observability, a series of manifold-based filters (invariant filters) are proposed\cite{ukf-msckf,consistent1,c1,c2,c3,c4}. For example, in \cite{consistent1}, the authors formulate the state on a special Lie group and define nonlinear errors through Lie algebra instead of using standard errors in EKF. With this kind of formulation, the proposed invariant EKF can automatically maintain the correct observability properties of the system. 

The observability analyses and the methods for improving the consistency of the system mentioned above have already been extended and applied into visual-inertial localization systems \cite{consistent2,consistent3,consistent4}. Unfortunately, since map-based measurements are introduced for updating via the augmented state formulation, these works on observability analyses of visual-inertial odometry cannot reflect the observability of the augmented state. In \cite{gpsvio}, the observability for visual-inertial odometry fusing GPS is analyzed, which shows the insights for the observability of the augmented system. \textcolor{black}{However, it only points out that the augmented system suffers from inconsistency and proposes a consistent estimation algorithm that the state is estimated in a gravity-aligned map frame, i.e., the map has 4 DoF. To be specific, after initializing the relative transformation between the VIO frame and the 4 DoF map frame, the variables in the state vector are transformed from the 4 DoF VIO frame to the 4 DoF map frame, and the state vector is propagated by IMU measurements in the 4 DoF map frame. Nevertheless, for a more general gravity-unaligned map (a 6 DoF map), the state cannot be correctly propagated by the IMU measurements in the map frame because the gravity in the map frame is unknown. In this paper, we propose a general framework that can be used for the case of 6 DoF maps and extend the observability analysis in \cite{gpsvio} to the case that the uncertainty of the map information is considered.}


\section{Preliminary and problem statement}
In this section, some preliminaries about Lie group, Lie algebra and invariant error are introduced first. Then, the problem description toward map-based visual-inertial localization system is provided, which could give the readers a general idea of the problem addressed in the following sections. 
\vspace{-2mm}

\subsection{Theoretical background and preliminaries}\label{sec:preliminaries}
\textbf{Matrix Lie group and Lie algebra\cite{state estimation}:} A matrix Lie group $\mathfrak{G} \subseteq \mathbb{R}^{N \times N}$ is a subset of invertible square matrices with the following properties:
\begin{equation}\label{eq:group property}
    \begin{aligned}
        &\mathbf{I} \in \mathfrak{G}\\
        &\forall \mathbf{X} \in \mathfrak{G}, \mathbf{X}^{-1} \in \mathfrak{G}\\
        &\forall \mathbf{X}_{1},\mathbf{X}_{2} \in \mathfrak{G}, \mathbf{X}_{1}\mathbf{X}_{2} \in \mathfrak{G},\\
    \end{aligned}
\end{equation}
where $\mathbf{I}$ is the identity matrix, $N$ is the order of the square matrices. 

Denote $\mathfrak{g} \cong \mathbb{R}^{D}$ be the Lie algebra of $\mathfrak{G}$. An exponential map $\mathrm{exp}: \mathbb{R}^{D} \xrightarrow{} \mathfrak{G}$ can be defined by
\begin{equation}\label{eq:exp}
    \mathrm{exp}(\boldsymbol{\xi}) = \mathrm{exp_{m}}(\mathfrak{L}_{\mathfrak{g}}(\boldsymbol{\xi})),
\end{equation}
where $\boldsymbol{\xi} \in \mathbb{R}^{D}$, $\mathrm{exp_{m}}$ is the standard matrix exponential map, and $\mathfrak{L}_{\mathfrak{g}}$ is the isomorphism of $\mathbb{R}^{D}$ and $\mathfrak{g}$. For simplicity, we denote $\mathfrak{L}_{\mathfrak{g}}(\cdot) \triangleq (\cdot)^{\curlywedge}$. Denote $\mathrm{log}$ be the inverse operation of $\mathrm{exp}$.

The usually used Lie group (and its algebra) includes special orthogonal group ${SO}(3)$($\mathfrak{so}(3)$), which represents rotation matrix; special Euclidean group ${SE}(3)$($\mathfrak{se}(3)$), which is the combination of rotation and translation; ${SE}_{2+K}(3)$($\mathfrak{se}_{2+K}(3)$), which is the extension of special Euclidean group and consists of one rotation matrix and $2+K$ vectors\cite{consistent1}.    

\textbf{Left and right invariant error:} Suppose $\mathbf{X}_{t},\hat{\mathbf{X}}_{t} \in \mathfrak{G}$ are the real and the estimated states at the time step $t$, respectively\footnote{Throughout this paper, the symbol with/without $\hat{\cdot}$ denotes the estimated/true value.}.
Then, we have the following definition:

\begin{definition}
\label{def:invariant error}
The left and right invariant errors between the two states are\cite{c2}:
\begin{equation}\label{eq:left_invariant}
    \boldsymbol{\eta}_{t}^{l} = \mathbf{X}_{t}^{-1}\hat{\mathbf{X}}_{t},\quad(\mathrm{Left}\ \mathrm{invariant}\ \mathrm{error})
\end{equation}
\begin{equation}\label{eq:right_invariant}
    \boldsymbol{\eta}_{t}^{r} = \hat{\mathbf{X}}_{t} \mathbf{X}_{t}^{-1},\quad(\mathrm{Right}\ \mathrm{invariant}\ \mathrm{error})
\end{equation}
\end{definition}
The reason why (\ref{eq:left_invariant})/(\ref{eq:right_invariant}) is called left/right invariant error is that when we left/right multiply an arbitrary element in $\mathfrak{G}$ to both trajectories $\mathbf{X}_{t},\hat{\mathbf{X}}_{t}$, the error is invariant. In the rest of the paper, right invariant error, which is widely used in the localization problem\cite{ukf-msckf,consistent1,c2,consistent3}, will be used to formulate the problem.
\vspace{-2mm}

\begin{figure}[t!]
    \centering
    \setlength{\abovecaptionskip}{0cm}
    \includegraphics[width=0.8\linewidth]{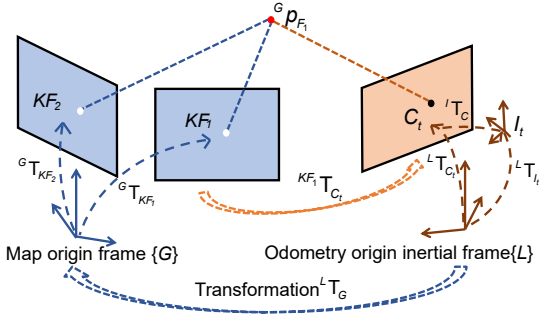}
     \caption{The different reference frames}
     \label{fig:coordinate}
     \vspace{-0.6cm}
\end{figure}

\subsection{Map-based visual-inertial localization problem}\label{sec: problem statement}
There are five kinds of reference frames in the map-based visual-inertial localization problem as shown in Fig. \ref{fig:coordinate}, i.e., $L$, $I_t$, $C_t$, $G$, and $KF$, where $L$, $I_t$, and $C_t$ are local VIO-related frames while $G$ and $KF$ are global map-related frames. The frames with the subscript $t$ mean they changes with time, whereas the frames without the subscript $t$ are fixed.

For the local VIO, it estimates the pose of current IMU body frame $I_t$ or camera frame $C_t$ in the fixed local inertial frame $L$ (normally, the frame $I_0$). The IMU outputs linear acceleration and angular velocity in ${I_t}$, and the camera observes features in ${C_t}$.

For the pre-built map, which can be built by visual SLAM or SFM algorithms, it contains the fixed known poses of map keyframes $\{KF_i\}$ and positions of map features $\{F_j\}$ in the fixed global map frame $G$.

It is worth noting that $L$ is an inertial frame where the IMU is used to propagate the state $^{L}\mathbf{T}_{I_t}$\footnote{In this paper, we use $\mathbf{T}$ to represent a pose or a transformation. $^{L}\mathbf{T}_{I_t}$ means the pose of frame $I_t$ in frame $L$, or transforming a state from frame $I_t$ to frame $L$.}. If we use the solution in \cite{gpsvio} to perform map-based localization, $G$ should also be an inertial (gravity-aligned) frame, such that IMU measurements can be utilized to propagate $^{G}\mathbf{T}_{I_t}$ in $G$. However, in practice, $G$ may not be an inertial frame. This encourages us to formulate the problem in a more general way by introducing the so-called \textit{augmented system}, which maintains the local pose $^{L}\mathbf{T}_{I_t}$ and the \textit{augmented variable} $^{L}\mathbf{T}_{G}$ simultaneously. In this way, without requiring $G$ to be an inertial frame, we can obtain $^{G}\mathbf{T}_{I_t}$ through $^{L}\mathbf{T}_{G}$ and $^{L}\mathbf{T}_{I_t}$. Specifically, when there are matches between the current view and the features in the pre-built map, the current view measurements of these features will bridge the frames ${G}$, ${KF}$, ${L}$ and ${C_t}$. With these measurements, we can not only estimate the \textit{augmented variable} $^{L}\mathbf{T}_{G}$, but also update the local pose of current body frame $^{L}\mathbf{T}_{I_t}$ to reduce the drift of the local VIO.


Therefore, such \textit{augmented system} is required to \textit{consistenly} and \textit{efficiently} estimate $^{L}\mathbf{T}_{G}$ and $^{L}\mathbf{T}_{I_t}$, given a sequence of IMU measurements in frame $I_t$ and images in frame $C_t$, and pre-built map information consisting of keyframes $\{KF_i\}$ and features $\{F_j\}$ in frame $G$. 

\vspace{-2mm}

\section{Invariant EKF for perfect map based visual-inertial localization}
\label{sec:IEKF perfect}
In this section, we assume the map components are perfect without uncertainty, i.e. \textit{perfect map}. With this assumption, we propose an invariant EKF maintaining the \textit{augmented variable} for the \textit{perfect map}. First, a new Lie group with its algebra is introduced to construct the kinematics of the \textit{augmented system}. Then, the state of the system, propagating procedure, and updating procedure is illustrated step by step. 
This kind of \textit{perfect map} based visual-inertial localization system is called the \textit{perfect augmented system}.
\vspace{-2mm}

\subsection{A novel Lie group and algebra} \label{sec:novel Lie group}

Although the invariant EKF based on $SE_{2+K}(3)$ is successfully applied to SLAM problem to achieve consistent estimator\cite{consistent1,contact,ukf_lg}, $SE_{2+K}(3)$ is not adaptable to the \textit{augmented system} mentioned above. This is because $SE_{2+K}(3)$ contains only one rotation matrix, whereas another rotation matrix in the \textit{augmented variable} also needs to be considered.

Therefore, we introduce a novel Lie group $SE_{2+K}^{M}(3)$ with its algebra $\mathfrak{se}_{2+K}^{M}(3)$.
\begin{definition}
\label{def:new Lie group}
With $SE_{2+K}(3)$ and $SO(3)$, a new Lie group denoted as $SE_{2+K}^{M}(3)$ is defined as,
\begin{equation} \label{eq:new Lie group}
    \begin{aligned}
         SE_{2+K}^{M}(3)&\triangleq\{\mathcal{T}=\begin{bmatrix}\mathcal{T}_{1}&\mathbf{0}_{(5+K+M)\times(3M)}\\
         \mathbf{0}_{(5+K+M)\times(3M)}^{\top}&\mathcal{T}_{2}\end{bmatrix},\\
         &\mathcal{T}_{1} \in SE_{2+K+M}(3), \mathcal{T}_{2}\triangleq \mathrm{diag}(\begin{array}{ccc}\mathbf{R}_{1},\!\!&\!\!\cdots\!\!&\!\!,\mathbf{R}_{M}
         \end{array}), \\ &\mathbf{R}_{i} \in SO(3), i=1,\cdots,M\},
    \end{aligned}
\end{equation}
where $\mathrm{diag}(\cdot)$ represents a diagonal block matrix.

The corresponding Lie algebra $\mathfrak{se}_{2+K}^{M}(3)$ is defined by
\begin{equation}\nonumber 
    \begin{aligned}
    \mathfrak{se}_{2+K}^{M}(3)&\triangleq \{\mathfrak{L}_{\mathfrak{se}_{2+K}^{M}}(\boldsymbol{\phi}), \boldsymbol{\phi}=\left[\boldsymbol{\theta}_{0}^{\top}\ \boldsymbol{\phi}_{1}^{\top}\ \cdots \ \boldsymbol{\phi}_{2+K+M}^{\top}\right.\\&\left.\boldsymbol{\theta}_{1}^{\top}\ \cdots\ \boldsymbol{\theta}_{M}^{\top} \right]^{\top} \in \mathbb{R}^{9+3K+3M}\},\\
    \end{aligned}
\end{equation}
\begin{equation}\label{eq:new Lie algebra}
     \begin{aligned}
    &\mathfrak{L}_{\mathfrak{se}_{2+K}^{M}}(\boldsymbol{\phi})= \boldsymbol{\phi}^{\curlywedge}\triangleq\\ &\left[\begin{array}{c;{2pt/2pt}c;{2pt/2pt}c}
    (\boldsymbol{\theta}_{0})_{\times}&\boldsymbol{\phi}_{1}\ \cdots \ \boldsymbol{\phi}_{2+K+M}&\mathbf{0}_{3\times3M}\\
    \hdashline
    \mathbf{0}_{(2+K+M)\times3}&\mathbf{0}_{2+K+M}& \mathbf{0}_{(2+K+M)\times(3M)}\\
    \hdashline
     \mathbf{0}_{3M\times3}&\mathbf{0}_{3M\times(2+K+M)}&\boldsymbol{\Theta}\\
    \end{array}\right],\\
    &\qquad \qquad \qquad \boldsymbol{\Theta}\triangleq \mathrm{diag}(\begin{array}{ccc}(\boldsymbol{\theta}_{1})_{\times}
    &,\cdots,&(\boldsymbol{\theta}_{M})_{\times}\end{array}).
    \end{aligned}
\end{equation}
\end{definition}

It is clear that $SE_{2+K}^{M}(3)$ satisfies the properties in (\ref{eq:group property}). We can regard $SE_{2+K}^{M}(3)$ as the combination of one $SE_{2+K}(3)$ and $M$ $SO(3)\times \mathbb{R}^{3}$, and we insert the $M$ $\mathbb{R}^{3}$ into $SE_{2+K}(3)$ to get $SE_{2+K+M}(3)$ while the $M$ $SO(3)$ are arranged diagonally. As presented in the following part, $SE_{2+K}^{1}(3)$ can be used for the \textit{augmented system}.

\vspace{-2mm}

\subsection{State for the \textit{perfect augmented system}}\label{sec:state prefect}
 We define the state of the \textit{augmented system} at timestamp $t$ as $\mathbf{X}_{t} \in SE_{2+1}^{1}(3) \times \mathbb{R}^{6}$:
\begin{equation} \label{eq:state}
\begin{aligned}
     \mathbf{X}_{t}&=(\mathbf{X}_{A_t},\mathbf{B}_{t}),\;\;\;\;    \mathbf{B}_{t}=\begin{bmatrix}\mathbf{b}_{g_t}^{\top}&\mathbf{b}_{a_t}^{\top}\end{bmatrix}^{\top} \in \mathbb{R}^{6},\\
     \mathbf{X}_{A_t}&=\left[\begin{matrix}^{L}\mathbf{R}_{I_t}&^{L}\mathbf{v}_{I_t}&^{L}\mathbf{p}_{I_t}&^{L}\mathbf{p}_{f_t}&^{L}\mathbf{p}_{G_t}&\mathbf{0}_{3}\\\mathbf{0}_{1\times3}&1&0&0&0&\mathbf{0}_{3}\\\mathbf{0}_{1\times3}&0&1&0&0&\mathbf{0}_{3}\\\mathbf{0}_{1\times3}&0&0&1&0&\mathbf{0}_{3}\\
    \mathbf{0}_{1\times3}&0&0&0&1&\mathbf{0}_{3}\\ \mathbf{0}_{3}&\mathbf{0}_{3\times1}&\mathbf{0}_{3\times1}&\mathbf{0}_{3\times1}&\mathbf{0}_{3\times1}&^{L}\mathbf{R}_{G_t}\end{matrix}\right],
\end{aligned}
\end{equation}
where ${}^{L}\mathbf{R}_{I}$ represents the rotation of the body frame $I$ in $L$, $^{L}\mathbf{v}_{I}$ and $^{L}\mathbf{p}_{I}$ are the velocity and the position of the body in $L$, respectively, $^{L}\mathbf{p}_{f}$ is the position of a (local) feature in $L$, $^{L}\mathbf{p}_{G}$ and $^{L}\mathbf{R}_{G}$ are the translation and the rotation parts of the relative transformation between $G$ and $L$ (\textit{augmented variable}), and $\mathbf{b}_{g}$ and $\mathbf{b}_{a}$ represent the gyroscope and accelerometer bias. Note that we only employ one local feature to formulate the state for simplicity. It can be easily extended to multiple features (say $K$ features), and in this case, $\mathbf{X}_{A_t}$ will belong to $SE_{2+K}^{1}(3)$.

\textbf{Error definition:} With the state defined above, we formulate the error state as
\begin{equation}\label{eq:error}
    \mathbf{e}_t=(\hat{\mathbf{X}}_{A_t}\mathbf{X}_{A_t}^{-1},\hat{\mathbf{B}}_{t}-\mathbf{B}_{t}) \triangleq (\boldsymbol{\eta}_{A_t},\boldsymbol{\xi}_{B_t}).
\end{equation}
Note that $\boldsymbol{\eta}_{A_t} \in SE_{2+1}^{1}(3)$. Applying (\ref{eq:exp}) and (\ref{eq:new Lie algebra}) to $\boldsymbol{\eta}_{A_t}$, the following approximation holds up to the first order:
\begin{equation}\label{eq:approximate_error}
    \boldsymbol{\eta}_{A_t} = \mathrm{exp_{m}}(\mathfrak{L}_{\mathfrak{se}_{2+1}^{1}}(\boldsymbol{\xi}_{A_t})) \approx \mathbf{I}_{10} + \mathfrak{L}_{\mathfrak{se}_{2+1}^{1}}(\boldsymbol{\xi}_{A_t}),
\end{equation}
where
\begin{equation}
\boldsymbol{\xi}_{A_t}\triangleq \begin{bmatrix}\boldsymbol{\xi}_{\boldsymbol{\theta}_{LI_t}}^{\top}&\boldsymbol{\xi}_{\mathbf{v}_{LI_t}}^{\top}&\boldsymbol{\xi}_{\mathbf{p}_{LI_t}}^{\top}&\boldsymbol{\xi}_{\mathbf{p}_{Lf_t}}^{\top}&\boldsymbol{\xi}_{\mathbf{p}_{LG_t}}^{\top}&\boldsymbol{\xi}_{\boldsymbol{\theta}_{LG_t}}^{\top}\end{bmatrix}^{\top},    
\end{equation}
and
\begin{equation}\label{eq:nonlinear_error}
    \begin{aligned}
        \boldsymbol{\xi}_{\boldsymbol{\theta}_{LI_t}}&=\tilde{\boldsymbol{\theta}}_{LI_t},\\
        \boldsymbol{\xi}_{\mathbf{v}_{LI_t}}&={}^{L}\hat{\mathbf{v}}_{I_t}-(\mathbf{I}_{3}+(\tilde{\boldsymbol{\theta}}_{LI_t})_{\times}){}^{L}\mathbf{v}_{I_t},\\
         \boldsymbol{\xi}_{\mathbf{p}_{LI_t}}&={}^{L}\hat{\mathbf{p}}_{I_t}-(\mathbf{I}_{3}+(\tilde{\boldsymbol{\theta}}_{LI_t})_{\times}){}^{L}\mathbf{p}_{I_t},\\
         \boldsymbol{\xi}_{\mathbf{p}_{Lf_t}}&={}^{L}\hat{\mathbf{p}}_{f_t}-(\mathbf{I}_{3}+(\tilde{\boldsymbol{\theta}}_{LI_t})_{\times}){}^{L}\mathbf{p}_{f_t},\\
         \boldsymbol{\xi}_{\mathbf{p}_{LG_t}}&={}^{L}\hat{\mathbf{p}}_{G_t}-(\mathbf{I}_{3}+(\tilde{\boldsymbol{\theta}}_{LI_t})_{\times}){}^{L}\mathbf{p}_{G_t},\\
         \boldsymbol{\xi}_{\boldsymbol{\theta}_{LG_t}}&=\tilde{\boldsymbol{\theta}}_{LG_t},\\
         \boldsymbol{\xi}_{B_t}&=\begin{bmatrix}(\hat{\mathbf{b}}_{g_t}-\mathbf{b}_{g_t})^{\top}&(\hat{\mathbf{b}}_{a_t}-\mathbf{b}_{a_t})^{\top}\end{bmatrix}^{\top}.
    \end{aligned}
\end{equation}
In the above equations, $\tilde{\boldsymbol{\theta}}$ represents the error of the rotation matrix, which is defined as $\tilde{\boldsymbol{\theta}}=\mathrm{log}({\hat{\mathbf{R}}\mathbf{R}^{-1}})$, and $\boldsymbol{\eta}_{R}\triangleq{}^{L}\hat{\mathbf{R}}_{I}{}^{L}{\mathbf{R}}_{I}^{-1}= \mathbf{I}_{3}+(\tilde{\boldsymbol{\theta}}_{LI})_{\times}$ up to the first order. $\boldsymbol{\epsilon}_{t}\triangleq\begin{bmatrix}\boldsymbol{\xi}_{A_t}^{\top}&\boldsymbol{\xi}_{B_t}^{\top}\end{bmatrix}^{\top}$ defines the \textit{right invariant error} which is used to formulate the error propagation and update equations.
\vspace{-2mm}

\subsection{Propagation for the \textit{perfect augmented system}}\label{sec:prop perfect}
With the state defined by (\ref{eq:state}), the system kinematics can be given as:
\begin{equation} \label{eq:kinematics}
    \left\{\begin{aligned} ^{L}\dot{\mathbf{R}}_{I_t}&=^{L}\mathbf{R}_{I_t}(\boldsymbol{\omega}_{t}-\mathbf{b}_{g_t}-\mathbf{w}_{g_t})_{\times}\\
    ^{L}\dot{\mathbf{v}}_{I_t}&=^{L}\mathbf{R}_{I_t}(\mathbf{a}_{t}-\mathbf{b}_{a_t}-\mathbf{w}_{a_t})+\mathbf{g}\\
    ^{L}\dot{\mathbf{p}}_{I_t}&=^{L}\mathbf{v}_{I_t}\;\;\;\;^{L}\dot{\mathbf{p}}_{f_t}=\mathbf{0}_{3\times1}\\
    ^{L}\dot{\mathbf{p}}_{G_t}&=\mathbf{0}_{3\times1}\;\;^{L}\dot{\mathbf{R}}_{G_t}=\mathbf{0}_{3}\\
    \dot{\mathbf{b}}_{g_t}&=\mathbf{w}_{bg_t}\;\;\;\;\;\;\dot{\mathbf{b}}_{a_t}=\mathbf{w}_{ba_t}\\
    \end{aligned}
    \right.
\end{equation}
where $\boldsymbol{\omega}$ and $\mathbf{a}$ are the measurements of angular velocity and linear acceleration from the IMU, $\mathbf{w}_{g}$ and $\mathbf{w}_{a}$ are the IMU measurement noises, $\mathbf{w}_{bg}$ and $\mathbf{w}_{ba}$ are the Gaussian random walk noises, and $\mathbf{g}=\begin{bmatrix}
0&0&-9.8
\end{bmatrix}^{\top}$ is the gravitational acceleration in the local inertial frame $L$.

Different from the standard EKF error propagation function, we need to formulate the error propagation function with the \textit{right invariant error} defined by (\ref{eq:nonlinear_error}), which leads to
\begin{equation} \label{eq:linearized_propagation}
\begin{aligned}
     \frac{d}{dt}\boldsymbol{\epsilon}&\!\!=\!\!\frac{d}{dt}\begin{bmatrix}
    \boldsymbol{\xi}_{A_t}\\\boldsymbol{\xi}_{B_t}\end{bmatrix}
    \!\!\triangleq\!\!
    \left[\begin{matrix}\begin{array}{c;{2pt/2pt}c}
    \mathbf{A}_{\mathbf{X}_{A_t}}\!\!&\!\!\mathbf{A}_{\mathbf{X}_{A}\mathbf{B}_t}\\
    \hdashline
    \mathbf{0}_{6\times18}\!\!&\!\!\mathbf{A}_{\mathbf{B}_t}\end{array}
    \end{matrix}\right]\!\!\!
    \begin{bmatrix}
    \boldsymbol{\xi}_{A_t}\\\boldsymbol{\xi}_{B_t}
    \end{bmatrix}
    \!\!+\!\!
    \begin{bmatrix}
    Ad_{\hat{\mathbf{X}}_{A_t}}\!\!\!&\!\!\!\mathbf{0}_{18\times6}\\
    \mathbf{0}_{6\times18}\!\!\!&\!\!\!\mathbf{I}_{6\times6}
    \end{bmatrix}\!\! \mathbf{w}_t\\
    &\!\!\triangleq\!\!
     \mathbf{A}_{t}\boldsymbol{\epsilon}_{t}+\mathbf{W}_{t}\mathbf{w}_{t},
\end{aligned}
\end{equation}
where
\begin{equation}\label{eq:At}
    \begin{aligned}
    &\mathbf{A}_{t}=\\&
    \left[\begin{matrix}\begin{array}{cccccc;{2pt/2pt}cc}\mathbf{0}&\mathbf{0}&\mathbf{0}&\mathbf{0}&\mathbf{0}&\mathbf{0}&-^{L}\hat{\mathbf{R}}_{I_t}&\mathbf{0}\\
    (\mathbf{g})_{\times}&\mathbf{0}&\mathbf{0}&\mathbf{0}&\mathbf{0}&\mathbf{0}&-(^{L}\hat{\mathbf{v}}_{I_t})_{\times}{}^{L}\hat{\mathbf{R}}_{I_t}&-^{L}\hat{\mathbf{R}}_{I_t}\\
    \mathbf{0}&\mathbf{I}_3&\mathbf{0}&\mathbf{0}&\mathbf{0}&\mathbf{0}&-(^{L}\hat{\mathbf{p}}_{I_t})_{\times}{}^{L}\hat{\mathbf{R}}_{I_t}&\mathbf{0}\\
    \mathbf{0}&\mathbf{0}&\mathbf{0}&\mathbf{0}&\mathbf{0}&\mathbf{0}&-(^{L}\hat{\mathbf{p}}_{f_t})_{\times}{}^{L}\hat{\mathbf{R}}_{I_t}&\mathbf{0}\\
    \mathbf{0}&\mathbf{0}&\mathbf{0}&\mathbf{0}&\mathbf{0}&\mathbf{0}&-(^{L}\hat{\mathbf{p}}_{G_t})_{\times}{}^{L}\hat{\mathbf{R}}_{I_t}&\mathbf{0}\\
    \mathbf{0}&\mathbf{0}&\mathbf{0}&\mathbf{0}&\mathbf{0}&\mathbf{0}&\mathbf{0}&\mathbf{0}\\
    \hdashline
    \mathbf{0}&\mathbf{0}&\mathbf{0}&\mathbf{0}&\mathbf{0}&\mathbf{0}&\mathbf{0}&\mathbf{0}\\
    \mathbf{0}&\mathbf{0}&\mathbf{0}&\mathbf{0}&\mathbf{0}&\mathbf{0}&\mathbf{0}&\mathbf{0}\end{array}\end{matrix}\right],  
    \end{aligned}
\end{equation}
$Ad_{\hat{\mathbf{X}}_{A_t}}$ is the adjoint operator of $SE_{2+1}^{1}(3)$ applied to $\hat{\mathbf{X}}_{A_t}$, which is given by:
\begin{equation}
    \begin{bmatrix}
    ^{L}\hat{\mathbf{R}}_{I_t}&\mathbf{0}&\mathbf{0}&\mathbf{0}&\mathbf{0}&\mathbf{0}\\
    (^{L}\hat{\mathbf{v}}_{I_t})_{\times}{}^{L}\hat{\mathbf{R}}_{I_t}&^{L}\hat{\mathbf{R}}_{I_t}&\mathbf{0}&\mathbf{0}&\mathbf{0}&\mathbf{0}\\
    (^{L}\hat{\mathbf{p}}_{I_t})_{\times}{}^{L}\hat{\mathbf{R}}_{I_t}&\mathbf{0}&^{L}\hat{\mathbf{R}}_{I_t}&\mathbf{0}&\mathbf{0}&\mathbf{0}\\
     (^{L}\hat{\mathbf{p}}_{f_t})_{\times}{}^{L}\hat{\mathbf{R}}_{I_t}&\mathbf{0}&\mathbf{0}&^{L}\hat{\mathbf{R}}_{I_t}&\mathbf{0}&\mathbf{0}\\
     (^{L}\hat{\mathbf{p}}_{G_t})_{\times}{}^{L}\hat{\mathbf{R}}_{I_t}&\mathbf{0}&\mathbf{0}&\mathbf{0}&^{L}\hat{\mathbf{R}}_{I_t}&\mathbf{0}\\
     \mathbf{0}&\mathbf{0}&\mathbf{0}&\mathbf{0}&\mathbf{0}&^{L}\hat{\mathbf{R}}_{G_t}\\
    \end{bmatrix},
\end{equation}
and $\mathbf{w}_{t}\triangleq \begin{bmatrix}
\mathbf{w}_{g_t}^{\top}&\mathbf{w}_{a_t}^{\top}&\mathbf{0}_{12\times1}^{\top}&\mathbf{w}_{bg_t}^{\top}&\mathbf{w}_{ba_t}^{\top}
\end{bmatrix}^{\top}$. All the $\mathbf{0}$ in above representations are with the order of $3$.
The detailed derivation is given in \textbf{Appendix A} of the supplementary material \cite{supplementary}.

Actually, from (\ref{eq:linearized_propagation}) and (\ref{eq:At}), we can find that $\mathbf{A}_{\mathbf{X}_{A_t}}$ consists of constants. This is because the right invariant error defined by (\ref{eq:nonlinear_error}) can be propagated independently of the system state, according to \textbf{Theorem 1} in \cite{c2}.

Combining $\mathbf{X}_{A}$ with IMU bias parameters $\mathbf{B}$, we can design an ``imperfect invariant EKF" as mentioned in \cite{D,contact}. The linearized error kinematics of (\ref{eq:kinematics}) is given by (\ref{eq:linearized_propagation}). The covariance of the right-invariant error is computed by solving the Riccati equation
\begin{equation}
    \label{eq:Riccati}
    \frac{d}{dt}\mathbf{P}_{t}=\mathbf{A}_{t}\mathbf{P}_{t}+\mathbf{P}_{t}\mathbf{A}_{t}^{\top}+\hat{\mathbf{Q}}_{t},
\end{equation}
where $\hat{\mathbf{Q}}_{t}\triangleq \mathbf{W}_{t}\text{Cov}(\mathbf{w}_{t})\mathbf{W}_{t}^{\top}$, and $\text{Cov}(\mathbf{w}_{t})$ is the covariance matrix of the noise vector $\mathbf{w}_{t}$.

\subsection{Updating for the \textit{perfect augmented system}}\label{update perfect}
Updating procedure is closely related with observation functions. As indicated by \cite{consistent1}, for the general observation function $\mathbf{y}_{t}=h(\mathbf{X}_{A_t},\mathbf{B}_{t})+\boldsymbol{\gamma}_t$, where $\mathbf{y}_{t}$ is the observation of the visual sensor and  $\boldsymbol{\gamma}_t$ is the zero-mean Gaussian noise with the covariance $\mathbf{V}_t$. The error of observation (innovation term) can be written as 
\begin{equation}
    \begin{aligned}
         \mathbf{r}_{t} &\triangleq \mathbf{y}_{t}- h(\hat{\mathbf{X}}_{A_t},\hat{\mathbf{B}}_{t})\\
         &=h(\mathbf{X}_{A_t},\mathbf{B}_{t})-h(\hat{\mathbf{X}}_{A_t},\hat{\mathbf{B}}_{t})+\boldsymbol{\gamma}_t\\
         &=h(\mathrm{exp}(-\boldsymbol{\xi}_{A_t})\hat{\mathbf{X}}_{A_t},\hat{\mathbf{B}}_{t}-\boldsymbol{\xi}_{B_t})-h(\hat{\mathbf{X}}_{A_t},\hat{\mathbf{B}}_{t})+\boldsymbol{\gamma}_t.
    \end{aligned}
\end{equation}
 Toward this function, we can perform the first-order Taylor expansion. However, unlike the standard Taylor expansion represented with the standard vector error, we need to formulate the Taylor expansion with the \textit{right invariant error} $\boldsymbol{\epsilon}\triangleq \begin{bmatrix}\boldsymbol{\xi}_{A_t}^{\top}&\boldsymbol{\xi}_{B_t}^{\top}\end{bmatrix}^{\top}$ as
\begin{equation}
    \label{eq:general_linear_ob_function}
    \mathbf{r}_{t} =
     \mathbf{y}_{t}-h(\hat{\mathbf{X}}_{A_t},\hat{\mathbf{B}}_{t})=-\mathbf{H}_t\boldsymbol{\epsilon}_{t}+\boldsymbol{\gamma}_t+O(\Vert \boldsymbol{\epsilon}_t \Vert^{2}).
\end{equation}

The observation function in this paper can be classified as local feature based observation function and map feature based observation function.

\textbf{Local feature based observation function:} Suppose we have a feature in the local odometry frame $L$, $^{L}\mathbf{p}_{f}$, which is observed by the robot at the current timestamp $t$. For simplicity, we assume the extrinsic between the camera and the IMU is the identity matrix, so that the observation function can be given as
\begin{equation}\label{eq:ob_local}
 \mathbf{y}_{L_t}=h(^{L}\mathbf{R}_{I_t}^{\top}(^{L}\mathbf{p}_{f_t}-^{L}\mathbf{p}_{I_t}))+\boldsymbol{\gamma}_{L_t},
\end{equation}
where $h(\cdot)$ is the projection function of the camera. The subscript $L$ of $\mathbf{y}_{L_t}$ and $\boldsymbol{\gamma}_{L_t}$ indicates the local observation. The Jacobian matrix of (\ref{eq:ob_local}) is given by
\begin{equation}\label{eq:local_H}
    \mathbf{H}_{L_t}=-\nabla h |_{\hat{\mathfrak{q}}} {}^{L}{\hat{\mathbf{R}}}_{I_t}^{\top} \begin{bmatrix}
    \mathbf{0}_3&\mathbf{0}_3&\mathbf{I}_3&-\mathbf{I}_3&\mathbf{0}_3&\mathbf{0}_3&\mathbf{0}_3&\mathbf{0}_3
    \end{bmatrix},
\end{equation}
where $\mathfrak{q} \triangleq {}^{L}\mathbf{R}_{I_t}^{\top}(^{L}\mathbf{p}_{f_t}-^{L}\mathbf{p}_{I_t})$, $\nabla h \triangleq \frac{dh}{d\mathfrak{q}} $. The detailed derivation is in \textbf{Appendix B} of our supplementary material\cite{supplementary}.

\textbf{Map feature based observation function:} Suppose we have a feature provided by the map, $^{G}\mathbf{p}_{F}$, which is observed by the robot again at the current timestamp $t$ (cf. Fig. \ref{fig:coordinate}). Then, with the \textit{augmented variable} $^{L}\mathbf{T}_{G}$ as a bridge, the observation function can be formulated as
\begin{equation} \label{eq:ob_global}
    \mathbf{y}_{G_t}=h(^{L}\mathbf{R}_{I_t}^{\top}(^{L}\mathbf{R}_{G_t}{}^{G}\mathbf{p}_{F}+^{L}\mathbf{p}_{G_t}-^{L}\mathbf{p}_{I_t}))+\boldsymbol{\gamma}_{G_t}.
\end{equation}
The subscript $G$ of $\mathbf{y}_{G_t}$ and $\boldsymbol{\gamma}_{G_t}$ indicates the map-based (global) observation. The Jacobian matrix of (\ref{eq:ob_global}) is given by
\begin{equation}\label{eq:global_H}
\begin{aligned}
     \mathbf{H}_{G_t}=-\nabla h^{\prime}|_{\hat{\mathfrak{q}}^{\prime}}{}^{L}\hat{\mathbf{R}}_{I_t}^{\top}
    &\left[\begin{matrix}
    -(^{L}\hat{\mathbf{R}}_{G_t}{}^{G}\mathbf{p}_{F})_{\times}&\mathbf{0}_3&\mathbf{I}_3&\mathbf{0}_3&-\mathbf{I}_3\end{matrix}\right.\\
    &\left.\begin{matrix}(^{L}\hat{\mathbf{R}}_{G_t}{}^{G}\mathbf{p}_{F})_{\times}&\mathbf{0}_3&\mathbf{0}_3
    \end{matrix}\right],
\end{aligned}
\end{equation}
where $\mathfrak{q}^{\prime} \triangleq {}^{L}\mathbf{R}_{I_t}^{\top}(^{L}\mathbf{R}_{G_t}{}^{G}\mathbf{p}_{F}+^{L}\mathbf{p}_{G_t}-^{L}\mathbf{p}_{I_t})$, $\nabla h^{\prime} \triangleq \frac{dh}{d\mathfrak{q}^{\prime}} $. The detailed derivation is in \textbf{Appendix B} of our supplementary material\cite{supplementary}.

With (\ref{eq:linearized_propagation}), (\ref{eq:ob_local}) and (\ref{eq:ob_global}), the state of the \textit{augmented system} can be propagated and updated with the procedure like EKF. The whole process of invariant EKF for the \textit{perfect augmented system} is summarized in \textbf{Algorithm \ref{alg:invariant_EKF}}. 

\renewcommand{\algorithmicrequire}{\textbf{Input:}}
\renewcommand{\algorithmicensure}{\textbf{Output:}}

\begin{algorithm}[t] \small
\caption{Invariant EKF for the \textit{perfect augmented system}}
\label{alg:invariant_EKF}
\begin{algorithmic}[1] 
\REQUIRE
the posterior of the state at timestamp $t-1$, $\hat{\mathbf{X}}_{t-1}$, the covariance of the state $\mathbf{P}_{t-1}$, the IMU measurements $\boldsymbol{\omega}_{t}$, $\mathbf{a}_{t}$.
\ENSURE
$\hat{\mathbf{X}}_{t}$, $\mathbf{P}_{t}$.
\STATE  Propagate the state with (\ref{eq:kinematics}) to get the prediction $\hat{\mathbf{X}}_{t|t-1}$, propagate the covariance with (\ref{eq:linearized_propagation}) and (\ref{eq:Riccati}) to get $\mathbf{P}_{t|t-1}$.
\IF {have tracked local features}
    \STATE compute the innovation term $\mathbf{r}_{L_t}$ and $\mathbf{H}_{L_t}$ with (\ref{eq:general_linear_ob_function}), (\ref{eq:ob_local}) and (\ref{eq:local_H}).
\ENDIF
\IF {have matched features from the map}
    \STATE compute the innovation term $\mathbf{r}_{G_t}$ and $\mathbf{H}_{G_t}$ with (\ref{eq:general_linear_ob_function}), (\ref{eq:ob_global}) and (\ref{eq:global_H}).
\ENDIF
\STATE Stack $\mathbf{r}_{L_t}$ and $\mathbf{r}_{G_t}$ as $\mathbf{r}_{t}$, $\mathbf{H}_{L_t}$ and $\mathbf{H}_{G_t}$ as $\mathbf{H}_{t}$.
\STATE Update the state and the covariance:\\
     $\mathbf{S}_t=\mathbf{H}_{t}\mathbf{P}_{t|t-1}\mathbf{H}_{t}^{\top}+\mathbf{V}_t$,\\
     $\mathbf{K}_{t}\triangleq \begin{bmatrix}\mathbf{K}_{\mathbf{X}_{A_t}}\\
     \mathbf{K}_{\mathbf{B}_t}\end{bmatrix}=\mathbf{P}_{t|t-1}\mathbf{H}^{\top}_{t}\mathbf{S}_{t}^{-1}$,\\
     $\hat{\mathbf{X}}_{t}=\begin{bmatrix}
     \hat{\mathbf{X}}_{A_{t}}\\
     \hat{\mathbf{B}}_{t}
     \end{bmatrix}=\begin{bmatrix}
     \mathrm{exp}(\mathbf{K}_{\mathbf{X}_{A_t}}\mathbf{r}_{t})\hat{\mathbf{X}}_{A_{t|t-1}}\\
     \hat{\mathbf{B}}_{t|t-1}+\mathbf{K}_{\mathbf{B}_{t}}\mathbf{r}_{t}
     \end{bmatrix}$,\\
     $\mathbf{P}_{t}=\mathbf{P}_{t|t-1}-\mathbf{K}_{t}\mathbf{H}_{t}\mathbf{P}_{t|t-1}$.
\RETURN $\hat{\mathbf{X}}_{t}$, $\mathbf{P}_{t}$.
\end{algorithmic} 
\end{algorithm}

\section{Invariant EKF for imperfect map based visual-inertial localization}
\label{sec:IEKF imperfect}
In Sec. \ref{sec:IEKF perfect}, we design an invariant EKF for the \textit{perfect augmented system} by employing a novel Lie group and its algebra. However, in practice, the pre-built map is actually imperfect and has uncertainty. Neglecting the uncertainty of the map information will lead to overconfident estimation, which encourages us to consider the uncertainty of the map information. This kind of \textit{imperfect map} based visual-inertial localization system is called the \textit{imperfect augmented system}.  

\subsection{State for the imperfect augmented system} \label{sec:state imperfect}
In order to consider the uncertainty of the map information, we need to add the map-related variables into the system state, so that the state of the \textit{imperfect augmented system} at timestamp $t$ is defined as
\begin{equation} \label{eq:imperfect state}
    \mathbf{X}_{t}^{*} = (\mathbf{X}_{A_t},\mathbf{B}_{t},\mathbf{X}_{M_t}) = (\mathbf{X}_{t},\mathbf{X}_{M_t}),
\end{equation} 
where $\mathbf{X}_{M_t}$ contains $m$ map keyframe poses $\{\mathbf{X}_{{KF_1},t},\cdots,\mathbf{X}_{{KF_m},t}\}$ and $n$ map feature positions $\{{}^{G}\mathbf{p}_{{F_1},t},\cdots,{}^{G}\mathbf{p}_{{F_n},t}\}$. For each map keyframe pose $\{\mathbf{X}_{KF_i,t}\}, i=1 \cdots m$, it consists of $\{^{G}\mathbf{R}_{{KF_i},t},^{G}\mathbf{p}_{{KF_i},t}\}$.

Following the definition of (\ref{eq:nonlinear_error}), the \textit{right-invariant error} of the map keyframe pose, $\boldsymbol{\epsilon}_{KF_i,t} \triangleq \begin{bmatrix}\boldsymbol{\xi}_{\boldsymbol{\theta}_{KF_i,t}}^{\top}&\boldsymbol{\xi}_{\mathbf{p}_{KF_i,t}}^{\top}\end{bmatrix}^{\top}$, is given as
\begin{equation}
\begin{aligned}\label{eq:error_map_kf}
   \boldsymbol{\xi}_{\boldsymbol{\theta}_{KF_i,t}} &= \tilde{\boldsymbol{\theta}}_{KF_i,t},\\
   \boldsymbol{\xi}_{\mathbf{p}_{KF_i,t}} &= ^{G}\hat{\mathbf{p}}_{KF_i,t} - (\mathbf{I}_{3} + (\tilde{\boldsymbol{\theta}}_{KF_i,t})_{\times}){}^{G}\mathbf{p}_{KF_i,t},
\end{aligned}
\end{equation}
while for the map feature position, we define the error in the Euclidean vector space instead of the Lie group space for the sake of convenience of the Jacobian derivation:
\begin{equation}\label{eq:error_map_landmark}
   \boldsymbol{\epsilon}_{F_i,t}\triangleq \boldsymbol{\xi}_{F_i,t} = {}^{G}\hat{\mathbf{p}}_{F_i,t}-{}^{G}\mathbf{p}_{F_i,t}.
\end{equation}
We define the nonlinear error of $\mathbf{X}_{t}^{*}$ as $\boldsymbol{\epsilon}_{t}^{*}\triangleq \begin{bmatrix}\boldsymbol{\epsilon}_{t}^{\top}&\cdots&\boldsymbol{\epsilon}_{KF_i,t}^{\top}&\cdots&\boldsymbol{\epsilon}_{F_j,t}^{\top}&\cdots\end{bmatrix}^{\top}$.

\subsection{Propagation for the imperfect augmented system}\label{sec:prop imperfect}
The kinematics of the $\mathbf{X}_{t}$ part is identical to (\ref{eq:kinematics}). For the $\mathbf{X}_{M_t}$ part, we assume the map-related variables should remain unchanged:
\begin{equation}
    \left\{\begin{aligned}
      ^{G}\dot{\mathbf{R}}_{KF_i,t} &= \mathbf{0}_{3}\\
      ^{G}\dot{\mathbf{p}}_{KF_i,t} &= \mathbf{0}_{3\times1}\\
      ^{G}\dot{\mathbf{p}}_{F_i,t} &= \mathbf{0}_{3\times1}
    \end{aligned}
    \right. 
\end{equation}

\subsection{Updating for the imperfect augmented system}\label{sec:update imperfect}
Since the local feature based observation function is not related with the map information, in this part, we only introduce the map feature based observation function. As shown in Fig. \ref{fig:coordinate}, supposing a map feature $^{G}\mathbf{p}_{F_j,t}$ is observed by the current frame $C_t$ and the map keyframe $KF_i$, we have the following observation functions:
\begin{equation}\label{eq:imperfect global_ob_c}
    \mathbf{y}_{G_t}^{C_j}=h(^{L}\mathbf{R}_{I_t}^{\top}(^{L}\mathbf{R}_{G_t}{}^{G}\mathbf{p}_{F_j,t}+^{L}\mathbf{p}_{G_t}-{}^{L}\mathbf{p}_{I_t}))+\boldsymbol{\gamma}_{G_{t}},
\end{equation}
\begin{equation}\label{eq:imperfect global_ob_kf}
    \mathbf{y}_{G_t}^{KF_ij}=h(^{G}\mathbf{R}_{KF_i,t}^{\top}({}^{G}\mathbf{p}_{F_j,t}-{^{G}\mathbf{p}_{KF_i,t}}))+\boldsymbol{\gamma}_{G_{i}},
\end{equation}
where the superscript ${C_j}$ means the observation of $C_t$ toward the map feature $F_j$, the superscript ${KF_ij}$ means the observation of the map keyframe $KF_i$ toward the map feature $F_j$, and $\boldsymbol{\gamma}_{G_t}$ and $\boldsymbol{\gamma}_{G_i}$ represent the observation noises of $C_t$ and $KF_i$, respectively.

Note that (\ref{eq:imperfect global_ob_c}) is almost identical to (\ref{eq:ob_global}), except that the feature ${}^{G}\mathbf{p}_{F_j,t}$ is variable whereas the feature ${}^{G}\mathbf{p}_{F}$ in (\ref{eq:ob_global}) is constant.

The Jacobian matrix of (\ref{eq:imperfect global_ob_c}) is given by
\begin{equation}\label{eq:imperfect jacob c}
\begin{aligned}
     \mathbf{H}_{G_t}^{C_j}&=-\nabla h^{\prime}|_{\hat{\mathfrak{q}}^{\prime}}{}^{L}\hat{\mathbf{R}}_{I_t}^{\top}
    \left[\begin{matrix}
    -(^{L}\hat{\mathbf{R}}_{G_t}{}^{G}\hat{\mathbf{p}}_{F_j,t})_{\times}&\mathbf{0}_3&\mathbf{I}_3&\mathbf{0}_3&-\mathbf{I}_3\end{matrix}\right.\\
    &\left.\begin{matrix}(^{L}\hat{\mathbf{R}}_{G_t}{}^{G}\hat{\mathbf{p}}_{F_j,t})_{\times}&\mathbf{0}_3&\mathbf{0}_3|
\cdots \mathbf{0}_3&\mathbf{0}_3\cdots -^{L}\hat{\mathbf{R}}_{G_t}\cdots
    \end{matrix}\right],
\end{aligned}
\end{equation}
where the elements before $|$ are identical to the elements of (\ref{eq:global_H}), and the elements after $|$ are the Jacobians for the map-related variables. Specifically, the Jacobian of  $\mathbf{y}_{G_t}^{C_j}$ with respect to $KF_i$ is $\begin{bmatrix}
\mathbf{0}_3&\mathbf{0}_3
\end{bmatrix}$, the Jacobian of $\mathbf{y}_{G_t}^{C_j}$ with respect to $F_j$ is $-^{L}\hat{\mathbf{R}}_{G_t}$. The other omitted Jacobians are zeros. The detailed derivation is in \textbf{Appendix B} of the supplementary material\cite{supplementary}.

The Jacobian matrix of (\ref{eq:imperfect global_ob_kf}) is given by
\begin{equation}\label{eq:imperfect jacob kf}
\begin{aligned}
     \mathbf{H}_{G_t}^{KF_ij}&=-\nabla h^{*}|_{\hat{\mathfrak{q}}^{*}}{}^{G}\hat{\mathbf{R}}_{KF_i,t}^{\top}
    \left[\begin{matrix}
    \mathbf{0}_3&\mathbf{0}_3&\mathbf{0}_3&\mathbf{0}_3&\mathbf{0}_3\end{matrix}\right.\\
    &\left.\begin{matrix}\mathbf{0}_3&\mathbf{0}_3&\mathbf{0}_3|
\cdots -({}^{G}\hat{\mathbf{p}}_{F_j,t})_{\times}&\mathbf{I}_{3}\cdots -\mathbf{I}_{3}\cdots
    \end{matrix}\right],
\end{aligned}   
\end{equation}
where $\mathfrak{q}^{*} \triangleq {}^{G}\mathbf{R}_{KF_i,t}^{\top}({}^{G}\mathbf{p}_{F_j,t}-{}^{G}\mathbf{p}_{KF_i,t})$, $\nabla h^{*} \triangleq \frac{dh}{d\mathfrak{q}^{*}}$.
Specifically, the Jacobian of $\mathbf{y}_{G_t}^{KF_ij}$ with respect to $KF_i$ is $\left[-({}^{G}\hat{\mathbf{p}}_{F_j,t})_{\times},\mathbf{I}_{3}\right]$, and the Jacobian of $\mathbf{y}_{G_t}^{KF_ij}$ with respect to $F_j$ is $-\mathbf{I}_{3}$. The other omitted Jacobians are zeros. The detailed derivation is in \textbf{Appendix B} of the supplementary material \cite{supplementary}.

\section{Observability analysis of map-based visual-inertial localization system}
In this section, we theoretically analyze the observability properties of the \textit{perfect augmented system} and the \textit{imperfect augmented system} from the perspectives of standard EKF and invariant EKF. Besides, intuitive explanations are also given.   


To analyze the unobservable subspace of the \textit{augmented system}, we need to compute the observability matrix of the \textit{augmented system}, and the unobservable subspace is spanned by the right null space of the observability matrix.

For simplicity, the analysis below neglects the IMU bias, assumes the extrinsic between the camera and the IMU is identity matrix, and sets the number of the local features, the map features and the map keyframes to be one, while the results can be extended to general cases.

\subsection{Observability of the perfect augmented system with standard EKF}\label{sec:ob_ekf}
For standard EKF, the state is represented in the Euclidean vector space as
\begin{equation}\label{eq:simple_state}
    \mathbf{x}_{s_t}=\begin{bmatrix}
    ^{L}\mathbf{q}_{I_t}^{\top}&^{L}\mathbf{v}_{I_t}^{\top}&^{L}\mathbf{p}_{I_t}^{\top}&^{L}\mathbf{p}_{f_t}^{\top}&^{L}\mathbf{q}_{G_t}^{\top}&^{L}\mathbf{p}_{G_t}^{\top}
    \end{bmatrix}^{\top},
\end{equation}
where $^{L}\mathbf{q}_{I_t}$ is the quaternion of $^{L}\mathbf{R}_{I_t}$,  and $^{L}\mathbf{q}_{G_t}$ is the quaternion of $^{L}\mathbf{R}_{G_t}$.

Denoting the state transition matrix of the \textit{perfect augmented system} from timestamp ${t-1}$ to $t$ as ${}^{st}\boldsymbol{\Phi}_{{t}|{t-1}}$, we have
\begin{equation}\label{ofvio}
{}^{st}\boldsymbol{\Phi}_{t|{0}} \triangleq {}^{st}\boldsymbol{\Phi}_{t|{t-1}}\ldots{}^{st}\boldsymbol{\Phi}_{2|1}{}^{st}\boldsymbol{\Phi}_{1|0}.
\end{equation}
Accordingly, denoting the Jacobian matrix of the local feature based observation function at timestamp $t$ as ${}^{st}\mathbf{H}_{L_t}$, and that of the map-based observation function as ${}^{st}\mathbf{H}_{G_t}$, we have the observability matrix following \cite{gpsvio} as
\begin{equation}\label{om}
{}^{st}\mathbf{M}\triangleq\begin{bmatrix}
{}^{st}\mathbf{H}_{L_0}\\
{}^{st}\mathbf{H}_{G_0}\\
{}^{st}\mathbf{H}_{L_1}{}^{st}\boldsymbol{\Phi}_{1|0}\\
{}^{st}\mathbf{H}_{G_1}{}^{st}\boldsymbol{\Phi}_{1|0}\\
\vdots\\
{}^{st}\mathbf{H}_{L_t}{}^{st}\boldsymbol{\Phi}_{t|0}\\
{}^{st}\mathbf{H}_{G_t}{}^{st}\boldsymbol{\Phi}_{t|0}\\
\end{bmatrix}
\triangleq
\begin{bmatrix}
{}^{st}\mathbf{M}_{L_0}\\
{}^{st}\mathbf{M}_{G_0}\\
{}^{st}\mathbf{M}_{L_1}\\
{}^{st}\mathbf{M}_{G_1}\\
\vdots\\
{}^{st}\mathbf{M}_{L_t}\\
{}^{st}\mathbf{M}_{G_t}\\
\end{bmatrix}.
\end{equation}
The left-superscript $st$ represents that the matrix is derived from standard EKF.

We first assume that the Jacobian matrices are evaluated at the ground truth, which is ideal but can demonstrate the theoretical implication.
\begin{lemma}
\label{lemma: ideal perfect ekf}
    (Ideal observability of \textit{perfect augmented system} with standard EKF) The right null space $^{st}\mathcal{N}_1$ of the observability matrix $^{st}\mathbf{M}$, where the Jacobian matrices are evaluated ideally, is spanned by four directions as
\begin{equation}
^{st}\mathcal{N}_1 = span\left(\begin{bmatrix}
    {}^{L}\hat{\mathbf{R}}_{I_0}^{\top}\mathbf{g}&\mathbf{0}_{3}\\
    -(^{L}\hat{\mathbf{v}}_{I_0})_{\times}\mathbf{g}&\mathbf{0}_{3}\\
    -(^{L}\hat{\mathbf{p}}_{I_0})_{\times}\mathbf{g}&\mathbf{I}_{3}\\
    -(^{L}\hat{\mathbf{p}}_{f})_{\times}\mathbf{g}&\mathbf{I}_{3}\\
    {}^{L}\mathbf{R}_{G}^{\top}\mathbf{g}&\mathbf{0}_{3}\\
    -(^{L}\mathbf{p}_{G})_{\times}\mathbf{g}&\mathbf{I}_{3}\\
    \end{bmatrix}\right).
\end{equation}
\end{lemma}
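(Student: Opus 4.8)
The plan is to verify directly that the four columns claimed to span $^{st}\mathcal{N}_1$ lie in the right null space of every block ${}^{st}\mathbf{M}_{L_k}$ and ${}^{st}\mathbf{M}_{G_k}$, and then to argue that the null space is exactly four-dimensional. First I would assemble the two ingredients that define the observability matrix: the closed-form state transition matrix ${}^{st}\boldsymbol{\Phi}_{k|0}$ for the standard-EKF error state \eqref{eq:simple_state}, and the measurement Jacobians ${}^{st}\mathbf{H}_{L_k}$, ${}^{st}\mathbf{H}_{G_k}$. The transition matrix for the VIO sub-block $(^{L}\mathbf{q}_{I},{}^{L}\mathbf{v}_{I},{}^{L}\mathbf{p}_{I})$ is the familiar one from \cite{fej,gpsvio}, with the feature $^{L}\mathbf{p}_{f}$ and the augmented variable $(^{L}\mathbf{q}_{G},{}^{L}\mathbf{p}_{G})$ static (identity blocks), so ${}^{st}\boldsymbol{\Phi}_{k|0}$ differs from the pure-VIO case only by carrying these static rows/columns along. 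The Jacobians ${}^{st}\mathbf{H}_{L_k}$ and ${}^{st}\mathbf{H}_{G_k}$ are the standard-EKF counterparts of \eqref{eq:local_H} and \eqref{eq:global_H}, obtained from the chain rule through the projection $h(\cdot)$; since $\nabla h|_{\hat{\mathfrak q}}$ and $^{L}\hat{\mathbf R}_{I_k}^{\top}$ are full-rank left factors, it suffices to show the bracketed measurement-geometry matrix annihilates ${}^{st}\boldsymbol{\Phi}_{k|0}$ times each candidate null vector.

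Next I would take each of the four candidate directions in turn. The first column (the ``$\mathbf g$-direction'', roughly a global rotation about gravity coupled with the induced shifts in $\mathbf v,\mathbf p,\mathbf p_f,\mathbf p_G$ and the compensating rotation of the $G$ frame) should reduce, after multiplying by ${}^{st}\boldsymbol{\Phi}_{k|0}$, to the same structural pattern at every $k$, so that the local-feature block sees $^{L}\mathbf p_{f}-{}^{L}\mathbf p_{I}$ rotating rigidly (hence zero innovation, exactly as in the pure-VIO gravity-rotation argument), and the map-feature block sees the combination $^{L}\mathbf R_{G}{}^{G}\mathbf p_{F}+{}^{L}\mathbf p_{G}-{}^{L}\mathbf p_{I}$ rotating rigidly as well — the extra $^{L}\mathbf R_{G}^{\top}\mathbf g$ and $-(^{L}\mathbf p_G)_\times\mathbf g$ entries are precisely what is needed so that the $G$-dependent terms in \eqref{eq:global_H} cancel. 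The remaining three columns are global translations of the local frame: they shift $^{L}\mathbf p_{I},{}^{L}\mathbf p_{f},{}^{L}\mathbf p_{G}$ together by a common $\mathbf I_3$ block while leaving orientations, velocity, and the map feature fixed; both observation functions depend only on the differences $^{L}\mathbf p_{f}-{}^{L}\mathbf p_{I}$ and $^{L}\mathbf p_{G}-{}^{L}\mathbf p_{I}$, so these differences are invariant and the innovation vanishes. One must also check these vectors are propagated to themselves by ${}^{st}\boldsymbol{\Phi}_{k|0}$ (the $\mathbf g$-column is not literally constant in time because of the velocity/position coupling, but the product ${}^{st}\mathbf H \cdot {}^{st}\boldsymbol{\Phi}\cdot(\text{column})$ is still zero — this is the computation I would actually carry out via the explicit $\boldsymbol{\Phi}$ entries).

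For exactness — that the nullity is not larger than four — I would invoke the standard observability argument: after stacking sufficiently many local-feature rows one already pins down all VIO degrees of freedom up to the usual four unobservable VIO directions (global position and global yaw, in the appropriate local-frame representation), and then stacking the map-feature rows, which tie $(^{L}\mathbf R_{G},{}^{L}\mathbf p_{G})$ to the pinned-down VIO state, removes no further freedom beyond aligning the $G$-frame consistently; a rank count on a representative finite sub-matrix (say two local observations at distinct poses plus one map observation) gives a null space of dimension exactly four, matching the span claimed. The main obstacle I anticipate is purely bookkeeping-heavy rather than conceptual: correctly writing down ${}^{st}\boldsymbol{\Phi}_{k|0}$ in the chosen error parametrization (quaternion error convention, the $(\mathbf g)_\times$ coupling into velocity and the double-integrator coupling into position) and then showing the $\mathbf g$-column survives multiplication by it in the right way — i.e., that the time-dependent pieces generated by $\boldsymbol{\Phi}_{k|0}$ are exactly matched by the $-(^{L}\hat{\mathbf v}_{I_0})_\times\mathbf g$ and $-(^{L}\hat{\mathbf p}_{I_0})_\times\mathbf g$ entries. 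This is where evaluating Jacobians at the \emph{true} state (rather than estimates) is essential, and it is the step I would write out in full detail, deferring the lengthier algebra to an appendix.
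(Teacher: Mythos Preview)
Your proposal is correct and follows essentially the same route as the paper: explicitly assemble the standard-EKF state transition matrix ${}^{st}\boldsymbol{\Phi}_{k|0}$ (VIO sub-block from \cite{fej,gpsvio} plus static identity blocks for $^{L}\mathbf{p}_{f}$ and the augmented variable) and the measurement Jacobians ${}^{st}\mathbf{H}_{L_k},{}^{st}\mathbf{H}_{G_k}$, then verify block-by-block that ${}^{st}\mathbf{H}_{\cdot,k}\,{}^{st}\boldsymbol{\Phi}_{k|0}$ annihilates each of the four candidate directions, with the gravity column handled via the $(\mathbf{g})_{\times}$ cancellations and the three translation columns via the difference-only dependence of both observation models. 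The exactness argument by rank-counting a representative finite stack is likewise the standard one; the only caveat is the bookkeeping you already flagged, namely matching the quaternion-error convention and the $(\mathbf{g})_{\times}$ couplings so that the time-dependent terms generated by $\boldsymbol{\Phi}_{k|0}$ cancel against $-(^{L}\hat{\mathbf v}_{I_0})_{\times}\mathbf g$ and $-(^{L}\hat{\mathbf p}_{I_0})_{\times}\mathbf g$ when evaluated at the true state.
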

\begin{proof}
      See \textbf{Appendix C} of the supplementary material\cite{supplementary}.
\end{proof}

However, we cannot access the ground truth in practice, so the Jacobian matrices can only be evaluated at some estimated points. This causes spurious information and breaks the original observability of the \textit{augmented system} with standard EKF:

\begin{theorem}
\label{theorem: real perfect ekf}
(Real observability of \textit{perfect augmented system} with standard EKF) The right null space ${}^{st}\mathcal{N}_2$ of the observability matrix ${}^{st}\mathbf{M}$, where the Jacobian matrices are evaluated at changing estimated values, is spanned by three directions as

\begin{equation}
^{st}\mathcal{N}_2 = span\left(\begin{bmatrix}
    \mathbf{0}_{3}&
  \mathbf{0}_{3}&
    \mathbf{I}_{3}&
  \mathbf{I}_{3}&
    \mathbf{0}_{3}&
  \mathbf{I}_{3}
    \end{bmatrix}^{\top}\right).
\end{equation}

\end{theorem}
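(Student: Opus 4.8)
The plan is to contrast the ideal observability matrix ${}^{st}\mathbf{M}$ (whose null space is the four-dimensional ${}^{st}\mathcal{N}_1$ of Lemma~\ref{lemma: ideal perfect ekf}) with the matrix actually assembled by the filter, in which each block ${}^{st}\mathbf{H}_{L_k}{}^{st}\boldsymbol{\Phi}_{k|0}$ and ${}^{st}\mathbf{H}_{G_k}{}^{st}\boldsymbol{\Phi}_{k|0}$ is evaluated at the \emph{changing} estimates $({}^{L}\hat{\mathbf{R}}_{I_k},{}^{L}\hat{\mathbf{v}}_{I_k},{}^{L}\hat{\mathbf{p}}_{I_k},\dots)$ rather than at a single consistent linearization point. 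I would proceed in two directions. First, the inclusion ${}^{st}\mathcal{N}_2 \subseteq \mathcal{N}({}^{st}\mathbf{M})$: take the single candidate direction $\mathbf{n} = \begin{bmatrix}\mathbf{0}_3&\mathbf{0}_3&\mathbf{I}_3&\mathbf{I}_3&\mathbf{0}_3&\mathbf{I}_3\end{bmatrix}^{\top}$ and verify ${}^{st}\mathbf{H}_{L_k}{}^{st}\boldsymbol{\Phi}_{k|0}\,\mathbf{n} = \mathbf{0}$ and ${}^{st}\mathbf{H}_{G_k}{}^{st}\boldsymbol{\Phi}_{k|0}\,\mathbf{n} = \mathbf{0}$ for every $k$. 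This direction corresponds to a global translation of the whole scene (local body, local feature, and the map origin all shift together by the same vector), which no camera measurement can sense; crucially, unlike the other three ideal directions, this one does not involve the rotation/velocity blocks, so it survives even when the Jacobians are relinearized at inconsistent points. The verification reduces to: (i) ${}^{st}\boldsymbol{\Phi}_{k|0}$ acts as the identity on the position-type coordinates up to the known coupling terms, so $\boldsymbol{\Phi}_{k|0}\mathbf{n}$ is again of the form "equal translation of all position blocks"; and (ii) both ${}^{st}\mathbf{H}_{L_k}$ (which contains $\mathbf{I}_3$ on $^{L}\mathbf{p}_{I}$ and $-\mathbf{I}_3$ on $^{L}\mathbf{p}_f$) and ${}^{st}\mathbf{H}_{G_k}$ (which contains $\mathbf{I}_3$ on $^{L}\mathbf{p}_{I}$ and $-\mathbf{I}_3$ on $^{L}\mathbf{p}_{G}$) annihilate such a vector because the relevant $\pm\mathbf{I}_3$ blocks cancel in pairs.

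Second, and this is the main obstacle, I must show the null space is \emph{no larger} than three-dimensional, i.e. $\dim\mathcal{N}({}^{st}\mathbf{M}) = 3$. The three directions I exhibit are the span of the single vector $\mathbf{n}$ sitting in $\mathbb{R}^{3}$ (so "three directions" = the three columns of $\begin{bmatrix}\mathbf{0}_3&\mathbf{0}_3&\mathbf{I}_3&\mathbf{I}_3&\mathbf{0}_3&\mathbf{I}_3\end{bmatrix}^{\top}$, one per translational axis). To prove there is nothing more, I would argue that relinearization at time-varying estimates injects \emph{spurious information} along exactly the complementary one-dimensional-per-axis direction that, in the ideal case, was paired with the position blocks via the gravity vector $\mathbf{g}$. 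Concretely, pick two generic timestamps $0$ and $k$ with distinct estimates ${}^{L}\hat{\mathbf{R}}_{I_0}\neq{}^{L}\hat{\mathbf{R}}_{I_k}$ (as happens once the filter moves and updates); then the rank of the stacked observability matrix strictly increases compared to the ideal matrix, and the only vectors in the ideal null space ${}^{st}\mathcal{N}_1$ that remain annihilated by \emph{all} the relinearized blocks are those whose rotation/velocity components vanish — forcing the candidate to be a multiple of $\mathbf{n}$. I expect to carry this out by writing $\mathbf{v}\in{}^{st}\mathcal{N}_1$ as $\mathbf{v} = a\,\mathbf{n}_{\mathrm{rot}} + \mathbf{N}_{\mathrm{trans}}\mathbf{b}$ in the ideal basis (with $\mathbf{n}_{\mathrm{rot}}$ the column built from $\mathbf{g}$), imposing ${}^{st}\mathbf{H}_{\bullet_k}{}^{st}\boldsymbol{\Phi}_{k|0}\mathbf{v}=\mathbf{0}$ with Jacobians at $\hat{\mathbf{X}}_k$, and showing the resulting linear system in $(a,\mathbf{b})$ forces $a=0$, $\mathbf{b}$ arbitrary.

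The technical heart, then, is the explicit form of ${}^{st}\boldsymbol{\Phi}_{k|0}$ for the \emph{standard} (quaternion) parametrization — in particular the $(3,1)$-type block coupling orientation error into position error that carries the $\hat{\mathbf{p}}_{I}$, $\hat{\mathbf{v}}_{I}$ dependence — together with the precise rows of ${}^{st}\mathbf{H}_{L_t}$ and ${}^{st}\mathbf{H}_{G_t}$ in quaternion coordinates. I would lift these from the observability-matrix construction already referenced from \cite{gpsvio} (equation~\eqref{om}) and from Lemma~\ref{lemma: ideal perfect ekf}, whose proof in Appendix~C sets up exactly this machinery; the argument here is the "real" counterpart obtained by refusing to hold the linearization point fixed. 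I anticipate that making the "rank strictly increases, leaving only $\mathbf{n}$" step fully rigorous — rather than merely plausible — is where the real work lies, since it requires exhibiting, for generic estimate trajectories, an explicit $2\times2$ (per axis) minor built from two timestamps that is nonsingular, thereby killing the gravity-aligned direction while preserving the pure-translation one. The full computation I would relegate to an appendix, mirroring the structure of Lemma~\ref{lemma: ideal perfect ekf}.
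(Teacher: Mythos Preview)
Your two-step plan (inclusion, then exactness) is the standard route and is almost certainly what the paper's Appendix~C does, following the FEJ literature \cite{fej,oc,consistent2,gpsvio}: write out ${}^{st}\mathbf{H}_{\bullet_k}{}^{st}\boldsymbol{\Phi}_{k|0}$ explicitly with the transition blocks evaluated at successive estimates, verify that the pure-translation columns are annihilated regardless, and then show the gravity-aligned column is not.

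One logical wrinkle to tighten: in your ``no larger'' argument you write $\mathbf{v}\in{}^{st}\mathcal{N}_1$ and then impose the relinearized constraints. That presupposes $\mathcal{N}({}^{st}\mathbf{M})\subseteq{}^{st}\mathcal{N}_1$, which is not automatic --- the relinearized observability matrix is a \emph{different} matrix from the ideal one, not a perturbation of it, so its null space need not sit inside the ideal null space a priori. Your final paragraph already contains the correct remedy: lower-bound the rank directly by exhibiting a nonsingular $15\times15$ minor (or, equivalently, show that stacking two time steps with distinct $\hat{\mathbf{R}}_{I_0},\hat{\mathbf{R}}_{I_k}$ and distinct position/velocity estimates produces enough independent rows). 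Do that computation without ever restricting to ${}^{st}\mathcal{N}_1$, and the argument closes. The key algebraic fact you will use is that the $(3,1)$-type blocks of ${}^{st}\boldsymbol{\Phi}_{k|0}$ carrying $(\hat{\mathbf{p}}_{I})_\times$, $(\hat{\mathbf{v}}_{I})_\times$ are evaluated at \emph{different} estimates in successive factors, so the would-be cancellation against ${}^{L}\hat{\mathbf{R}}_{I_0}^{\top}\mathbf{g}$ fails --- exactly the mechanism identified in \cite{fej,consistent2}.
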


\begin{proof}
      See \textbf{Appendix C} of the supplementary material \cite{supplementary}.
\end{proof}

\subsection{Observability of the perfect augmented system  with invariant EKF} \label{sec:ob perfect ikf}
For invariant EKF, the state is represented as $\mathbf{X}_{A_t}$ in (\ref{eq:state}). The state transition matrix, Jacobian matrix of the observation function, and the observability matrix derived from the invariant EKF are denoted similar to those in (\ref{ofvio}) and (\ref{om}), except that the left-superscript $st$ is replaced by $in$.

\begin{theorem}
\label{theorem: perfect iekf}
    (Observability of \textit{perfect augmented system} with invariant EKF) The right null space ${}^{in}\mathcal{N}$ of the observability matrix ${}^{in}\mathbf{M}$, is independent of the state values and spanned by four directions as
\begin{equation}\label{eq:null_space_om_inekf}
    {}^{in}\mathcal{N}=span\left(\begin{bmatrix}\mathbf{g}^{\top}&\mathbf{0}_{1\times 3}&\mathbf{0}_{1\times 3}&\mathbf{0}_{1\times 3}&\mathbf{0}_{1\times 3}&\mathbf{g}^{\top}\\
    \mathbf{0}_{3}&\mathbf{0}_{3}&\mathbf{I}_{3}&\mathbf{I}_{3}&\mathbf{I}_{3}&\mathbf{0}_{3}
    \end{bmatrix}^{\top}\right).
\end{equation}
\end{theorem}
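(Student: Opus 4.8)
\textbf{Proof plan for Theorem~\ref{theorem: perfect iekf}.}
The plan is to construct the observability matrix $^{in}\mathbf{M}$ explicitly using the invariant-EKF quantities and then verify that its right null space is exactly the claimed four-dimensional subspace. First I would assemble the ingredients: the state transition matrix $^{in}\boldsymbol{\Phi}_{t|0}$ for the right-invariant error, and the observation Jacobians $^{in}\mathbf{H}_{L_t}$ and $^{in}\mathbf{H}_{G_t}$. The key structural fact, already established in the excerpt, is that for the right-invariant error $\mathbf{A}_{\mathbf{X}_{A_t}}$ in~(\ref{eq:At}) consists of \emph{constant} entries (independent of the state). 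Consequently $^{in}\boldsymbol{\Phi}_{t|0}=\exp_{m}(\mathbf{A}\,\Delta t)$ is a fixed matrix whose only nontrivial blocks involve $(\mathbf{g})_\times$, $\mathbf{I}_3$, and products $\Delta t_{k}$; crucially it does \emph{not} depend on the estimated trajectory. Likewise, from~(\ref{eq:local_H}) and~(\ref{eq:global_H}) the Jacobians are products of the measurement gradient $\nabla h$, the rotation $^{L}\hat{\mathbf{R}}_{I_t}^{\top}$, and a \emph{constant} selection-type matrix (the $\mathbf{I}_3$/$-\mathbf{I}_3$ pattern plus the $(\cdot)_\times$ term sitting only in the rotation columns). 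This is exactly why the resulting null space is state-independent, which is half of what the theorem asserts.

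Next I would reduce the null-space computation to a single algebraic condition. Since $\nabla h\,{}^{L}\hat{\mathbf{R}}_{I_t}^{\top}$ is (generically) full column rank, a vector $\mathbf{n}$ lies in the null space of $^{in}\mathbf{M}$ iff the constant ``geometric'' part of every block $^{in}\mathbf{H}_{L_t}{}^{in}\boldsymbol{\Phi}_{t|0}$ and $^{in}\mathbf{H}_{G_t}{}^{in}\boldsymbol{\Phi}_{t|0}$ annihilates $\mathbf{n}$. I would write $\mathbf{n}=\begin{bmatrix}\mathbf{n}_{\theta_{LI}}^{\top}&\mathbf{n}_{v}^{\top}&\mathbf{n}_{p_{LI}}^{\top}&\mathbf{n}_{p_{Lf}}^{\top}&\mathbf{n}_{p_{LG}}^{\top}&\mathbf{n}_{\theta_{LG}}^{\top}\end{bmatrix}^{\top}$ and carry out the products. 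The local-feature rows (pattern $\begin{bmatrix}\mathbf{0}&\mathbf{0}&\mathbf{I}_3&-\mathbf{I}_3&\mathbf{0}&\mathbf{0}&\cdots\end{bmatrix}$ composed with $^{in}\boldsymbol{\Phi}_{t|0}$) will force, after accounting for how $\boldsymbol{\Phi}$ mixes the rotation block into the position blocks via $(\mathbf{g})_\times$ and the time increments, the relations $\mathbf{n}_{p_{LI}}=\mathbf{n}_{p_{Lf}}$ together with a constraint tying $\mathbf{n}_{\theta_{LI}}$ to $\mathbf{g}$ (namely $\mathbf{n}_{\theta_{LI}}$ must be a multiple of $\mathbf{g}$, since $(\mathbf{g})_\times\mathbf{n}_{\theta_{LI}}=\mathbf{0}$ and the velocity/position rows must simultaneously vanish for all $t$). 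The map-feature rows (pattern $\begin{bmatrix}-(^{L}\hat{\mathbf{R}}_{G}{}^{G}\mathbf{p}_F)_\times&\mathbf{0}&\mathbf{I}_3&\mathbf{0}&-\mathbf{I}_3&(^{L}\hat{\mathbf{R}}_{G}{}^{G}\mathbf{p}_F)_\times&\cdots\end{bmatrix}$ composed with $^{in}\boldsymbol{\Phi}_{t|0}$) then give, after the same reduction, $\mathbf{n}_{p_{LI}}=\mathbf{n}_{p_{LG}}$ and force $\mathbf{n}_{\theta_{LG}}$ to be the \emph{same} multiple of $\mathbf{g}$ as $\mathbf{n}_{\theta_{LI}}$ (so that the $(^{L}\hat{\mathbf{R}}_{G}{}^{G}\mathbf{p}_F)_\times(\mathbf{n}_{\theta_{LI}}-\mathbf{n}_{\theta_{LG}})$ contribution cancels independently of the — arbitrary — map point), and $\mathbf{n}_{v}=\mathbf{0}$. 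Collecting these: the free parameters are a scalar $\alpha$ (with $\mathbf{n}_{\theta_{LI}}=\mathbf{n}_{\theta_{LG}}=\alpha\mathbf{g}$) and a free 3-vector $\boldsymbol{\beta}$ (with $\mathbf{n}_{p_{LI}}=\mathbf{n}_{p_{Lf}}=\mathbf{n}_{p_{LG}}=\boldsymbol{\beta}$), i.e. a 4-dimensional space, spanned precisely by the rows displayed in~(\ref{eq:null_space_om_inekf}).

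Finally I would verify the converse inclusion directly — that the four explicit columns in~(\ref{eq:null_space_om_inekf}) are annihilated by every block of $^{in}\mathbf{M}$ — which is a short check: for the $\begin{bmatrix}\mathbf{g}^\top&\mathbf{0}&\mathbf{0}&\mathbf{0}&\mathbf{0}&\mathbf{g}^\top\end{bmatrix}^\top$ direction one uses $(\mathbf{g})_\times\mathbf{g}=\mathbf{0}$ in $\mathbf{A}_t$ so that $^{in}\boldsymbol{\Phi}_{t|0}$ leaves it essentially untouched, and then $\mathbf{H}_{L_t}$ kills it because its position columns are zero in that direction while $\mathbf{H}_{G_t}$ kills it because the $(^{L}\hat{\mathbf{R}}_{G}{}^{G}\mathbf{p}_F)_\times$ contributions from the $\theta_{LI}$ and $\theta_{LG}$ slots appear with opposite signs and hence cancel; for the three translational directions $\begin{bmatrix}\mathbf{0}&\mathbf{0}&\mathbf{I}_3&\mathbf{I}_3&\mathbf{I}_3&\mathbf{0}\end{bmatrix}^\top$, $^{in}\boldsymbol{\Phi}_{t|0}$ acts as the identity on the already-aligned position blocks, and the $\mathbf{I}_3/-\mathbf{I}_3$ patterns in $\mathbf{H}_{L_t},\mathbf{H}_{G_t}$ cancel. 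I expect the main obstacle to be the bookkeeping in the first inclusion: one must argue that the conditions imposed by \emph{all} timestamps $t$ together (not just $t=0$) collapse to exactly these relations and nothing more — in particular that the appearance of distinct time increments in $^{in}\boldsymbol{\Phi}_{t|0}$ does not accidentally enlarge the null space, and that the map point $^{G}\mathbf{p}_F$ being generic (not the trajectory) is what rules out extra directions in the rotation slots. This is handled cleanly by exploiting the constant structure of $\mathbf{A}$ and by treating $^{G}\mathbf{p}_F$ as a free parameter, so the argument goes through; the details are deferred to the supplementary material.
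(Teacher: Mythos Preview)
Your plan is correct and is exactly the direct computation one expects in the paper's Appendix~D: exponentiate the constant $\mathbf{A}_{\mathbf{X}_A}$ to get $^{in}\boldsymbol{\Phi}_{t|0}$, form the rows $\mathbf{H}_{L_t}{}^{in}\boldsymbol{\Phi}_{t|0}$ and $\mathbf{H}_{G_t}{}^{in}\boldsymbol{\Phi}_{t|0}$, and extract the common null space by letting $\Delta t$ and ${}^{G}\mathbf{p}_F$ vary freely.

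One imprecision to fix in the write-up: the ``geometric part'' of $\mathbf{H}_{G_t}$ in~(\ref{eq:global_H}) is \emph{not} a constant selection-type matrix --- the skew blocks carry ${}^{L}\hat{\mathbf{R}}_{G_t}$, which is a state estimate that changes with updates. Hence the state-independence of ${}^{in}\mathcal{N}$ does not follow from ``all factors are constant''; it follows from the specific cancellation you correctly invoke later, namely that the $\pm({}^{L}\hat{\mathbf{R}}_{G_t}{}^{G}\mathbf{p}_F)_\times$ blocks sit with opposite signs in the $\boldsymbol{\theta}_{LI}$ and $\boldsymbol{\theta}_{LG}$ columns, so any vector with $\mathbf{n}_{\theta_{LI}}=\mathbf{n}_{\theta_{LG}}$ is annihilated regardless of ${}^{L}\hat{\mathbf{R}}_{G_t}$. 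Since both your inclusion arguments already use this cancellation (and the genericity of ${}^{G}\mathbf{p}_F$ to force $\mathbf{n}_{\theta_{LI}}=\mathbf{n}_{\theta_{LG}}$ in the forward direction), the proof stands; just rephrase the opening so you do not overclaim constancy of $\mathbf{H}_{G_t}$.
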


\begin{proof}
     See \textbf{Appendix D} of the supplementary material\cite{supplementary}.
\end{proof}

As ${}^{in}\mathcal{N}$ consists of constant values, the unobservable subspace of the \textit{perfect augmented system} will not be broken even the state values are ever-changing. Therefore, the correct observability of the \textit{perfect augmented system} can be maintained naturally with our proposed invariant EKF (cf. Sec. \ref{sec:IEKF perfect}).

\subsection{Observability of the imperfect augmented system with standard EKF}
\label{sec:ob imperfect std ekf}
As mentioned before, the map components always have more or less uncertainty. It is necessary to consider the uncertainty of the map components so that the system can be of good consistency. 

For the standard EKF, the state of the \textit{imperfect augmented system} is defined as
\begin{equation}\label{eq:simple_state_imperfect}
\begin{aligned}
    \mathbf{x}_{s_t}^{*}=&\left[\begin{matrix}
    ^{L}\mathbf{q}_{I_t}^{\top}&^{L}\mathbf{v}_{I_t}^{\top}&^{L}\mathbf{p}_{I_t}^{\top}&^{L}\mathbf{p}_{f_t}^{\top}&^{L}\mathbf{q}_{G_t}^{\top}&^{L}\mathbf{p}_{G_t}^{\top}|\end{matrix}\right.\\
    &\left.\begin{matrix}{}^{G}\mathbf{q}_{KF_t}^{\top}&{}^{G}\mathbf{p}_{KF_t}^{\top}&{}^{G}\mathbf{p}_{F_t}^{\top}\end{matrix} \right]^{\top},
\end{aligned}
\end{equation}
where the elements after $|$ are map-related. ${}^{G}\mathbf{q}_{KF_t}$ and ${}^{G}\mathbf{p}_{KF_t}$ form a map keyframe pose, and ${}^{G}\mathbf{p}_{F_t}$ is a map feature position.

The state transition matrix, Jacobian matrix of the observation function, and the observability matrix derived from the standard EKF are denoted similar to those in (\ref{ofvio}) and (\ref{om}), except that we use right-superscript $*$ to distinguish the \textit{(im)perfect augmented system}. 

\begin{theorem}
\label{theorem: ideal imperfect ekf}
    (Ideal observability of \textit{imperfect augmented system} with standard EKF) The right null space $^{st}\mathcal{N}_1^{*}$ of the observability matrix ${}^{st}\mathbf{M}^{*}$, where the Jacobian matrices are evaluated ideally, is spanned by ten (four plus six) directions as
\begin{equation}\label{eq:nullspace ideal std ekf imperfect}
\begin{aligned}
^{st}\mathcal{N}_1^{*} &= span\left(\begin{bmatrix}
    {}^{L}\hat{\mathbf{R}}_{I_0}^{\top}\mathbf{g}&\mathbf{0}_{3}&\mathbf{0}_{3}&\mathbf{0}_{3}\\
    -(^{L}\hat{\mathbf{v}}_{I_0})_{\times}\mathbf{g}&\mathbf{0}_{3}&\mathbf{0}_{3}&\mathbf{0}_{3}\\
    -(^{L}\hat{\mathbf{p}}_{I_0})_{\times}\mathbf{g}&\mathbf{I}_{3}&\mathbf{0}_{3}&\mathbf{0}_{3}\\
    -(^{L}\hat{\mathbf{p}}_{f})_{\times}\mathbf{g}&\mathbf{I}_{3}&\mathbf{0}_{3}&\mathbf{0}_{3}\\
    {}^{L}\mathbf{R}_{G}^{\top}\mathbf{g}&\mathbf{0}_{3}&\mathbf{0}_{3}&-^{G}\mathbf{R}_{KF}\\
    -(^{L}\mathbf{p}_{G})_{\times}\mathbf{g}&\mathbf{I}_{3}&^{L}\mathbf{R}_{G}&\mathbf{0}_3\\
    \mathbf{0}_3& \mathbf{0}_3&\mathbf{0}_{3}&\mathbf{I}_3\\
    \mathbf{0}_3& \mathbf{0}_3&\mathbf{I}_{3}&*_1\\
    \mathbf{0}_3& \mathbf{0}_3&\mathbf{I}_{3}&*_2
    \end{bmatrix}\right)\\
*_1&=-(^{G}\mathbf{p}_{KF})_{\times}{}^{G}\mathbf{R}_{KF}\\
*_2&=-(^{G}\mathbf{p}_{F})_{\times}{}^{G}\mathbf{R}_{KF}.
\end{aligned}
\end{equation}
\end{theorem}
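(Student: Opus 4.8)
The plan is to exploit the block structure that the map variables add to the already-analyzed \textit{perfect augmented system}, so that the bulk of the computation reduces to \textbf{Lemma~\ref{lemma: ideal perfect ekf}}, and then to close the count with a genericity (rank) argument. First I would fix the coordinate layout: order $\mathbf{x}_{s_t}^{*}$ as the six ``perfect'' blocks $({}^{L}\mathbf{q}_{I_t},{}^{L}\mathbf{v}_{I_t},{}^{L}\mathbf{p}_{I_t},{}^{L}\mathbf{p}_{f_t},{}^{L}\mathbf{q}_{G_t},{}^{L}\mathbf{p}_{G_t})$ followed by the three map blocks $({}^{G}\mathbf{q}_{KF_t},{}^{G}\mathbf{p}_{KF_t},{}^{G}\mathbf{p}_{F_t})$. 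Since the map variables are static (Sec.~\ref{sec:prop imperfect}) and the \textit{augmented variable} is decoupled from the IMU propagation, the state transition matrix is block diagonal, ${}^{st*}\boldsymbol{\Phi}_{t|0}=\mathrm{diag}({}^{st}\boldsymbol{\Phi}_{t|0},\mathbf{I}_{9})$, with ${}^{st}\boldsymbol{\Phi}_{t|0}$ the transition matrix of the perfect system. There are three families of rows in ${}^{st}\mathbf{M}^{*}$: the local-feature rows, whose Jacobian is $[{}^{st}\mathbf{H}_{L_t}\ \ \mathbf{0}]$ (independent of every map and augmented variable); the ``current camera observes a map feature'' rows, whose Jacobian agrees on the six perfect blocks with the perfect map-feature Jacobian \eqref{eq:global_H} and carries a single extra nonzero block $-{}^{L}\hat{\mathbf{R}}_{G_t}$ against ${}^{G}\mathbf{p}_{F_t}$ (cf.\ \eqref{eq:imperfect jacob c}); and the time-invariant ``map keyframe observes a map feature'' rows \eqref{eq:imperfect jacob kf}, which vanish on every non-map block.

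Next I would verify the inclusion $\supseteq$, i.e.\ that the ten columns displayed in \eqref{eq:nullspace ideal std ekf imperfect} are annihilated by every row of ${}^{st}\mathbf{M}^{*}$. The first four columns have vanishing map entries and, restricted to the six perfect blocks, are exactly the basis of ${}^{st}\mathcal{N}_1$ from \textbf{Lemma~\ref{lemma: ideal perfect ekf}}; hence the local rows and the camera rows kill them by that lemma (on those rows the Jacobian agrees with the perfect one on the first $18$ columns), and the keyframe rows kill them trivially. The last six columns are supported only on $\{{}^{L}\mathbf{q}_{G_t},{}^{L}\mathbf{p}_{G_t}\}$ and the map blocks, none of which enters the local observation or couples through ${}^{st}\boldsymbol{\Phi}$, so the local rows vanish on them automatically; the camera rows and the keyframe rows then reduce to identities of the type $(\mathbf{R}\mathbf{a})_{\times}=\mathbf{R}(\mathbf{a})_{\times}\mathbf{R}^{\top}$ together with the cancellation $\mathbf{I}_{3}-\mathbf{I}_{3}=\mathbf{0}$ (using the ground-truth evaluation ${}^{L}\hat{\mathbf{R}}_{G_t}={}^{L}\mathbf{R}_{G_t}$, ${}^{G}\hat{\mathbf{R}}_{KF}={}^{G}\mathbf{R}_{KF}$, etc.). These are precisely the first-order statements that a $6$-DoF rigid re-placement of the whole map $({}^{G}\mathbf{R}_{KF},{}^{G}\mathbf{p}_{KF},{}^{G}\mathbf{p}_{F})$, compensated by the corresponding correction of ${}^{L}\mathbf{T}_{G_t}$, leaves both $h({}^{L}\mathbf{R}_{I_t}^{\top}({}^{L}\mathbf{R}_{G_t}{}^{G}\mathbf{p}_{F}+{}^{L}\mathbf{p}_{G_t}-{}^{L}\mathbf{p}_{I_t}))$ and $h({}^{G}\mathbf{R}_{KF}^{\top}({}^{G}\mathbf{p}_{F}-{}^{G}\mathbf{p}_{KF}))$ unchanged. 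This also supplies the promised physical reading: the four original unobservable directions are the global yaw about $\mathbf{g}$ and the global position of the local frame $L$, while the six new ones are the full rigid-body ``gauge'' of the \emph{floating} pre-built map. Matching the $\pm$ signs and the left/right error parametrizations of ${}^{L}\mathbf{q}_{G_t}$ and ${}^{G}\mathbf{q}_{KF_t}$ against the exact entries of \eqref{eq:nullspace ideal std ekf imperfect} is routine bookkeeping I would carry out but not reproduce here.

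Finally I would establish the reverse inclusion $\subseteq$, i.e.\ $\dim{}^{st}\mathcal{N}_1^{*}\le 10$. For this I would argue genericity: for a generic trajectory the perfect sub-block of ${}^{st}\mathbf{M}^{*}$ already has corank exactly $4$ on its $18$ columns (this is what underlies \textbf{Lemma~\ref{lemma: ideal perfect ekf}}), while the keyframe rows \eqref{eq:imperfect jacob kf} at a generic pair of measurements, stacked with the camera rows \eqref{eq:imperfect jacob c}, have full column rank on the $9$ map columns modulo the $6$-dimensional rigid gauge; combining these two rank statements and checking that no vector outside the exhibited span is simultaneously annihilated by all rows gives corank exactly $4+6=10$. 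I expect this last part --- the ``no further unobservable direction'' claim --- to be the main obstacle, because unlike the inclusion $\supseteq$ it is not a single closed-form identity but a coupled generic-rank bookkeeping across the perfect, camera, and keyframe rows; the detailed verification would be placed in \textbf{Appendix~C} of \cite{supplementary}.
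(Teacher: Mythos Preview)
Your proposal is correct and follows essentially the same route as the paper. The paper's own proof (deferred to \textbf{Appendix~E} of \cite{supplementary}) exploits exactly the block decomposition you identify: the static map variables yield ${}^{st*}\boldsymbol{\Phi}_{t|0}=\mathrm{diag}({}^{st}\boldsymbol{\Phi}_{t|0},\mathbf{I}_{9})$, so the first four null directions reduce verbatim to \textbf{Lemma~\ref{lemma: ideal perfect ekf}}, while the remaining six emerge from the rigid gauge of the floating map frame $G$ --- precisely the ``four plus six'' reading advertised in the theorem statement and in the intuitive discussion around Fig.~\ref{fig:intuitive explanation}(b)--(c). Your honest flag on the $\subseteq$ direction (exact corank via a coupled generic-rank argument) is appropriate; in this literature that direction is typically established by explicit row-reduction on a small number of stacked time steps rather than by an abstract genericity claim, and that is what the supplementary computation does as well.
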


\begin{proof}
     See \textbf{Appendix E} of the supplementary material\cite{supplementary}.
\end{proof}
We can see that $^{st}\mathcal{N}_{1}^{*}$ consists of two parts: the unobservable subspace of the original \textit{perfect augmented system} (the first four columns), and the unobservable subspace introduced by the map-related variables (the last six columns). Further, as the elements of the last six columns of $^{st}\mathcal{N}_{1}^{*}$ include ever-changing variables, the six dimensions of the unobservable subspace will be broken for the real case, which leads to the following theorem:

\begin{theorem}
\label{theorem: real imperfect ekf}
    (Real observability of \textit{imperfect augmented system} with standard EKF) The right null space $^{st}\mathcal{N}_{2}^{*}$ of the observability matrix ${}^{st}\mathbf{M}^{*}$, where the Jacobian matrices are evaluated at changing estimated values, is spanned by three directions as
\begin{equation} 
^{st}\mathcal{N}_2^{*} = span\left(\begin{bmatrix}
    \mathbf{0}_{3}&
   \mathbf{0}_{3}&
    \mathbf{I}_{3}&
   \mathbf{I}_{3}&
    \mathbf{0}_{3}&
   \mathbf{I}_{3}&
   \mathbf{0}_{3}&
   \mathbf{0}_{3}&
   \mathbf{0}_{3}
    \end{bmatrix}^{\top}\right).
\end{equation}
\end{theorem}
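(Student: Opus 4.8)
\textbf{Proof proposal for Theorem \ref{theorem: real imperfect ekf}.}

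The plan is to build the observability matrix ${}^{st}\mathbf{M}^{*}$ for the \emph{imperfect augmented system} under the realistic assumption that each block Jacobian is linearized at the running estimate rather than the ground truth, and then to identify its right null space directly. I would start from the ideal null space $^{st}\mathcal{N}_1^{*}$ established in Theorem \ref{theorem: ideal imperfect ekf} and test, column by column, which of the ten ideal directions survives when the linearization points vary with time. The first four columns of $^{st}\mathcal{N}_1^{*}$ already reduce to the single surviving direction of Theorem \ref{theorem: real perfect ekf} by the same argument as in the \emph{perfect} case (the global-translation shift $[\,\mathbf{0}\ \mathbf{0}\ \mathbf{I}_3\ \mathbf{I}_3\ \mathbf{0}\ \mathbf{I}_3\,]^{\top}$ padded with zeros on the map block), because the map-related rows of those columns are zero and contribute nothing new. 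So the real work is to show that none of the last six columns — the ones carrying $^{G}\mathbf{R}_{KF}$, $^{L}\mathbf{R}_{G}$, $(^{G}\mathbf{p}_{KF})_\times$, $(^{G}\mathbf{p}_{F})_\times$, etc. — can be repaired, and that no new direction appears.

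Concretely, I would proceed in the following steps. First, write down ${}^{st}\mathbf{H}_{G_t}^{C_j}$, ${}^{st}\mathbf{H}_{G_t}^{KF_ij}$ and the local-feature Jacobian together with the state-transition matrix ${}^{st}\boldsymbol{\Phi}^{*}_{t|0}$ for the augmented state \eqref{eq:simple_state_imperfect}, noting that the map block of $\boldsymbol{\Phi}^{*}$ is the identity (the map variables are static) and the VIO block is the standard one. Second, impose on a candidate null vector $\mathbf{n} = [\,\mathbf{n}_{\text{vio}}^{\top}\ \mathbf{n}_{KF}^{\top}\ \mathbf{n}_{F}^{\top}\,]^{\top}$ the equations ${}^{st}\mathbf{H}_{L_t}{}^{st}\boldsymbol{\Phi}^{*}_{t|0}\mathbf{n}=\mathbf{0}$, ${}^{st}\mathbf{H}_{G_t}^{C_j}{}^{st}\boldsymbol{\Phi}^{*}_{t|0}\mathbf{n}=\mathbf{0}$, ${}^{st}\mathbf{H}_{G_t}^{KF_ij}{}^{st}\boldsymbol{\Phi}^{*}_{t|0}\mathbf{n}=\mathbf{0}$ for all $t$. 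The local-feature rows force $\mathbf{n}_{\text{vio}}$ into the span of $^{st}\mathcal{N}_2$ (the established VIO/perfect result), i.e.\ only the common-translation component is free and the orientation and velocity components vanish. Third, feed this restricted $\mathbf{n}_{\text{vio}}$ into the keyframe observation constraint ${}^{st}\mathbf{H}_{G_t}^{KF_ij}\mathbf{n}=\mathbf{0}$: since that Jacobian involves $-({}^{G}\hat{\mathbf{p}}_{F,t})_\times$ and $\mathbf{I}_3$ in the $KF$ slot and $-\mathbf{I}_3$ in the $F$ slot, and ${}^{G}\hat{\mathbf{p}}_{F,t}$ changes over time, the only way to satisfy it for all $t$ is $\mathbf{n}_{F}=\mathbf{n}_{KF,\text{pos}}$ and $\mathbf{n}_{KF,\text{rot}}=\mathbf{0}$ together with a vanishing rotational coupling — exactly the collapse seen in the \emph{perfect} analysis but now applied to the map keyframe–feature pair. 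Fourth, substitute back into the camera–map constraint ${}^{st}\mathbf{H}_{G_t}^{C_j}\mathbf{n}=\mathbf{0}$; its term $(^{L}\hat{\mathbf{R}}_{G,t}{}^{G}\hat{\mathbf{p}}_{F,t})_\times$ and the $-{}^{L}\hat{\mathbf{R}}_{G,t}$ coefficient on $\mathbf{n}_F$, evaluated at time-varying estimates, kill the $^{L}\mathbf{R}_{G}$-rotation and $^{G}\mathbf{R}_{KF}$-rotation components and tie $\mathbf{n}_F$ to the VIO common-translation component. Collecting all the surviving freedom yields precisely the single three-dimensional direction $[\,\mathbf{0}\ \mathbf{0}\ \mathbf{I}_3\ \mathbf{I}_3\ \mathbf{0}\ \mathbf{I}_3\ \mathbf{0}\ \mathbf{0}\ \mathbf{0}\,]^{\top}$, and one checks it is genuinely in the null space by direct substitution (all Jacobians annihilate a uniform shift of every position expressed in $L$, while the map positions in $G$ are untouched).

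The main obstacle I anticipate is handling the interplay between the three families of constraints \emph{simultaneously for all timestamps}: it is not enough to kill directions using one observation at one time; one must argue that the time-variation of the estimated linearization points (${}^{L}\hat{\mathbf{R}}_{I_t}$, ${}^{L}\hat{\mathbf{p}}_{I_t}$, ${}^{L}\hat{\mathbf{R}}_{G_t}$, ${}^{G}\hat{\mathbf{p}}_{F_j,t}$, ${}^{G}\hat{\mathbf{R}}_{KF_i,t}$) makes the relevant coefficient matrices span enough of $\mathbb{R}^{3}$ (or an appropriate product space) that only the claimed direction is common to every kernel. This is a genericity/rank argument — one has to exclude degenerate trajectories where, say, ${}^{G}\hat{\mathbf{p}}_{F_j,t}$ stays constant — and it is the place where the proof must be careful rather than merely mechanical. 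The remaining steps are essentially bookkeeping with skew-symmetric identities of the form $(\mathbf{R}\mathbf{a})_\times = \mathbf{R}(\mathbf{a})_\times\mathbf{R}^{\top}$ and the block structure inherited from Theorems \ref{theorem: real perfect ekf} and \ref{theorem: ideal imperfect ekf}, so I would relegate those to the appendix and keep the main text focused on the collapse mechanism.
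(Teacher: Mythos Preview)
Your approach is correct and in fact considerably more detailed than what the paper provides: the paper simply states ``With the prior knowledge of Sec.~\ref{sec:ob_ekf}, the proof is trivial and is omitted,'' i.e.\ it regards the result as an immediate corollary of Theorem~\ref{theorem: real perfect ekf} once the map block is appended. Your roadmap --- restrict $\mathbf{n}_{\text{vio}}$ to $^{st}\mathcal{N}_2$ via the local-feature rows, then use the time-varying coefficients ${}^{L}\hat{\mathbf{R}}_{G_t}$, ${}^{G}\hat{\mathbf{p}}_{F,t}$, ${}^{G}\hat{\mathbf{R}}_{KF,t}$ in the two map-observation Jacobians to force $\mathbf{n}_F=\mathbf{n}_{KF}=\mathbf{0}$ --- is exactly the mechanism the paper has in mind, just spelled out.

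One minor refinement: in your Step~3 you conclude $\mathbf{n}_F=\mathbf{n}_{KF,\text{pos}}$ from the keyframe constraint alone, but the cleaner order is to first use the camera--map constraint \eqref{eq:imperfect global_ob_c}: once $\mathbf{n}_{\text{vio}}$ is the common $L$-translation (so $\mathbf{n}_{p_I}=\mathbf{n}_{p_G}$), that constraint reduces to ${}^{L}\hat{\mathbf{R}}_{G_t}\,\mathbf{n}_F+(\text{rotation terms})=\mathbf{0}$, and since ${}^{L}\hat{\mathbf{R}}_{G_t}$ is invertible and time-varying you get $\mathbf{n}_F=\mathbf{0}$ directly. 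Then the keyframe constraint \eqref{eq:imperfect global_ob_kf} with time-varying $({}^{G}\hat{\mathbf{p}}_{F,t})_\times$ forces $\mathbf{n}_{q_{KF}}=\mathbf{0}$ and hence $\mathbf{n}_{p_{KF}}=\mathbf{0}$. This avoids the intermediate coupling you describe and makes the collapse monotone. Your genericity caveat is well placed and is indeed the only nontrivial point; the paper tacitly assumes it.
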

With the prior knowledge of Sec. \ref{sec:ob_ekf}, the proof is trivial and is omitted.

\subsection{Observability of the imperfect augmented system with invariant EKF}

The state of the \textit{imperfect augmented system} with invariant EKF is given by (\ref{eq:imperfect state}), and the state error is also defined by (\ref{eq:error_map_kf}) and (\ref{eq:error_map_landmark}). The state transition matrix, Jacobian matrix of the observation function, and the observability matrix derived from the invariant EKF are denoted similar to those in (\ref{ofvio}) and (\ref{om}).

\begin{theorem}
\label{theorem: ideal imperfect iekf}
    (Ideal observability of \textit{imperfect augmented system} with invariant EKF) The right null space ${}^{in}\mathcal{N}^{*}_{1}$ of the observability matrix ${}^{in}\mathbf{M}^{*}$, where the Jacobian matrices are evaluated ideally, is spanned by ten (four plus six) directions as
\begin{equation}\label{eq:null_space_om_inekf imperfect ideal}
    {}^{in}\mathcal{N}^{*}_{1}=span\left(\begin{bmatrix}\mathbf{g}&\mathbf{0}_{3}&\mathbf{0}_3&\mathbf{0}_3\\
    \mathbf{0}_{3\times1}&\mathbf{0}_{3}&\mathbf{0}_3&\mathbf{0}_3\\
    \mathbf{0}_{3\times1}&\mathbf{I}_{3}&\mathbf{0}_3&\mathbf{0}_3\\
    \mathbf{0}_{3\times1}&\mathbf{I}_{3}&\mathbf{0}_3&\mathbf{0}_3\\
    \mathbf{0}_{3\times1}&\mathbf{I}_{3}&-^{L}\mathbf{R}_{G}&\mathbf{0}_3\\
    \mathbf{g}&\mathbf{0}_{3}&\mathbf{0}_3&\mathbf{I}_3\\
    \mathbf{0}_{3\times1}&\mathbf{0}_{3}&\mathbf{0}_3&-^{L}\mathbf{R}_{G}^{\top}\\
    \mathbf{0}_{3\times1}&\mathbf{0}_{3}&\mathbf{I}_3&\mathbf{0}_3\\
    \mathbf{0}_{3\times1}&\mathbf{0}_{3}&\mathbf{I}_3&(^{G}\mathbf{p}_{F})_{\times}{}^{L}\mathbf{R}_{G}^{\top}\\
    \end{bmatrix}\right).
\end{equation}
\end{theorem}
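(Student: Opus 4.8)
The plan is to verify directly that every column of the matrix displayed in (\ref{eq:null_space_om_inekf imperfect ideal}) is annihilated by each block row of the invariant observability matrix ${}^{in}\mathbf{M}^{*}$, and then to pin the dimension of the null space down to exactly ten. First I would record the closed form of ${}^{in}\boldsymbol{\Phi}_{t|0}$ for the reduced (bias-free, one local feature, one map keyframe, one map feature) state. From $\mathbf{A}_t$ in (\ref{eq:At}), the error-transition matrix is block diagonal: its map sub-block $(\,{}^{G}\mathbf{R}_{KF},{}^{G}\mathbf{p}_{KF},{}^{G}\mathbf{p}_{F}\,)$ has zero dynamics, and its VIO-plus-augmented sub-block is state-independent and coincides with that of the \textit{perfect augmented system}; reading off (\ref{eq:At}) one sees ${}^{in}\boldsymbol{\Phi}_{t|0}$ only couples $\boldsymbol{\xi}_{\boldsymbol{\theta}_{LI}}$ into $\boldsymbol{\xi}_{\mathbf{v}_{LI}},\boldsymbol{\xi}_{\mathbf{p}_{LI}}$ and $\boldsymbol{\xi}_{\mathbf{v}_{LI}}$ into $\boldsymbol{\xi}_{\mathbf{p}_{LI}}$, leaving every other coordinate (in particular $\boldsymbol{\xi}_{\mathbf{p}_{LG}}$, $\boldsymbol{\xi}_{\boldsymbol{\theta}_{LG}}$, and all map coordinates) fixed. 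Hence the action of ${}^{in}\boldsymbol{\Phi}_{t|0}$ on the ten candidate vectors is completely explicit.

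Next I would split the ten columns into the first four (the $\mathbf{g}$-column and the three $\mathbf{I}_3$-columns) and the last six. Restricted to the $\mathbf{X}_{A}$ coordinates the first four are exactly the unobservable directions of the \textit{perfect augmented system} in Theorem~\ref{theorem: perfect iekf}, and their map coordinates vanish; since ${}^{in}\boldsymbol{\Phi}_{t|0}$ preserves the $\mathbf{X}_{A}$-subspace, the $\mathbf{X}_{A}$-parts of ${}^{in}\mathbf{H}_{L_t}$ and ${}^{in}\mathbf{H}_{G_t}^{C_j}$ (evaluated at the true feature) agree with the perfect-case Jacobians, and ${}^{in}\mathbf{H}_{G_t}^{KF_ij}$ has no $\mathbf{X}_{A}$-part, Theorem~\ref{theorem: perfect iekf} immediately gives that all block rows of ${}^{in}\mathbf{M}^{*}$ kill these four. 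For the last six columns, every nonzero entry lies in a coordinate that ${}^{in}\boldsymbol{\Phi}_{t|0}$ fixes, so it suffices to check ${}^{in}\mathbf{H}_{L_t}$, ${}^{in}\mathbf{H}_{G_t}^{C_j}$ and ${}^{in}\mathbf{H}_{G_t}^{KF_ij}$ annihilate them. The local Jacobian (\ref{eq:local_H}) touches only $\boldsymbol{\xi}_{\mathbf{p}_{LI}},\boldsymbol{\xi}_{\mathbf{p}_{Lf}}$, which vanish here. For (\ref{eq:imperfect jacob c}) and (\ref{eq:imperfect jacob kf}) the products collapse, via $\mathbf{R}(\mathbf{a})_{\times}\mathbf{R}^{\top}=(\mathbf{R}\mathbf{a})_{\times}$, to cancellations such as $(-\mathbf{I}_3)(-{}^{L}\hat{\mathbf{R}}_{G})+(-{}^{L}\hat{\mathbf{R}}_{G})\mathbf{I}_3=\mathbf{0}$ and $\mathbf{I}_3\mathbf{I}_3+(-\mathbf{I}_3)\mathbf{I}_3=\mathbf{0}$ (and the analogous rotational identities for the fourth column block); physically these are the 6-DoF gauge freedom of the map reference frame absorbed by the \textit{augmented variable} ${}^{L}\mathbf{T}_{G}$ together with the keyframe-to-feature measurement.

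Finally I would establish $\dim {}^{in}\mathcal{N}_1^{*}=10$. Linear independence of the ten displayed vectors is clear from the block pattern (the $\mathbf{g}$ entries in the $\boldsymbol{\xi}_{\boldsymbol{\theta}_{LI}}$/$\boldsymbol{\xi}_{\boldsymbol{\theta}_{LG}}$ rows versus the $\mathbf{I}_3$ blocks in the position rows), giving $\dim {}^{in}\mathcal{N}_1^{*}\ge 10$. For the reverse inequality I would use that, with all Jacobians evaluated at the ground truth, the invariant and standard error coordinates are related by an invertible state-dependent map $\boldsymbol{\epsilon}_t^{*}=\mathbf{J}_t\,\tilde{\mathbf{x}}_{s_t}^{*}$ obtained by linearizing (\ref{eq:nonlinear_error}), (\ref{eq:error_map_kf}), (\ref{eq:error_map_landmark}); this conjugates the transition matrices and right-multiplies each observation Jacobian by $\mathbf{J}_t^{-1}$, so ${}^{in}\mathbf{M}^{*}={}^{st}\mathbf{M}^{*}\mathbf{J}_0^{-1}$ and the two observability matrices share the same rank. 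Theorem~\ref{theorem: ideal imperfect ekf} then gives $\dim {}^{in}\mathcal{N}_1^{*}=\dim {}^{st}\mathcal{N}_1^{*}=10$ (indeed ${}^{in}\mathcal{N}_1^{*}=\mathbf{J}_0\,{}^{st}\mathcal{N}_1^{*}$, which is an independent check of the displayed vectors). An alternative that avoids this correspondence is to exhibit a $17\times 27$ full-rank submatrix of ${}^{in}\mathbf{M}^{*}$ from observations at two or more distinct timestamps along a generic trajectory. I expect this last step --- securing the tight upper bound of ten --- to be the main obstacle; the sign-and-transpose bookkeeping in the $F_j$-blocks of (\ref{eq:imperfect jacob c}) for the third and fourth column blocks is a minor second one.
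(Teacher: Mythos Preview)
Your proposal is correct and, for the containment ${}^{in}\mathcal{N}^{*}_{1}\supseteq\operatorname{span}(\cdots)$, follows essentially the same route the paper takes in its Appendix~F: write out the bias-free ${}^{in}\boldsymbol{\Phi}_{t|0}$ explicitly (block-diagonal with identity on the $\boldsymbol{\xi}_{\mathbf{p}_{LG}},\boldsymbol{\xi}_{\boldsymbol{\theta}_{LG}}$ and map coordinates), then check directly that each row block ${}^{in}\mathbf{H}_{L_t}{}^{in}\boldsymbol{\Phi}_{t|0}$, ${}^{in}\mathbf{H}_{G_t}^{C_j}{}^{in}\boldsymbol{\Phi}_{t|0}$, ${}^{in}\mathbf{H}_{G_t}^{KF_ij}{}^{in}\boldsymbol{\Phi}_{t|0}$ annihilates the ten candidate directions via the identity $\mathbf{R}(\mathbf{a})_{\times}\mathbf{R}^{\top}=(\mathbf{R}\mathbf{a})_{\times}$.

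Where your proposal differs from the paper is in pinning the dimension down to exactly ten. The paper's appendix argues by direct inspection of the stacked block rows of ${}^{in}\mathbf{M}^{*}$, whereas you invoke the invertible linear map $\mathbf{J}_t$ between the standard error $\tilde{\mathbf{x}}_{s_t}^{*}$ and the right-invariant error $\boldsymbol{\epsilon}_t^{*}$ to obtain ${}^{in}\mathbf{M}^{*}={}^{st}\mathbf{M}^{*}\mathbf{J}_0^{-1}$ and then import the rank from Theorem~\ref{theorem: ideal imperfect ekf}. Your route is shorter and conceptually cleaner: it makes explicit that the ideal invariant and standard observability matrices always share the same rank (and that ${}^{in}\mathcal{N}_1^{*}=\mathbf{J}_0\,{}^{st}\mathcal{N}_1^{*}$), so the ten-dimensional answer is not a coincidence but a coordinate-free fact about the underlying nonlinear system. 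The paper's direct computation, on the other hand, is self-contained and does not rely on having already established Theorem~\ref{theorem: ideal imperfect ekf}. Either argument is acceptable; your anticipated ``main obstacle'' dissolves once you commit to the $\mathbf{J}_0$ trick.
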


\begin{proof}
     See \textbf{Appendix F} of the supplementary material\cite{supplementary}.
\end{proof}
Different from the unobservable subspace of the  \textit{perfect augmented system} with invariant EKF (cf. (\ref{eq:null_space_om_inekf})), the elements of the last six columns of $^{in}\mathcal{N}_{1}^{*}$ include ever-changing variables like the standard EKF system in Sec. \ref{sec:ob imperfect std ekf}. Therefore, these six dimensions of the unobservable subspace will be broken for the real case:

\begin{theorem}
\label{theorem: real imperfect iekf}
(Real observability of \textit{imperfect augmented system} with invariant EKF) The right null space $^{in}\mathcal{N}_{2}^{*}$ of the observability matrix ${}^{in}\mathbf{M}^{*}$, where the Jacobian matrices are evaluated at changing estimated values, is spanned by four directions as
\begin{equation}\label{eq:null_space_om_inekf imperfect real}
\begin{aligned}
     &{}^{in}\mathcal{N}^{*}_{2}=\\
     &span\left(\left[\begin{array}{lllllllll}\mathbf{g}^{\top}\,\mathbf{0}_{1\times3}\,\mathbf{0}_{1\times 3}\,\mathbf{0}_{1\times3}\,\mathbf{0}_{1\times3}\,\mathbf{g}^{\top}\,\mathbf{0}_{1\times3}\,\mathbf{0}_{1\times3}\,\mathbf{0}_{1\times3}\\
    \mathbf{0}_{3}\ \ \; \mathbf{0}_{3}\ \ \ \; \mathbf{I}_{3}\ \ \ \; \mathbf{I}_{3}\ \ \ \, \mathbf{I}_{3}\ \ \, \mathbf{0}_{3}\ \ \, \mathbf{0}_{3}\ \ \ \, \mathbf{0}_{3}\ \ \ \,\mathbf{0}_{3}
    \end{array}\right]^{\top}\right).
\end{aligned}
\end{equation}
\end{theorem}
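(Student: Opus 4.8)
The plan is to bootstrap from Theorem~\ref{theorem: ideal imperfect iekf}: that result already exhibits the ten ideal unobservable directions of (\ref{eq:null_space_om_inekf imperfect ideal}), so it suffices to decide which of them remain in the right null space of ${}^{in}\mathbf{M}^{*}$ once the observation Jacobians (\ref{eq:imperfect jacob c})--(\ref{eq:imperfect jacob kf}) and the transition matrix are evaluated at the time-varying estimates ${}^{L}\hat{\mathbf{R}}_{I_k},{}^{L}\hat{\mathbf{v}}_{I_k},{}^{L}\hat{\mathbf{p}}_{I_k},{}^{L}\hat{\mathbf{R}}_{G_k},{}^{G}\hat{\mathbf{p}}_{F,k}$ instead of at a single ground-truth point. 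The structural fact I would exploit --- already used for the perfect case in Theorem~\ref{theorem: perfect iekf} and guaranteed by Theorem~1 of \cite{c2} --- is that the invariant error kinematics (\ref{eq:linearized_propagation}) has a \emph{constant} block $\mathbf{A}_{\mathbf{X}_{A_t}}$, so ${}^{in}\boldsymbol{\Phi}_{k|0}$ is state-independent; hence the only ``spurious information'' can enter through the factors ${}^{L}\hat{\mathbf{R}}_{G_k}$ and ${}^{G}\hat{\mathbf{p}}_{F,k}$ that appear in the map-related columns of (\ref{eq:imperfect jacob c}) and (\ref{eq:imperfect jacob kf}).

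First I would verify that the four directions in (\ref{eq:null_space_om_inekf imperfect real}) still lie in the null space. The local-feature rows ${}^{in}\mathbf{H}_{L_k}{}^{in}\boldsymbol{\Phi}_{k|0}$ annihilate them by the same computation as in the perfect system, since (\ref{eq:local_H}) is state-independent on its nonzero columns and the two directions restrict on the local block to the generators of (\ref{eq:null_space_om_inekf}). For the current-frame-to-map rows (\ref{eq:imperfect jacob c}), the global-translation block ($\mathbf{I}_3$ on ${}^{L}\mathbf{p}_I,{}^{L}\mathbf{p}_f,{}^{L}\mathbf{p}_G$) is killed because the $-\mathbf{I}_3$ entry on ${}^{L}\mathbf{p}_G$ cancels the $\mathbf{I}_3$ entry on ${}^{L}\mathbf{p}_I$, while the global-yaw column ($\mathbf{g}$ on ${}^{L}\boldsymbol{\theta}_I$ and on ${}^{L}\boldsymbol{\theta}_G$) is killed because $-({}^{L}\hat{\mathbf{R}}_{G_k}{}^{G}\hat{\mathbf{p}}_{F,k})_\times\mathbf{g}+({}^{L}\hat{\mathbf{R}}_{G_k}{}^{G}\hat{\mathbf{p}}_{F,k})_\times\mathbf{g}=\mathbf{0}$ and ${}^{in}\boldsymbol{\Phi}_{k|0}$ leaves this pair fixed (using $(\mathbf{g})_\times\mathbf{g}=\mathbf{0}$ and the static kinematics of ${}^{L}\mathbf{R}_G$). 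The keyframe rows (\ref{eq:imperfect jacob kf}) annihilate both directions trivially, as both vanish on ${}^{G}\boldsymbol{\theta}_{KF},{}^{G}\mathbf{p}_{KF},{}^{G}\mathbf{p}_{F}$, the only nonzero columns of (\ref{eq:imperfect jacob kf}).

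The substance is to show the null space has dimension \emph{exactly} four. I would take a generic $\mathbf{n}^{*}=[\mathbf{n}_{\theta I}^{\top}\,\mathbf{n}_{vI}^{\top}\,\mathbf{n}_{pI}^{\top}\,\mathbf{n}_{pf}^{\top}\,\mathbf{n}_{pG}^{\top}\,\mathbf{n}_{\theta G}^{\top}\,\mathbf{n}_{\theta KF}^{\top}\,\mathbf{n}_{pKF}^{\top}\,\mathbf{n}_{pF}^{\top}]^{\top}$ in the right null space of ${}^{in}\mathbf{M}^{*}$ and peel off constraints block by block. (i) The local-feature rows touch only the local VIO sub-block, whose invariant-EKF null space is the state-independent $4$-D space spanned by $(\mathbf{g},\mathbf{0},\mathbf{0},\mathbf{0})$ and $(\mathbf{0},\mathbf{0},\mathbf{I}_3,\mathbf{I}_3)$; hence $\mathbf{n}_{\theta I}=\alpha\mathbf{g}$, $\mathbf{n}_{vI}=\mathbf{0}$, $\mathbf{n}_{pI}=\mathbf{n}_{pf}=\boldsymbol{\beta}$ for some $\alpha\in\mathbb{R}$, $\boldsymbol{\beta}\in\mathbb{R}^{3}$. (ii) Stripping the (generically injective, after stacking enough views) prefactor from (\ref{eq:imperfect jacob kf}), the keyframe rows give $\mathbf{n}_{pKF}-\mathbf{n}_{pF}=({}^{G}\hat{\mathbf{p}}_{F,k})_\times\mathbf{n}_{\theta KF}$ for each observation time $k$; since ${}^{G}\hat{\mathbf{p}}_{F,k}$ changes with $k$, the right-hand side must be constant, forcing $\mathbf{n}_{\theta KF}=\mathbf{0}$ and $\mathbf{n}_{pKF}=\mathbf{n}_{pF}=:\boldsymbol{\gamma}$. (iii) Substituting the above into (\ref{eq:imperfect jacob c}) and using that ${}^{in}\boldsymbol{\Phi}_{k|0}$ fixes all of these components, stripping the prefactor yields
\begin{equation}\nonumber
({}^{L}\hat{\mathbf{R}}_{G_k}{}^{G}\hat{\mathbf{p}}_{F,k})_\times(\mathbf{n}_{\theta G}-\alpha\mathbf{g})+\boldsymbol{\beta}-\mathbf{n}_{pG}-{}^{L}\hat{\mathbf{R}}_{G_k}\boldsymbol{\gamma}=\mathbf{0},\quad k=1,2,\ldots
\end{equation}
Because ${}^{L}\hat{\mathbf{R}}_{G_k}$ and ${}^{G}\hat{\mathbf{p}}_{F,k}$ vary with $k$ while $\boldsymbol{\beta}-\mathbf{n}_{pG}$ is constant, subtracting this identity at distinct times forces $\mathbf{n}_{\theta G}=\alpha\mathbf{g}$ and $\boldsymbol{\gamma}=\mathbf{0}$, hence $\mathbf{n}_{pG}=\boldsymbol{\beta}$ and $\mathbf{n}_{pKF}=\mathbf{n}_{pF}=\mathbf{0}$. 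Thus $\mathbf{n}^{*}=\alpha[\mathbf{g}^{\top}\,\mathbf{0}\,\mathbf{0}\,\mathbf{0}\,\mathbf{0}\,\mathbf{g}^{\top}\,\mathbf{0}\,\mathbf{0}\,\mathbf{0}]^{\top}+[\mathbf{0}\,\mathbf{0}\,\boldsymbol{\beta}^{\top}\,\boldsymbol{\beta}^{\top}\,\boldsymbol{\beta}^{\top}\,\mathbf{0}\,\mathbf{0}\,\mathbf{0}\,\mathbf{0}]^{\top}$, which is exactly the span claimed in (\ref{eq:null_space_om_inekf imperfect real}).

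The step I expect to be the main obstacle is (iii), together with the non-degeneracy hypothesis it needs: one must make precise that the estimates ${}^{L}\hat{\mathbf{R}}_{G_k}$ and ${}^{G}\hat{\mathbf{p}}_{F,k}$ genuinely change across updates --- the ``ever-changing estimation of the \emph{augmented variable}'' that the paper identifies as the root of the observability deficiency --- so that no vector outside the four state-independent directions survives all the stacked constraints (in particular ruling out a conspiracy between the two varying terms in the displayed identity requires enough independent data, as in the appendices). Everything else --- carrying the ${}^{in}\boldsymbol{\Phi}_{k|0}$-propagation through (\ref{eq:imperfect jacob c}), and the auxiliary fact in (i) that the invariant VIO sub-block has a state-independent $4$-D null space --- is routine given Theorems~\ref{theorem: perfect iekf} and \ref{theorem: ideal imperfect iekf}.
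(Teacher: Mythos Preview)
Your approach is correct and in fact supplies considerably more than the paper does: the paper simply states that the proof ``is straightforward and is omitted,'' relying on the sentence preceding the theorem --- that the last six columns of ${}^{in}\mathcal{N}^{*}_{1}$ in (\ref{eq:null_space_om_inekf imperfect ideal}) contain the ever-changing quantities ${}^{L}\mathbf{R}_{G}$ and ${}^{G}\mathbf{p}_{F}$ and are therefore broken, whereas the first four columns are constant and hence survive. Your steps (i)--(iii) make precisely this heuristic rigorous, and your identification of the non-degeneracy hypothesis in (iii) (that the estimates ${}^{L}\hat{\mathbf{R}}_{G_k}$ and ${}^{G}\hat{\mathbf{p}}_{F,k}$ genuinely vary) is the honest counterpart of the paper's phrase ``ever-changing estimated values.''
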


The proof of \textbf{Theorem \ref{theorem: real imperfect iekf}} is straightforward and is omitted. 

\subsection{Intuitive explanations toward observability}
In this subsection, combining Fig. \ref{fig:intuitive explanation}, intuitive explanations toward the observability of the \textit{(im)perfect augmented system} are given in order to give readers a better understanding of the unobservable subspace of the \textit{augmented systems}.

In Fig. \ref{fig:intuitive explanation}, there are four kinds of frames, i.e. $C,\,L,\,G$ and $KF$. The orange dots represent map features. The dotted-curved arrows represent the relative poses or positions between different frames (for example, the dotted-curved arrow between a map feature and frame $G$ means the position of the map feature in the frame $G$). The dotted-straight lines represent the observation of $KF$ or $C$ toward map features. The black arrows/lines mean that they are fixed variables and should be independent of a transformation (the purple curved-arrow) applied to the system, while the grey arrows represent the variables to be estimated and will be changed when applying a transformation to the system.

\begin{figure}[t!]
    \centering
    \setlength{\abovecaptionskip}{0cm}
    \includegraphics[width=1\linewidth]{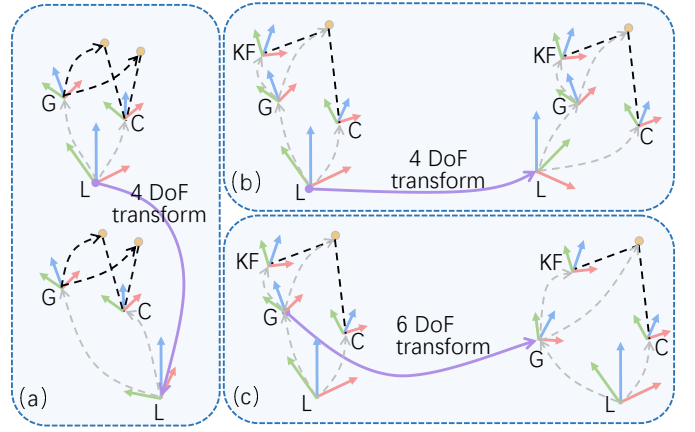}
     \caption{Intuitive explanation toward observability. (a) For the \textit{perfect augmented system}, applying a 4 DoF transformation to the frame $L$ does not change the observations of $C$ and map features positions in $G$. (b) For the \textit{imperfect augmented system}, applying a 4 DoF transformation to frame $L$ does not change the observations of $KF$ and $C$. (c) For the \textit{imperfect augmented system}, applying a 6 DoF transformation to frame $G$ does not change the observations of $KF$ and $C$.}
     \label{fig:intuitive explanation}
     \vspace{-0.6cm}
\end{figure}

\textbf{Perfect augmented system:} For the ideal case, the \textit{perfect augmented system} has the same unobservable dimensions as the standard visual-inertial system. This result is not surprising. We can understand the proposed result by regarding the \textit{augmented variable} as another 6 DoF pose feature in the odometry frame and never being marginalized. In this way, the \textit{perfect augmented system} is equivalent to a visual-inertial SLAM system, demonstrating similar properties. Fig. \ref{fig:intuitive explanation} (a) depicts the four dimensions of the \textit{perfect augmented system} that are not observable. When a 4 DoF transformation (one for the rotation along the gravity axis and three for the translation) is applied to the frame $L$, the pose of $C$ and $G$ in frame $L$ can be adjusted (see the changed gray dotted-curved arrows in Fig. \ref{fig:intuitive explanation} (a)) so that the observation of $C$ (the black dotted-straight lines) and the map feature positions (the black dotted-curved arrows) does not change. As the system is gravity aligned, the dimension of the applied transformation is four (three dimensions for translation, one dimension for rotation along the gravity axis), which is the dimension of the unobservable subspace. The similar conclusion can also be find in \cite{gpsvio}, where the authors argue that even with GPS-based global measurements, the unobservable subspace of the system (i.e., the \textit{perfect augmented system} in this paper) has 4 dimensions. 

Following this idea, for real case, when standard EKF is employed, the observability deficiency of the \textit{perfect augmented system} would be the same as the visual-inertial SLAM system, i.e., along one direction (rotation around gravity direction)\cite{consistent2}, which is consistent with \textbf{Theorem 2}. However, when invariant EKF is employed, the correct observability can be maintained naturally as indicated by \textbf{Theorem 3}, which is according with other invariant EKF based VIO systems\cite{consistent3,consistent4}.

\textbf{Imperfect augmented system:} For the \textit{imperfect augmented system}, the map-related parts can be treated as another visual SLAM system where keyframe poses and features are considered, and the dimensions of the unobservable subspace of the visual SLAM system are six. From this point of view, it is easy to understand why the \textit{imperfect augmented system} has another six unobservable directions compared with the \textit{perfect augmented system}. Fig. \ref{fig:intuitive explanation} (b) and (c) provide good insights toward this conclusion. In Fig. \ref{fig:intuitive explanation} (b), we apply a 4 DoF transformation (one for the rotation along the gravity axis and three for the translation) to frame $L$ while keeping the observations of $KF$ and $C$ unchanged. In Fig. \ref{fig:intuitive explanation} (c), a 6 DoF transformation is applied to frame $G$, while by adjusting the pose of $KF$ and the position of map features in frame $G$, the observations of $KF$ and $C$ are unchanged.

\section{\textcolor{black}{Schmidt invariant Kalman filter with multi-state and observability constraints}}

From Sec. \ref{sec:IEKF perfect} and Sec. \ref{sec:ob perfect ikf}, we have an invariant EKF algorithm that can naturally maintain the consistency of the \textit{perfect augmented system} with the theoretical guarantee. However, it is not enough for practical application. On the one hand, as map information is not perfect, it is crucial to fuse the map information consistently and efficiently --- the constructed framework should consider the uncertainty of the map information while keeping the cost of the computation and the storage at a low level. On the other hand, only maintaining the robot pose at the current timestamp is not sufficient, because in this way informative historical data fail to constrain the estimation of the current state --- a sliding window based (multi-state constrained) technique is preferred to get more accurate estimation.
From these two aspects, we propose a multi-state observability constrained Schmidt invariant Kalman filter (MSOC-S-IKF).


\subsection{Observability constrained Schmidt invariant filter for imperfect map} \label{sec:oc}
In this part, to integrate the map information consistently and efficiently, Schmidt filter is introduced to consider the uncertainty of the map information efficiently, while a observability constrained technique is proposed to maintain the correct observability of the \textit{imperfect augmented system}. 

\textbf{Schmidt invariant filter:} To consider the uncertainty of the map information, we need to add map-related variables into the state. If we put all the map features, the number of which can be tens of thousands, into the state of the \textit{augmented system} like (\ref{eq:imperfect state}), it requires significant computation to update state and large storage to record covariance. Therefore, we keep the map keyframe poses and their observations toward features instead. In this way, we can perform the null space projection toward map features like (\ref{eq:ob_msckf_final}) so that the uncertainty of the map feature can be taken into consideration. 

Based on the above idea, the state of the \textit{augmented system} $\mathbf{X}_t$ in (\ref{eq:state}) will be augmented as
\begin{equation}\label{eq:state_sch}
    \mathbf{X}_{SCH_t}=(\mathbf{X}_{A_t},\mathbf{B}_{t},\mathbf{X}_{KF_t}),
\end{equation}
where $\mathbf{X}_{KF_t}$ contains matched map keyframe poses $\{\mathbf{X}_{KF_1,t},\cdots,\mathbf{X}_{KF_m,t}\}$ defined in (\ref{eq:imperfect state}).

For brevity, we divide (\ref{eq:state_sch}) into two parts: the active part $\mathbf{X}_{a_t}\triangleq (\mathbf{X}_{A_t},\mathbf{B}_t)$ and the nuisance part $\mathbf{X}_{n_t}\triangleq\mathbf{X}_{KF_t}$. This partition is very useful for Schmidt-EKF updating, as illustrated in the following.

When there is a matching between the current frame and a map keyframe, we have the observation function (\ref{eq:imperfect global_ob_c}) and (\ref{eq:imperfect global_ob_kf}). Linearizing these two functions, we have
\begin{equation}\label{eq:ob_error_map_1}
\begin{aligned}
     \mathbf{r}_{G_t}^{C_j}&=\mathbf{y}_{G_t}^{C_j}-\hat{\mathbf{y}}_{G_t}^{C_j}\\
     &=-\mathbf{H}_{a_t}^{C_j}\boldsymbol{\epsilon}_{a_t}-\mathbf{H}_{F_j,t}^{C_j}\boldsymbol{\epsilon}_{F_j,t}+\boldsymbol{\gamma}_{SCH_t}^{j},
\end{aligned}
\end{equation}
\begin{equation}\label{eq:ob_error_map_2}
\begin{aligned}
     \mathbf{r}_{G_t}^{KF_ij}&=\mathbf{y}_{G_t}^{KF_ij}-\hat{\mathbf{y}}_{G_t}^{KF_ij}\\
     &=-\mathbf{H}_{n_t}^{KF_ij}\boldsymbol{\epsilon}_{n_t}-\mathbf{H}_{F_j,t}^{KF_ij}\boldsymbol{\epsilon}_{F_j,t}+\boldsymbol{\gamma}_{SCH_i}^{j}.
\end{aligned}
\end{equation}
where $\boldsymbol{\epsilon}_{a_t}$ and $\boldsymbol{\epsilon}_{n_t}$ are the errors of the active part and the nuisance part, respectively,
$\boldsymbol{\epsilon}_{F_j,t}$ is the error of the map feature $F_j$, and $\boldsymbol{\gamma}_{SCH_i}^{j}$ and $\boldsymbol{\gamma}_{SCH_t}^{j}$ are observation noises.
The Jacobian matrices $\mathbf{H}_{a_t}^{C_j}$ and $\mathbf{H}_{F_j,t}^{C_j}$ form the Jacobian matrix defined by (\ref{eq:imperfect jacob c}). $\mathbf{H}_{n_t}^{KF_ij}$ and $\mathbf{H}_{F_j,t}^{KF_ij}$ form the Jacobian matrix defined by (\ref{eq:imperfect jacob kf}). For the sake of keeping the correct observability of the \textit{imperfect augmented system}, we employ the observability constrained (OC) technique to modify the value of these observation Jacobian matrices. The details of the OC technique will be given later.

By stacking (\ref{eq:ob_error_map_1}) and (\ref{eq:ob_error_map_2}), and performing null space projection toward $F_j$, we have
\begin{equation}\label{eq:stacking obs}
    \begin{aligned}
      \mathbf{r}_{G_t}^{j}&=-\mathbf{H}_{a_t}^{j}\boldsymbol{\epsilon}_{a_t}-\mathbf{H}_{n_t}^{j}\boldsymbol{\epsilon}_{n_t}-\mathbf{H}_{F_j,t}^{j}\boldsymbol{\epsilon}_{F_j,t}+\boldsymbol{\gamma}_{SCH}^{j}\\
      \mathcal{N}_{F_j}\mathbf{r}_{G_t}^{j}&=-\mathcal{N}_{F_j}\mathbf{H}_{a_t}^{j}\boldsymbol{\epsilon}_{a_t}-\mathcal{N}_{F_j}\mathbf{H}_{n_t}^{j}\boldsymbol{\epsilon}_{n_t}-\mathcal{N}_{F_j}\mathbf{H}_{F_j,t}^{j}\boldsymbol{\epsilon}_{F_j,t}\\&\quad+\mathcal{N}_{F_j}\boldsymbol{\gamma}_{SCH}^{j}\\
      \mathbf{r}_{G_t}^{j\prime}&=-\mathbf{H}_{a_t}^{j\prime}\boldsymbol{\epsilon}_{a_t}-\mathbf{H}_{n_t}^{j\prime}\boldsymbol{\epsilon}_{n_t}+\boldsymbol{\gamma}_{SCH}^{j\prime}\\
      &=-\begin{bmatrix}
      \mathbf{H}_{a_t}^{j\prime}&\mathbf{H}_{n_t}^{j\prime}
      \end{bmatrix}\begin{bmatrix}
      \boldsymbol{\epsilon}_{a_t}\\
      \boldsymbol{\epsilon}_{n_t}
      \end{bmatrix}+\boldsymbol{\gamma}_{SCH}^{j\prime}\\
      &\triangleq -\mathbf{H}_{X_{SCH_t}}^{j}\boldsymbol{\epsilon}_{X_{SCH_t}}+\boldsymbol{\gamma}_{SCH}^{j\prime},
    \end{aligned}
\end{equation}
where $\mathbf{r}_{G_t}^{j}$ is the stacking vector of $\mathbf{r}_{G_t}^{C_j}$ and $ \mathbf{r}_{G_t}^{KF_ij}$, $\mathbf{H}_{a_t}^{j}$, $\mathbf{H}_{n_t}^{j}$, $\mathbf{H}_{F_j,t}^{j}$ and $\boldsymbol{\gamma}_{SCH}^{j}$ are with the similar formulations, and $\mathcal{N}_{F_j}$ is the left null space of $\mathbf{H}_{F_j,t}^{j}$. Note that (\ref{eq:stacking obs}) only considers one map feature. In practice, at timestamp $t$, there would be more than one map feature that can be observed. By stacking these observation functions together, we have:
\begin{equation}\label{eq: final schmidt ob}
     \mathbf{r}_{G_t}^{\prime}= -\mathbf{H}_{X_{SCH_t}}\boldsymbol{\epsilon}_{X_{SCH_t}}+\boldsymbol{\gamma}_{SCH}^{\prime},
\end{equation}
which is used to perform Schmidt updating.

Since the state $\mathbf{X}_{SCH_t}$ can be divided as $\mathbf{X}_{SCH_t}=(\mathbf{X}_{a_t},\mathbf{X}_{n_t})$, its covariance can also be partitioned as
\begin{equation}
    \mathbf{P}_t=\begin{bmatrix}
    \mathbf{P}_{aa_t}&\mathbf{P}_{an_t}\\
    \mathbf{P}_{na_t}&\mathbf{P}_{nn_t}\\
    \end{bmatrix}.
\end{equation}
 Then, the following equations are employed to update state, which is so-called Schmidt updating:
 \begin{align}
     \mathbf{S}_{t}&=\mathbf{H}_{X_{SCH_t}}\mathbf{P}_t\mathbf{H}_{X_{SCH_t}}^{\top}+\mathbf{V}_{SCH_t}^{\prime},\\ 
     \mathbf{K}_t&=\left[\begin{matrix}\mathbf{K}_{\mathbf{X}_{{A}_{t}}}\\\mathbf{K}_{\mathbf{B}_t}\\\mathbf{K}_{\mathbf{X}_{{n}_{t}}}\end{matrix}\right]=\left[\begin{matrix}\mathbf{K}_{\mathbf{X}_{a_t}}\\\mathbf{K}_{\mathbf{X}_{n_t}}\end{matrix}\right]\notag\\ &= \left[\begin{matrix}\mathbf{P}_{aa_{t|t-1}}\mathbf{H}_{a_t}^{\prime\top} + \mathbf{P}_{an_{t|t-1}}\mathbf{H}_{n_t}^{\prime\top}\\\mathbf{P}_{na_{t|t-1}}\mathbf{H}_{a_t}^{\prime\top} + \mathbf{P}_{nn_{t|t-1}}\mathbf{H}_{n_t}^{\prime\top}\end{matrix}\right]\mathbf{S}_{t}^{-1},
\end{align}
\vspace{-0.5cm}
\begin{align}
    &\mathbf{P}_{t}=\mathbf{P}_{t|t-1}- \notag\\
      &\left[\begin{matrix} \mathbf{K}_{\mathbf{X}_{a_t}}\mathbf{S}_t\mathbf{K}_{\mathbf{X}_{a_t}}^{\top}& \mathbf{K}_{\mathbf{X}_{a_t}}\mathbf{H}_{X_{SCH_t}}\left[\begin{matrix} \mathbf{P}_{an_{t|t-1}}\\\mathbf{P}_{nn_{t|t-1}}
    \end{matrix}\right]\\
    \left[\begin{matrix} \mathbf{P}_{an_{t|t-1}}\\\mathbf{P}_{nn_{t|t-1}}\\
    \end{matrix}\right]^{\top} \mathbf{H}_{X_{SCH_t}}^{\top} \mathbf{K}_{\mathbf{X}_{a_t}}^{\top}& \mathbf{0}\end{matrix}\right],\label{eq:sch_p}\\
     &\hat{\mathbf{X}}_{a_t}=\begin{bmatrix}
     \hat{\mathbf{X}}_{A_{t}}\\
     \hat{\mathbf{B}}_{t}
     \end{bmatrix}=\begin{bmatrix}
     \mathrm{exp}(\mathbf{K}_{\mathbf{X}_{A_t}}\mathbf{r}_{SCH_t}^{\prime})\hat{\mathbf{X}}_{A_{t|t-1}}\\
      \hat{\mathbf{B}}_{t|t-1}+\mathbf{K}_{\mathbf{B}_t}\mathbf{r}_{SCH_t}^{\prime}
     \end{bmatrix},\\
    &\qquad\qquad\hat{\mathbf{X}}_{n_t}=\hat{\mathbf{X}}_{n_{t|t-1}},
 \end{align}
where $\mathbf{V}_{SCH_t}^{\prime}$ is the covariance of the noise $\boldsymbol{\gamma}_{SCH}^{\prime}$ in (\ref{eq: final schmidt ob}).

We can see that the active part is updated in the form of invariant EKF as shown in Algorithm \ref{alg:invariant_EKF}, while the nuisance part (map keyframes) remains unchanged.


\textbf{Observability maintenance:} Reviewing \textbf{Theorem \ref{theorem: ideal imperfect iekf}} and \textbf{Theorem \ref{theorem: real imperfect iekf}}, we can find that due to the ever-changing estimated values of $^{L}\mathbf{R}_{G_t}$ and $^{G}\mathbf{p}_{F_t}$, the last six columns of (\ref{eq:null_space_om_inekf imperfect ideal}) vanish for the real case. In the framework of Schmidt updating, the value of $^{G}\mathbf{p}_{F_t}$ stays the same, whereas the value of $^{L}\mathbf{R}_{G_t}$ is not the case. An simple and intuitive way to maintain the correct dimension of the system unobservable subspace is to define the null space of the observability matrix (\ref{eq:null_space_om_inekf imperfect ideal}) based on the first-estimated value of $^{L}\mathbf{R}_{G_t}$, i.e., $^{L}\hat{\mathbf{R}}_{G_0}$, and compute the Jacobian matrix (\ref{eq:imperfect jacob c}) with  $^{L}\hat{\mathbf{R}}_{G_0}$. However, as indicated by \cite{oc}, the performance of this solution relies heavily on $^{L}\hat{\mathbf{R}}_{G_0}$ because $^{L}\mathbf{R}_{G_0}$ is used at all time steps to compute Jacobians. If $^{L}\hat{\mathbf{R}}_{G_0}$ is far from the true value, the system's performance would be degraded. Therefore, we design an OC technique following the idea of \cite{oc} to search the optimal values of Jacobians while maintaining the correct unobservable subspace. 

The key idea of the OC technique is to select a suitable null space $^{in}\mathcal{N}^{*}_3$ of the observability matrix $^{in}\mathbf{M}^{*}$ and suitable Jaocbians $^{in}\mathbf{H}_{L_i}^{*}$, $^{in}\mathbf{H}_{G_i}^{*}$ and $^{in}\boldsymbol{\Phi}_{i|0}^{*}$ (cf. (\ref{om})) so as to fulfill the following conditions\cite{oc design}:
\begin{equation}\label{eq:condition1}
     {}^{in}\mathbf{H}_{L_i}^{*}{}^{in}\mathcal{N}^{*}_3=\mathbf{0},
\end{equation}
\begin{equation}\label{eq:condition2}
       {}^{in}\mathbf{H}_{G_i}^{*}{}^{in}\mathcal{N}^{*}_3=\mathbf{0},
\end{equation}
\begin{equation}\label{eq:condition3}
        {}^{in}\mathcal{N}^{*}_3={}^{in}\boldsymbol{\Phi}_{i|0}^{*}{}^{in}\mathcal{N}^{*}_3.
\end{equation}

A natural and realizable choice of $^{in}\mathcal{N}^{*}_3$ (cf. (\ref{eq:null_space_om_inekf imperfect ideal})) is given by
\begin{equation}\label{eq:null_space_om_inekf imperfect real oc}
    {}^{in}\mathcal{N}^{*}_{3}=span\left(\begin{bmatrix}\mathbf{g}&\mathbf{0}_{3}&\mathbf{0}_3&\mathbf{0}_3\\
    \mathbf{0}_{3\times1}&\mathbf{0}_{3}&\mathbf{0}_3&\mathbf{0}_3\\
    \mathbf{0}_{3\times1}&\mathbf{I}_{3}&\mathbf{0}_3&\mathbf{0}_3\\
    \mathbf{0}_{3\times1}&\mathbf{I}_{3}&\mathbf{0}_3&\mathbf{0}_3\\
    \mathbf{0}_{3\times1}&\mathbf{I}_{3}&-{}^{L}\hat{\mathbf{R}}_{G_0}&\mathbf{0}_3\\
    \mathbf{g}&\mathbf{0}_{3}&\mathbf{0}_3&\mathbf{I}_3\\
    \mathbf{0}_{3\times1}&\mathbf{0}_{3}&\mathbf{0}_3&-{}^{L}\hat{\mathbf{R}}_{G_0}^{\top}\\
    \mathbf{0}_{3\times1}&\mathbf{0}_{3}&\mathbf{I}_3&\mathbf{0}_3\\
    \mathbf{0}_{3\times1}&\mathbf{0}_{3}&\mathbf{I}_3&(^{G}\mathbf{p}_{F})_{\times}{}^{L}\hat{\mathbf{R}}_{G_0}^{\top}\\
    \end{bmatrix}\right).
\end{equation}

\textcolor{black}{It is worth mentioning that the OC technique is a general and useful way to make the estimator consistent. Even for the standard EKF, as long as the conditions (\ref{eq:condition1})--(\ref{eq:condition3}) are satisfied, the whole system can be consistent. However, we argue that a good estimator should introduce as few artificial-designed constraints as possible. Therefore, we still choose to formulate the state error in the nonlinear form as (\ref{eq:nonlinear_error}) instead of the standard error form. In this way, the special observability constraint design for the transformation matrix $^{in}\boldsymbol{\Phi}_{i|0}^{*}$ and the local observation Jacobian matrix $^{in}\mathbf{H}_{L_i}^{*}$ can be avoided. Readers can verify that, 
with (\ref{eq:local_H}) and $^{in}\boldsymbol{\Phi}_{i|0}^{*}$ (the detailed expression is given in \textbf{Appendix F} of the supplementary material \cite{supplementary} ), (\ref{eq:condition1}) and (\ref{eq:condition3}) are naturally satisfied. Therefore, we only need to design an OC technique satisfying (\ref{eq:condition2}).} Further, since ${}^{in}\mathbf{H}_{G_i}^{*}$ consists of (\ref{eq:imperfect jacob c}) and (\ref{eq:imperfect jacob kf}), and (\ref{eq:imperfect jacob kf}) automatically meets the condition (\ref{eq:condition2}), our problem is simplified to modifying (\ref{eq:imperfect jacob c}) such that the linearization error toward (\ref{eq:imperfect global_ob_c}) is minimum while the observability of the \textit{imperfect augmented system} is correctly maintained:
\begin{equation}\label{eq:simplified condition}
    \min_{\mathbf{H}_{G_i}^{C^{*}}} ||\mathbf{H}_{G_i}^{C^{*}}-\mathbf{H}_{G_i}^{C}||_{\mathcal{F}}^{2}, \text{ s.t. }  \mathbf{H}_{G_i}^{C^{*}}{}^{in}\mathcal{N}^{*}_3=\mathbf{0},
\end{equation}
where $||\cdot||_{\mathcal{F}}$ denotes the Frobenius matrix norm. $\mathbf{H}_{G_i}^{C}$ is calculated by (\ref{eq:imperfect jacob c}) with only one map feature and one map keyframe being considered. The optimal $\mathbf{H}_{G_i}^{C^{*}}$ that fulfills (\ref{eq:simplified condition}) is given by
\begin{equation}\label{eq:optimal imperfect jacob c}
    \mathbf{H}_{G_i}^{C^{*}} = \mathbf{H}_{G_i}^{C}-\mathbf{H}_{G_i}^{C}{}^{in}\mathcal{N}^{*}_3({}^{in}\mathcal{N}^{*^{\top}}_3{}^{in}\mathcal{N}^{*}_3)^{-1}{}^{in}\mathcal{N}^{*^{\top}}_3.
\end{equation}
Replacing (\ref{eq:imperfect jacob c}) with (\ref{eq:optimal imperfect jacob c}) while computing Jacobians, the correct unobservable subspace ${}^{in}\mathcal{N}^{*}_3$ of $^{in}\mathbf{M}^{*}$ is preserved.

\textbf{Complexity analysis:}
Suppose a map has $m$ keyframes and $n$ features ($m \ll n$). In terms of storage, if we maintain all the features of the map, then the complexity of the storage will be $\mathcal{O}(n^{2})$. Instead, if we maintain the map keyframe poses as mentioned before, the complexity of the storage will be $\mathcal{O}(m^{2}+n)$, which is much more storage-saving. 

In terms of computation, the main difference between the invaraint EKF and the invaraint Schmidt EKF is the covariance update (\ref{eq:sch_p}). For invariant EKF, $\mathbf{0}$ in (\ref{eq:sch_p}) is replaced by $\mathbf{K}_{\mathbf{X}_{n_t}}\mathbf{S}_{t}\mathbf{K}_{\mathbf{X}_{n_t}}^{\top}$, whose computational complexity is quadratic of the size of nuisance part ($\mathcal{O}(m^2)$). This limits the real-time performance of the \textit{augmented system}. In contrast, the computational complexity of (\ref{eq:sch_p}) is linear ($\mathcal{O}(m)$).
\vspace{-2mm}

\subsection{Multi-state constrained invariant filter}\label{sec:msc}
Multi-state constrained filter is designed for improving the performance of visual-inertial odometry\cite{msckf}, and in our system it is applied to the local feature based observation function (cf.(\ref{eq:ob_local})). 

\textbf{State with multi-state constraint:} Suppose we maintain a sliding window with the size of $s$, where the historical IMU poses are saved. Then, the state of the \textit{augmented system} $\mathbf{X}_{t}$ in (\ref{eq:state}) will be augmented by cloned poses as
\begin{equation}
    \label{eq:state_with_clone}
    \mathbf{X}_{MSC_t}=(\mathbf{X}_{A_t},\mathbf{B}_{t}, {}^{I}\mathbf{T}_{C_t},\mathbf{X}_{Clone_t}),
\end{equation}
where $^{I}\mathbf{T}_{C_t}$ is the extrinsic between the camera and the IMU composed of $\{{}^{L}\mathbf{R}_{C_t},{}^{L}\mathbf{p}_{C_t}\}$. We estimate this variable online for better localization results. $\mathbf{X}_{Clone_t}$ contains cloned $s$ IMU poses $\{\mathbf{X}_{CP_t},\cdots,\mathbf{X}_{CP_t-s+1}\}$. For each cloned IMU pose $\{\mathbf{X}_{CP_i}\}$, it consists of  $\{{}^{L}\mathbf{R}_{I_i}, {}^{L}\mathbf{p}_{I_i}\}$. Following the definition of (\ref{eq:nonlinear_error}), the \textit{right-invariant error} of $^{I}\mathbf{T}_{C_t}$ and $\mathbf{X}_{CP_i}$ are given as
\begin{equation}
    \begin{aligned}
     \boldsymbol{\xi}_{\boldsymbol{\theta}_{IC_t}}&=\tilde{\boldsymbol{\theta}}_{IC_t},\; \boldsymbol{\xi}_{\boldsymbol{\theta}_{LI_i}}=\tilde{\boldsymbol{\theta}}_{LI_i},\\
     \boldsymbol{\xi}_{\mathbf{p}_{IC_t}}&={}^{I}\hat{\mathbf{p}}_{C_t}-(\mathbf{I}_{3}+(\tilde{\boldsymbol{\theta}}_{IC_t})_{\times}){}^{I}\mathbf{p}_{C_t},\\
     \boldsymbol{\xi}_{\mathbf{p}_{LI_i}}&={}^{L}\hat{\mathbf{p}}_{I_i}-(\mathbf{I}_{3}+(\tilde{\boldsymbol{\theta}}_{LI_i})_{\times}){}^{L}\mathbf{p}_{I_i}.\\
    \end{aligned}
\end{equation}

\textbf{Observation function with multi-state constraint:} Supposing there is a local feature $f$ tracked by multiple cloned poses being used to perform invariant EKF update, for each observation from cloned poses, we can formulate the observation function as
\begin{equation} \label{eq:msckf_ob}
\begin{aligned}
     \mathbf{y}\!_{MSC_i} \!\!=\!\! h[({}^{L}\mathbf{R}_{I_i}{}^{I}\mathbf{R}_{C_t})\!\!^{\top}\!({}^{L}\mathbf{p}_{f}\!-\!({}^{L}\mathbf{R}_{I_i}{}^{I}\mathbf{p}_{C_t}\!+\!{}^{L}\mathbf{p}_{I_i}))]
     \!+\!\boldsymbol{\gamma}\!_{MSC_i},
\end{aligned}
\end{equation}
where $\mathbf{y}_{MSC_i}$ is the measurement of feature $f$ in the image captured by $\mathbf{X}_{CP_i}$, and $\boldsymbol{\gamma}_{MSC_i}$ is the observation noise. Linearizing above function like (\ref{eq:general_linear_ob_function}), we can get
\begin{equation}\label{eq:msckf_reproject_error}
    \begin{aligned}
     \mathbf{r}_{MSC_i}&=\mathbf{y}_{MSC_i}-\hat{\mathbf{y}}_{MSC_i}\\
     &=-\mathbf{H}_{MSC_{x_i}}\boldsymbol{\epsilon}_{MSC_{x_i}} -\mathbf{H}_{f}\boldsymbol{\epsilon}_{f}+\boldsymbol{\gamma}_{MSC_i}.
    \end{aligned}
\end{equation}
where $\mathbf{r}_{MSC_i}$ is the re-projection error of $f$ under $\mathbf{X}_{CP_i}$. $\mathbf{H}_{MSC_{x_i}}$ is the Jacobian matrix of the observation function with respect to all related state except the observed feature $f$. $\mathbf{H}_{f}$ is the Jacobian matrix of the observation function with respect to the observed feature $f$. $\boldsymbol{\epsilon}_{MSC_{x_i}}$ and $\boldsymbol{\epsilon}_{f}$ are the \textit{right-invariant error} of the state and feature, respectively.

Noting that in (\ref{eq:msckf_reproject_error}), although the error of state, $\boldsymbol{\epsilon}_{MSC_{x_i}}$, is related to different cloned poses, the error of the same feature $f$, $\boldsymbol{\epsilon}_{f}$, should be identical. Recall (\ref{eq:nonlinear_error}) that the feature error is defined as $\boldsymbol{\epsilon}_{f}=\boldsymbol{\xi}_{f}= {}^{L}\hat{\mathbf{p}}_{f} - \hat{\mathbf{R}}\mathbf{R}^{\top}{}^{L}\mathbf{p}_{f}$, where $\mathbf{R}$ is a rotation matrix. This means that we need to bind a rotation matrix to each feature's error. In this paper, we assume this rotation matrix is the rotation part of the current IMU cloned pose $\mathbf{X}_{CP_t} \triangleq \{^{L}\mathbf{R}_{I_t},^{L}\mathbf{p}_{I_t}\}$, and we call this current IMU cloned pose as the \textit{anchored frame}. Then, there are two cases need to be considered:

\noindent \textit{a)} The feature $f$ is observed by the \textit{anchored frame} $\mathbf{X}_{CP_t}$:
\begin{equation}\label{eq:reproject_anchor}
\begin{aligned}
     \mathbf{r}_{MSC_t} &=\mathbf{y}_{MSC_t}-\hat{\mathbf{y}}_{MSC_t}\\
    &\approx \nabla h |_{\hat{\mathfrak{q}}} 
     \left[({}^{L}\hat{\mathbf{R}}_{I_t} {}^{I}\hat{\mathbf{R}}_{C_t})^{\top}
     \left[
     -\boldsymbol{\xi}_{f}
     +{}^{L}\hat{\mathbf{R}}_{I_t}\boldsymbol{\xi}_{\mathbf{p}_{IC_t}}
     +\boldsymbol{\xi}_{\mathbf{p}_{LI_t}}\right.\right.\\&
     \left.\left.
     +({}^{L}\hat{\mathbf{p}}_{I_t}-{}^{L}\hat{\mathbf{p}}_{f})_{\times}{}^{L}\hat{\mathbf{R}}_{I_t}\boldsymbol{\xi}_{\boldsymbol{\theta}_{IC_t}}
     \right]
     \right]+\boldsymbol{\gamma}_{MSC_t}.\\ 
\end{aligned}
\end{equation}

\noindent \textit{b)} The feature $f$ is observed by the other cloned frame $\mathbf{X}_{CP_i}$, $i=t-1,\cdots,t-s+1$:
\begin{equation}\label{eq:reproject_other}
    \begin{aligned}
     \mathbf{r}_{MSC_i} &=\mathbf{y}_{MSC_i}-\hat{\mathbf{y}}_{MSC_i}\\
    &\approx \nabla h |_{\hat{\mathfrak{q}}} 
     \left[({}^{L}\hat{\mathbf{R}}_{I_i} {}^{I}\hat{\mathbf{R}}_{C_t})^{\top}
     \left[-\boldsymbol{\xi}_{f}+({}^{L}\hat{\mathbf{p}}_{f})_{\times}\boldsymbol{\xi}_{\boldsymbol{\theta}_{LI_t}}\right.\right.\\&-
     ({}^{L}\hat{\mathbf{p}}_{f})_{\times}\boldsymbol{\xi}_{\boldsymbol{\theta}_{LI_i}}+
     ({}^{L}\hat{\mathbf{p}}_{I_i}-{}^{L}\hat{\mathbf{p}}_{f})_{\times}{}^{L}\hat{\mathbf{R}}_{I_i}\boldsymbol{\xi}_{\boldsymbol{\theta}_{IC_t}}\\
     &\left.\left.
     +{}^{L}\hat{\mathbf{R}}_{I_i}\boldsymbol{\xi}_{\mathbf{p}_{IC_t}}
     +\boldsymbol{\xi}_{\mathbf{p}_{LI_i}}
     \right]
     \right]+\boldsymbol{\gamma}_{MSC_i}.\\
    \end{aligned}
\end{equation}
Combining (\ref{eq:reproject_anchor}) and (\ref{eq:reproject_other}), and stacking them together, we have the following expression:

    \begin{align}
    \mathbf{r}_{MSC_f}&=-\mathbf{H}_{MSC_x}\boldsymbol{\epsilon}_{MSC_x}-\mathbf{H}_{f}\boldsymbol{\epsilon}_{f}+\boldsymbol{\gamma}_{MSC}\notag\\
     \begin{bmatrix}
     \mathbf{r}_{MSC_t}\\
     \vdots\\
     \mathbf{r}_{MSC_i}\\
     \vdots\\
     \end{bmatrix}&=
     \begin{bmatrix}
        \boldsymbol{\nabla}_t[
        \mathbf{0}\quad\mathbf{I}\cdots\mathbf{0}\quad\mathbf{0}\cdots\mathbf{H}1_t\quad\mathbf{H}2_t
        ]\\
        \vdots\\
         \boldsymbol{\nabla}_i[
        \mathbf{H}3\quad\mathbf{0}\cdots\mathbf{H}4 \quad \mathbf{0}\cdots\mathbf{H}1_i\quad\mathbf{H}2_i
        ]\notag\\
        \vdots
     \end{bmatrix}\!\!\!
     \begin{bmatrix}
     \boldsymbol{\xi}_{\boldsymbol{\theta}_{LI_t}}\\
     \boldsymbol{\xi}_{\mathbf{p}_{LI_t}}\\
     \vdots\\
     \boldsymbol{\xi}_{\boldsymbol{\theta}_{LI_i}}\\
     \boldsymbol{\xi}_{\mathbf{p}_{LI_i}}\\
     \vdots\\
     \boldsymbol{\xi}_{\boldsymbol{\theta}_{IC_t}}\\
     \boldsymbol{\xi}_{\mathbf{p}_{IC_t}}
     \end{bmatrix}\\
     &+
     \begin{bmatrix}
     -\boldsymbol{\nabla}_t\\
     \vdots\\
     -\boldsymbol{\nabla}_i\\
     \vdots\\
     \end{bmatrix}
    \boldsymbol{\xi}_{f}
     +\begin{bmatrix}
     \boldsymbol{\gamma}_{MSC_t}\\
     \vdots\\
     \boldsymbol{\gamma}_{MSC_i}\\
     \vdots
     \end{bmatrix},
    \end{align}
where $i=t-1,\ldots,t-s+1$, $\boldsymbol{\nabla}_{t/i}= \nabla h |_{\hat{\mathfrak{q}}}({}^{L}\hat{\mathbf{R}}_{I_{t/i}} {}^{I}\hat{\mathbf{R}}_{C_t})^{\top}$, $\mathbf{H}1_{t/i}=({}^{L}\hat{\mathbf{p}}_{I_{t/i}}-{}^{L}\hat{\mathbf{p}}_{f})_{\times}{}^{L}\hat{\mathbf{R}}_{I_{t/i}}$, $\mathbf{H}2_{t/i}={}^{L}\hat{\mathbf{R}}_{I_{t/i}}$, $\mathbf{H}3=({}^{L}\hat{\mathbf{p}}_{f})_{\times}=-\mathbf{H}4$.

Following the method in \cite{msckf}, we do not maintain the feature $f$ in the state vector. Instead, we perform null space projection to merge the uncertainty of the feature into the uncertainty of the maintained state as
\begin{equation}\label{eq:ob_msckf_final}
    \begin{aligned}
      \mathbf{r}_{MSC_f} &= -\mathbf{H}_{MSC_x}\boldsymbol{\epsilon}_{MSC_x}-\mathbf{H}_{f}\boldsymbol{\epsilon}_f+\boldsymbol{\gamma}_{MSC}\\
      \mathcal{N}_f\mathbf{r}_{MSC_f} &= -\mathcal{N}_f\mathbf{H}_{MSC_x}\boldsymbol{\epsilon}_{MSC_x}-\mathcal{N}_f\mathbf{H}_{f}\boldsymbol{\epsilon}_f+\mathcal{N}_f\boldsymbol{\gamma}_{MSC}\\
      \mathbf{r}_{MSC_f}^{\prime} &= -\mathbf{H}_{MSC_x}^{\prime}\boldsymbol{\epsilon}_{MSC_x}+\boldsymbol{\gamma}_{MSC}^{\prime},\\
    \end{aligned}
\end{equation}
where $\mathcal{N}_f$ is the left null space of $\mathbf{H}_{f}$. With (\ref{eq:ob_msckf_final}), we can perform invariant EKF udpate.

It is worth mentioning that although cloned poses and the extrinsic are introduced into the state, the observability of the \textit{augmented system} remains unchanged, and the dimension of the unobservable subspace is still four. Readers can verify it following the definition of the observability matrix (\ref{om}).

\subsection{The flow of the whole algorithm}
The whole procedure of our proposed algorithm is summarized in Fig. \ref{fig:flow_chart}. The local VIO processes IMU and camera data to get the local pose (cf. Sec. \ref{sec:prop perfect} and Sec. \ref{sec:msc}). When there are feature matches between the pre-built visual map and the camera, the system will initialize the \textit{augmented variable} with the techniques of 3D-2D pose estimation e.g., EPnP\cite{epnp}, after which the \textit{augmented variable} and the poses of the matched map keyframes will be added into the state. Then, the system will utilize the measurements from the matched features to perform the map-based update in the manner of the Schmidt updating (cf. Sec. \ref{sec:oc}). The system will output the local pose and the \textit{augmented variable} at a frequency similar to the camera's frame rate. By multiplying the local pose and the \textit{augmented variable}, we can get the pose in the map (global) frame.
\vspace{-2mm}

\begin{figure}[t!]
    \centering
    \setlength{\abovecaptionskip}{0cm}
    \includegraphics[width=1\linewidth]{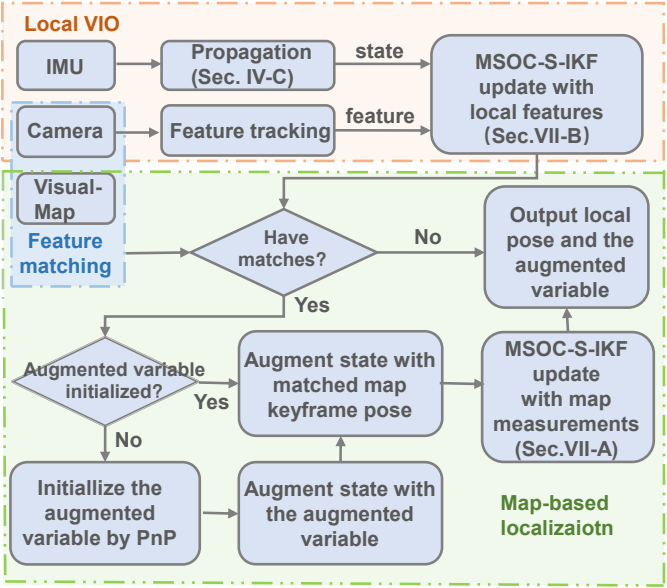}
     \caption{The flow chart of MSOC-S-IKF.}
     \label{fig:flow_chart}
     \vspace{-0.5cm}
\end{figure}

\section{Experimental results}
In this section, we perform extensive experiments with simulations and real world data. Before the details of the experiments, the evaluation metrics will be given first. 

We utilize the evaluation tools of Open-VINS \cite{openvins} to evaluate the performance of different algorithms. The accuracy of the localization can be measured by root mean squared error (RMSE)
or absolute trajectory error (ATE), the drift of the localization can be reflected by relative pose error (RPE)\cite{openvins,evaluation}, and the consistency of the system can be evaluated by Normalized Estimation Error Squared (NEES). For a given dataset with $N$ runs of the same algorithm and $K$ time steps for each run\footnote{With a slight abuse toward symbols, the definitions of $N$ and $K$ are different from those mentioned in earlier sections. These new definitions are only valid for this section.}, we define the estimated state value at time step $k$ of $i^{th}$ run as $\hat{\mathbf{x}}_{k,i}$, its corresponding true value as $\mathbf{x}_{k,i}$ and its corresponding covariance matrix as $\mathbf{P}_{k,i}$. Following \cite{openvins}, the detailed definitions of these 
four metrics are as follow:

\noindent \textbf{RMSE}:
 \begin{equation}
     e_{rmse,k}=\sqrt{\frac{1}{N}\sum_{i=1}^{N}||\mathbf{x}_{k,i} \boxminus \hat{\mathbf{x}}_{k,i}||^{2}_{2}},
 \end{equation}
 where $e_{rmse,k}$ is the RMSE of time step $k$, $\boxminus$ is the generalized subtraction. The RMSE of the whole trajectories can be calculated by:
 \begin{equation}
     e_{rmse}=\frac{1}{K}\sum_{k=1}^{K}\sqrt{\frac{1}{N}\sum_{i=1}^{N}||\mathbf{x}_{k,i} \boxminus \hat{\mathbf{x}}_{k,i}||^{2}_{2}}.
 \end{equation}
 
 \noindent \textbf{ATE}:
 \begin{equation}
     e_{ate}=\frac{1}{N}\sum_{i=1}^{N}\sqrt{\frac{1}{K}\sum_{k=1}^{K}||\mathbf{x}_{k,i} \boxminus \hat{\mathbf{x}}_{k,i}||^{2}_{2}}.
 \end{equation}
 
 \noindent \textcolor{black}{\textbf{RPE}:}
 \begin{equation}\label{eq:RPE}
     \begin{aligned}
          \tilde{\mathbf{x}}_{r,i}^{s}&=\mathbf{x}_{b,i}^{s}-\mathbf{x}_{e,i}^{s},\\
          e_{rpe,l}&=\frac{1}{N\times S}\sum_{i=1}^{N}\sum_{s=1}^{S}\sqrt{||\hat{\tilde{\mathbf{x}}}_{r,i}^{s}\boxminus\tilde{\mathbf{x}}_{r,i}^{s}||^{2}_{2}},
     \end{aligned}
 \end{equation}
 where we assume that the trajectory is split into $S$ segments, and each segment has the length of $l$. For each segment, it  corresponds to a pair of state $\{\mathbf{x}_{b,i}^{s},\mathbf{x}_{e,i}^{s}\}$ for $s=1\cdots S$, where $\mathbf{x}_{b,i}^{s}$ and $\mathbf{x}_{e,i}^{s}$ mean the beginning and the end states of this segment, respectively.
 
 \noindent \textbf{NEES}:
 \begin{equation}
 e_{nees,k} = \frac{1}{N\times d}\sum_{i=1}^{N}{
 (\mathbf{x}_{k,i} \boxminus \hat{\mathbf{x}}_{k,i})^{\top}\mathbf{P}_{k,i}^{-1}(\mathbf{x}_{k,i} \boxminus \hat{\mathbf{x}}_{k,i})},
 \end{equation}
 where $d$ is the dimension of $\mathbf{x}_{k,i}$. The NEES of the whole trajectories can be calculated by:
  \begin{equation}
 e_{nees} =\frac{1}{K\times N\times d}\sum_{k=1}^{K} \sum_{i=1}^{N}{
 (\mathbf{x}_{k,i} \boxminus \hat{\mathbf{x}}_{k,i})^{\top}\mathbf{P}_{k,i}^{-1}(\mathbf{x}_{k,i} \boxminus \hat{\mathbf{x}}_{k,i})}.
 \end{equation}
 This metric indicates the consistency of the system. If the system is consistent, NEES should be approximately 1 for large $N$. If NEES is much larger than (less than) 1, this means the system underestimates (overestimates) the covariance of the state. \textcolor{black}{It is worth noting that, to compute NEES, the estimated covariance derived from invariant-related algorithms is corresponding to the nonlinear error defined in (\ref{eq:nonlinear_error}) instead of the standard error in the vector space. However, for a fair comparison, RMSE, ATE, and RPE are computed with the standard error for all algorithms.} 
 
 \vspace{-2mm}
\subsection{Simulations}

\begin{figure}[t!]
    \centering
    \setlength{\abovecaptionskip}{0cm}
    \includegraphics[width=0.8\linewidth]{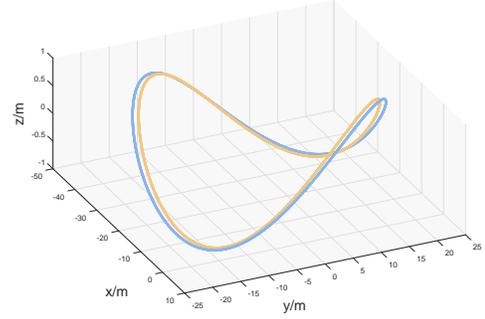}
     \caption{The trajectories used for simulation. The blue one is used to generate map information and the orange one is used to test algorithms.}
     \label{fig:sim_traj}
     \vspace{-0.5cm}
\end{figure}
In this part, we make some adaptations to the simulator of Open-VINS \cite{openvins} to evaluate the accuracy and consistency of \textit{augmented system} with different algorithms. We feed the simulator with a ground truth trajectory shaped like a ``saddle" whose length is about 630m (cf. orange trajectory of Fig. \ref{fig:sim_traj}). The map keyframe poses come from another trajectory, which has a similar shape as the previous trajectory (cf. blue trajectory of Fig. \ref{fig:sim_traj}). These two trajectories are denoted as S1 (the blue one) and S2 (the orange one), respectively. \textcolor{black}{To demonstrate the necessity of considering the uncertainty of the map, we conduct simulation experiments on a perfect map and an imperfect map. For the perfect map, we use the ground truth of map keyframe poses.} For the imperfect map, we artificially add the noise to the ground truth of S1 to simulate the imperfect but real map keyframes. The position of the ground truth is perturbed with the Gaussian white noise $\boldsymbol{\gamma}_{t_p} \sim \mathcal{N}(\mathbf{0},\sigma_p^{2}\mathbf{I}_{3\times3}), \sigma_p=0.1\,m$, and the orientation is perturbed with $\boldsymbol{\gamma}_{t_o} \sim \mathcal{N}(\mathbf{0},\sigma_o^2\mathbf{I}_{3\times3}),\sigma_o=0.9\,degree$. After the perturbation, the RMSE of the map trajectory S1 is $0.172\,m/1.566\,degree$.

For the map matching information, we first randomly generate 3D features, then we utilize the map-based observation functions (\ref{eq:imperfect global_ob_c}) and (\ref{eq:imperfect global_ob_kf}) to reproject 3D features into the frames obtained from the running trajectory (S2) and the perfect map keyframes so that the 2D-2D matching features are obtained. After that, we add Gaussian white noise to the 2D features. Finally, for each 3D map feature, we utilize the noisy 2D features and the noisy map keyframe poses to triangulate and optimize its 3D position as the estimated 3D feature. \textcolor{black}{Noting that for the perfect map, we directly use the ground truth of the generated 3D features instead of estimated ones.}

To test the consistency of our proposed algorithm (MSOC-S-IKF), we compare it with the standard EKF version (MSC-S-EKF), and the ones without considering the uncertainty of the map (MSC-IKF, MSC-EKF). We regard Open-VINS\cite{openvins} as the pure visual-inertial odometry baseline method to show that the consistent map-based algorithm can effectively alleviate the drift of odometry. \textcolor{black}{The overall results of the five algorithms with a perfect map and an imperfect map are given in Table \ref{tab:sim_res_2} and Table \ref{tab:sim_res}, respectively. As MSC-S-EKF and MSOC-S-IKF require uncertainties of the map keyframe poses, for the perfect map, we assign a small uncertainty for each map keyframe pose: the standard deviation of the position part is $0.1mm$ and the standard deviation of the orientation part is $0.01degree$.} In the tables, results of three types of variables are given, the pose of the robot in the local reference system (local-pose), the relative transformation between the local reference system and the map reference system (relative-trans), and the pose of the robot in the map reference system (map-pose). For each type of variables, its RMSE and NEES of orientation ($degree$) / position ($m$) are given. All the results are derived from ten Monte Carlo simulations. It should be noted that as Open-VINS is a VIO, it does not estimate the relative-trans, therefore only results of local pose are given. Besides, the other four algorithms estimate local-pose and relative-trans instead of map-pose. The results of map-pose are derived from the estimated local-pose and relative-trans. Therefore, we only give the NEESs of local-pose and relative-trans, to demonstrate the consistency of the algorithms.


\begin{table}[t] 
\caption{\textcolor{black}{RMSEs ($degree/m$) of different algorithms on simulation data with a perfect map}}
\vspace{-2mm}
\setlength{\tabcolsep}{2mm}{
\begin{tabular}{c|c|c|c|c}
\hline
\multicolumn{2}{c|}{Algorithms}&local-pose&relative-trans&map-pose\\
\hline
\multirow{2}{*}{Open-VINS}&RMSE&0.240/0.136&\textbf{—}&\textbf{—}\\
                          &NEES&1.304/0.832&\textbf{—}&\textbf{—}\\
\hline
\multirow{2}{*}{MSC-EKF}&RMSE&0.969/0.440&0.954/0.046&0.098/\textbf{0.019}\\
&NEES&7.110/5.132&6.404/0.566&\textbf{—}\\
\hline
\multirow{2}{*}{MSC-S-EKF}&RMSE&0.986/0.449&0.976/0.056&0.105/0.037\\
&NEES&6.496/4.446&6.154/0.550&\textbf{—}\\
\hline
\multirow{2}{*}{MSC-IKF}&RMSE&0.152/\textbf{0.052}&0.135/\textbf{0.040}&\textbf{0.096}/0.020\\
&NEES&1.548/0.536&0.807/0.404&\textbf{—}\\
\hline
\multirow{2}{*}{\textbf{MSOC-S-IKF}}&RMSE&\textbf{0.150}/0.064&\textbf{0.115}/\textbf{0.040}&0.104/0.037\\
&NEES&1.409/0.600&0.834/0.340&\textbf{—}\\
\hline
\multicolumn{5}{l}{\textbf{—} means there is no such variable to evaluate.}
\end{tabular}}
\label{tab:sim_res_2}
\vspace{-0.5cm}
\end{table}

\begin{table}[t] 
\caption{RMSEs ($degree/m$) of different algorithms on simulation data with an imperfect map}
\vspace{-2mm}
\setlength{\tabcolsep}{2mm}{
\begin{tabular}{c|c|c|c|c}
\hline
\multicolumn{2}{c|}{Algorithms}&local-pose&relative-trans&map-pose\\
\hline
\multirow{2}{*}{Open-VINS}&RMSE&0.246/0.137&\textbf{—}&\textbf{—}\\
                          &NEES&1.027/0.741&\textbf{—}&\textbf{—}\\
\hline
\multirow{2}{*}{MSC-EKF}&RMSE&0.922/0.442&0.968/0.102&0.498/0.249\\
&NEES&5.631/7.725&103.499/2.308&\textbf{—}\\
\hline
\multirow{2}{*}{MSC-S-EKF}&RMSE&1.157/0.521&1.179/0.107&0.174/\textbf{0.106}\\
&NEES&7.611/5.265&7.800/0.901&\textbf{—}\\
\hline
\multirow{2}{*}{MSC-IKF}&RMSE&0.209/0.179&0.454/0.126&0.475/0.234\\
&NEES&1.194/5.507&95.108/3.565&\textbf{—}\\
\hline
\multirow{2}{*}{\textbf{MSOC-S-IKF}}&RMSE&\textbf{0.175}/\textbf{0.113}&\textbf{0.175}/\textbf{0.102}&\textbf{0.170}/0.112\\
&NEES&0.980/1.005&0.597/0.839&\textbf{—}\\
\hline
\multicolumn{5}{l}{\textbf{—} means there is no such variable to evaluate.}
\end{tabular}}
\label{tab:sim_res}
\vspace{-0.5cm}
\end{table}

\textbf{Trajectory accuracy:} \textcolor{black}{From Table \ref{tab:sim_res_2}, we can find that both MSC-IKF and MSOC-S-IKF have good performance in local-pose and relative-trans. This is because these two algorithms correctly maintain the observability of the \textit{perfect augmented system} (cf. \textbf{Theorem \ref{theorem: perfect iekf}}). However, for the imperfect map, as shown in Table \ref{tab:sim_res}, the situation is different.}

From Table \ref{tab:sim_res}, we can find that for local-pose and relative-trans, MSOC-S-IKF has the minimum RMSE. And for map-pose, which is the result of a direct multiplication of local-pose and relative-trans, MSOC-S-IKF has the best orientation accuracy and good position accuracy. It is worth noting that for the \textit{augmented system}, it online estimates local-pose and relative-trans instead of map-pose. Therefore, the performance on estimating local-pose and relative-trans is what we are concerned about. And listing the results of map-pose is just for illustrating that with a good estimation on local-pose and relative-trans, we can get a competitive or even the best result of map-pose. In contrast, MSC-IKF has poor performance for local-pose, relative-trans, and map-pose, even though it correctly maintains the observability of the system. It is because MSC-IKF does not consider the uncertainty of the map. Similarly, MSC-EKF, which neither considers the uncertainty of the map nor correctly maintains the observability of the system, and MSC-S-EKF, which does not correctly maintain the observability of the system, have (much) worse performance than MSOC-S-IKF, which not only considers the uncertainty of the map but maintains the correct observability of the system. Fig. \ref{fig:sim_traj_compare} gives an intuitive comparison among Open-VINS, MSC-EKF, MSC-IKF, and MSOC-S-IKF for the local trajectory on the x-y plane with the imperfect map. From the zoom-in area of Fig. \ref{fig:sim_traj_compare}, we can find that the trajectory derived from MSOC-S-IKF (the green one) is the closest to the ground truth.



\begin{figure}[t]
    \centering
    \includegraphics[width=1\linewidth]{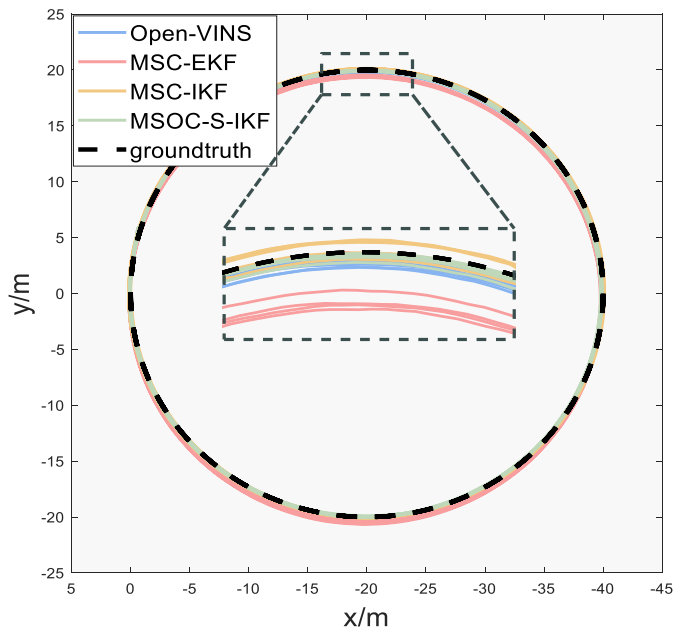}
     \vspace{-0.2cm}
     \caption{The trajectory comparison of different methods on x-y plane}
     \label{fig:sim_traj_compare}
\end{figure}


\textbf{Consistency:} To show the consistency properties of the different algorithms, the NEESs for local-pose and relative-trans are also given in Table \ref{tab:sim_res_2} and Table \ref{tab:sim_res}. When the map is perfect, we can find that both MSC-IKF and MSOC-S-IKF have the NEES around 1, which indicates the good consistency of these two algorithms. However, when the map is imperfect, MSC-IKF has large values of NEES because it neglects the uncertainty of the map, while MSOC-S-IKF has good consistency property (the values of NEES are around 1). For both situations, MSC-EKF and  MSC-S-EKF have large values of NEES, which indicates that the systems are overconfident to the estimation uncertainties.

Fig. \ref{fig:sim_3sigma} and Fig. \ref{fig:sim_nees} show the position error with 3$\sigma$ bounds of local-pose and the NEES of each timestep of local-pose derived from MSC-S-EKF, MSC-IKF, and MSOC-S-IKF under the situation that the map is imperfect. For MSC-S-EKF, since the correct unobservable subspace is broken (cf. \textbf{Theorem \ref{theorem: real perfect ekf}} and \textbf{Theorem \ref{theorem: real imperfect ekf}}), its estimation shows a trend of divergence. For MSC-IKF, although it correctly maintains the observability of the system (cf. \textbf{Theorem \ref{theorem: perfect iekf}}), the uncertainty of the map information is not considered, which makes the estimator over-reliant on the imperfect map information and leads to inaccurate and inconsistent results. On the contrary, MSOC-S-IKF considers the above two factors to get an accurate and consistent estimation.

\begin{figure}[t]
\centering
\setlength{\abovecaptionskip}{0cm}
    \includegraphics[width=1\linewidth]{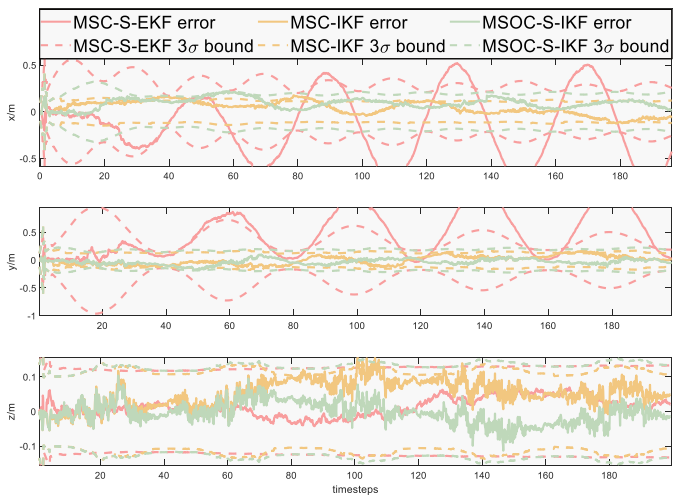}
   \caption{The error of the position part of the local-pose with 3$\sigma$ bound}
   \label{fig:sim_3sigma}
   \vspace{-0.5cm}
\end{figure}

\begin{figure}[t]
\centering
\setlength{\abovecaptionskip}{0cm}
    \includegraphics[width=1\linewidth]{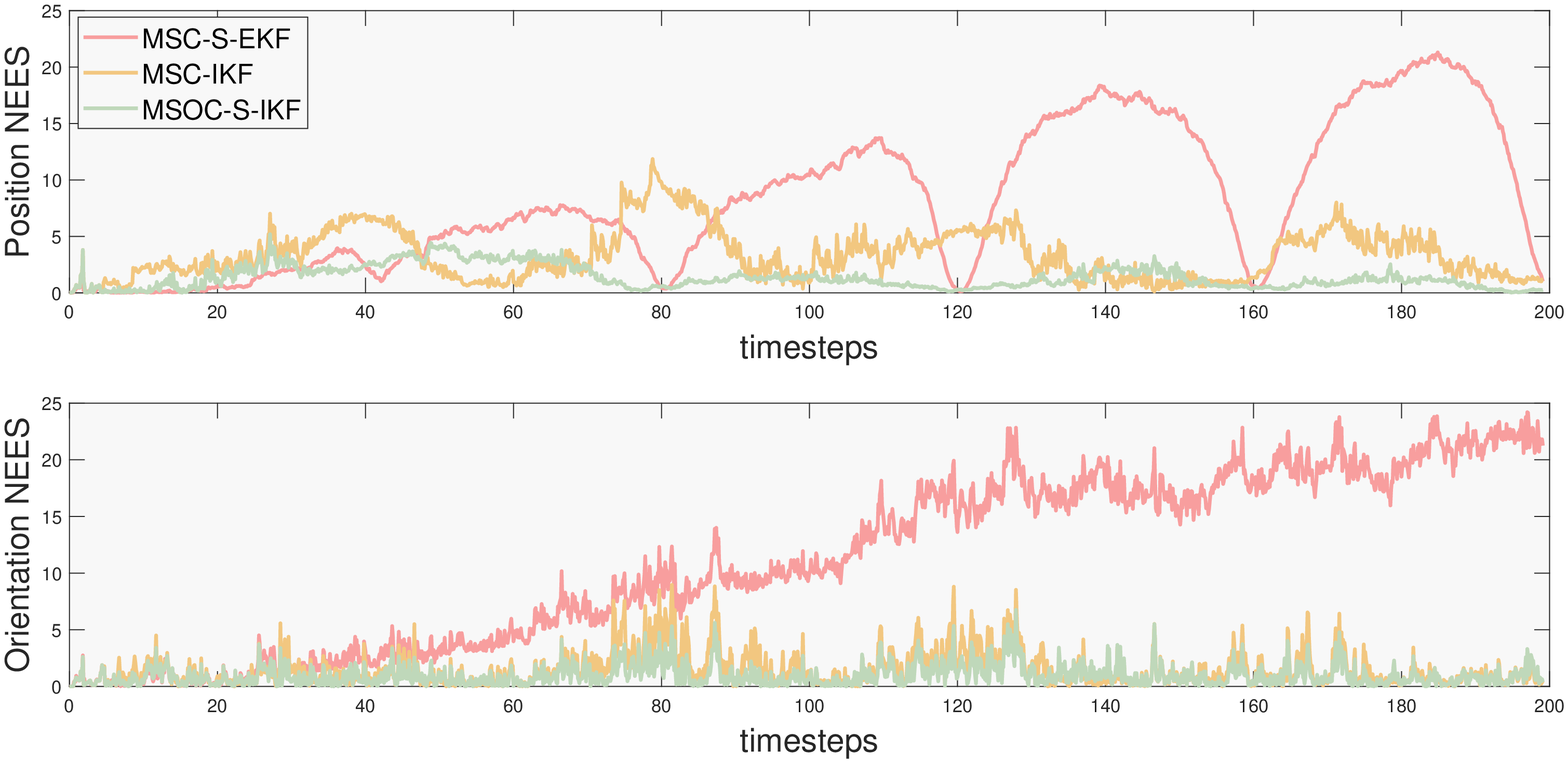}
   \caption{The NEES of the local-pose}
   \label{fig:sim_nees}
   \vspace{-0.5cm}
\end{figure}

\vspace{-2mm}
\subsection{Real world experiments}
In this part, we will validate our proposed algorithm in four kinds of real world datasets, which cover the scenarios of aerial vehicle (EuRoC \cite{euroc}), ground vehicles in urban areas (Kaist \cite{kaist}, 4Seasons \cite{4seasons}) and in a campus (YQ \cite{YQ}).

\begin{figure}
    \centering
    \subfigure[EuRoC]{
    \includegraphics[width=1\linewidth]{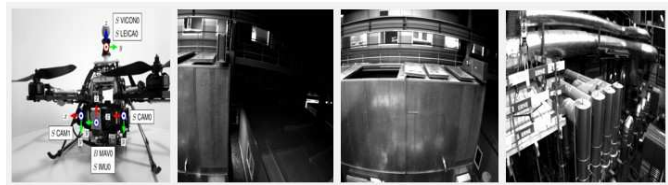}
    }\\
    \vspace{-2mm}
    \subfigure[Kaist]{
    \includegraphics[width=1\linewidth]{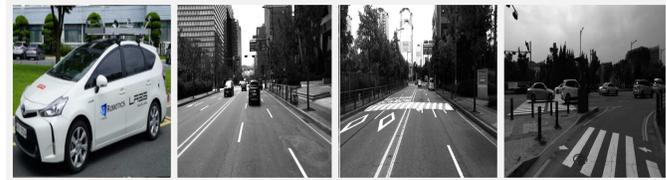}
    }\\
    \vspace{-2mm}
    \subfigure[4Seasons]{
    \includegraphics[width=1\linewidth]{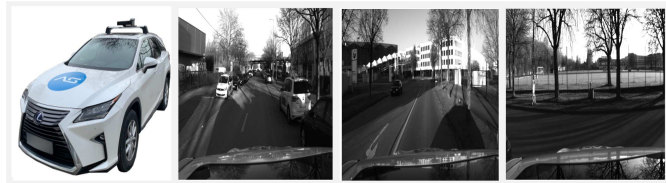}
    }\\
    \vspace{-2mm}
    
    \subfigure[YQ]{
    \includegraphics[width=1\linewidth]{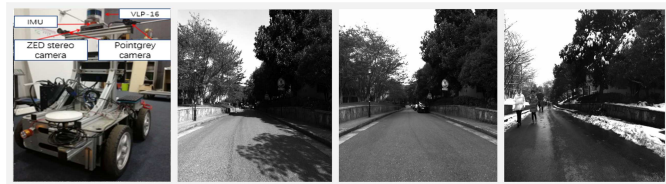}
    }
    \vspace{-3mm}
     \caption{The vehicles and images sampled from the four datasets.}
     \label{fig:dataset}
     \vspace{-0.5cm}
\end{figure}

\textbf{EuRoC:} EuRoC\cite{euroc} is a visual-inertial dataset collected on-board a Micro Aerial Vehicle (MAV). This dataset contains scenarios of vicon rooms and a machine hall, where there are three scenarios, vicon room 1 (V1), vicon room 2 (V2), and machine hall (MH). For our experiments, the sequences V101, V201 and MH01 are used to build the map, while the sequences V102-V103, V202-V203 and MH02-MH05 are used for localization. The map contains keyframes and 3D features. As shown in dataset documents, the ground truth poses have measurement error in millimeters, so we regard the ground truth poses as the noisy map keyframes with the standard deviation of $1\,cm$ and $1\,degree$. The positions of 3D features are triangulated and optimized across multiple adjacent map images and the corresponding poses. The first row of Fig. \ref{fig:dataset} gives some selected images from EuRoC.

\textbf{Kaist:} Kaist\cite{kaist} is a dataset collected in urban on-the-road environments with many moving vehicles (cf. the second row of Fig. \ref{fig:dataset}), so this is a challenging dataset causing drift in visual-inertial odometry. In the dataset, there are two sequences, Urban38 and Urban39, having a large overlap. We employ Urban38 to build the map and Urban39 for localization testing. The length of the trajectory is 10.67km. The map keyframe and features are calculated following that in EuRoC part. As the ground truth in this dataset is built upon Virtual Reference Station (VRS)–GPS, we set the standard deviation of map keyframe pose as $0.1\,m$ for position parts and $2.87\, degree$ ($0.05 \, rad$) for orientation parts.

\textbf{4Seasons:} 4Seasons \cite{4seasons} is a dataset collected in different scenarios and under various weather conditions and illuminations. This dataset is for the research on VIO, global place recognition, and map-based re-localization, suitable to test our proposed algorithm. In this dataset, we select the first two sequences ($2020\text{-}03\text{-}24\_17\text{-}36\text{-}22$, denoted as Office-Loop1, $2020\text{-}03\text{-}24\_17\text{-}45\text{-}31$, denoted as Office-Loop2) from the collection called ``Office Loop", where the vehicle loops around an industrial area of the city (cf. the third row of Fig. \ref{fig:dataset}). We pick these two sequences because the vehicle at the beginning of the sequences is static, which is necessary for the IMU initialization of our system. Besides, due to lots of overlaps of their trajectories and the similar illuminations, these two sequences are easier to detect matching images. We employ Office-Loop-1 to build the map and Office-Loop-2 for localization testing. The map keyframe and features are generated following the procedure mentioned 
in EuRoC part. As the ground truth in this dataset is built upon the Real Time Kinematic-Global Navigation Satellete System (RTK-GNSS) that provide global positioning with up-to-centimeter accuracy \cite{4seasons}, the standard deviation of the map keyframe pose is set the same as that in Kaist.

\textbf{YQ:} YQ\cite{dxq,YQ} is our own collected data in Yuquan Campus, Zhejiang University, China, by a four-wheel ground vehicle. This dataset aims at testing in off-the-road campus environments. It contains four sequences, YQ1-YQ4, where YQ1-YQ3 were recorded on three separate days with different weather in summer and YQ4 was collected in winter after snowing (cf. the fourth row of Fig. \ref{fig:dataset}). The changing weather and season severely degenerate the feature matching, causing the long absence of map-based measurements. For this dataset, we use YQ1 to build the map and YQ2-YQ4 to test the performance of the algorithms. The standard deviation of the map keyframe pose is set the same as that in Kaist.

\textbf{Feature matching:} For all real world datasets, the matching procedure is conducted as follows: We first utilize R2D2\cite{r2d2} to extract new features on the current query frame and match with features in map keyframes. The initial matching pairs based on feature descriptors are then fed into a robust pose estimator in \cite{2entity} to generate accurate feature matching pairs.

\textbf{Benchmark with comparative methods:} 
In this part, we make comparisons between our proposed methods with the benchmark from Open-VINS\cite{openvins} and VINS-Fusion\cite{vinsmono,vinsgps} on EuRoC, Kaist, 4Seasons and YQ. We regard Open-VINS as a pure odometry baseline with drift, and validate the correction of map-based measurements. For VINS-Fusion, its localization mode is used. We keep its map setting the same as the ones in our method. The only difference is that VINS-Fusion ignores the uncertainty of the map, so we do not set the map covariance for VINS-Fusion. We record the localization result of VINS-Fusion by concatenating the odometry $^{L}\mathbf{T}_{k}$ and the estimated $^{G}\mathbf{T}_{L}$ instead of the optimized trajectory due to the real-time causality. 

\begin{figure}[t]
\centering
\setlength{\abovecaptionskip}{0cm}
    \includegraphics[width=1\linewidth]{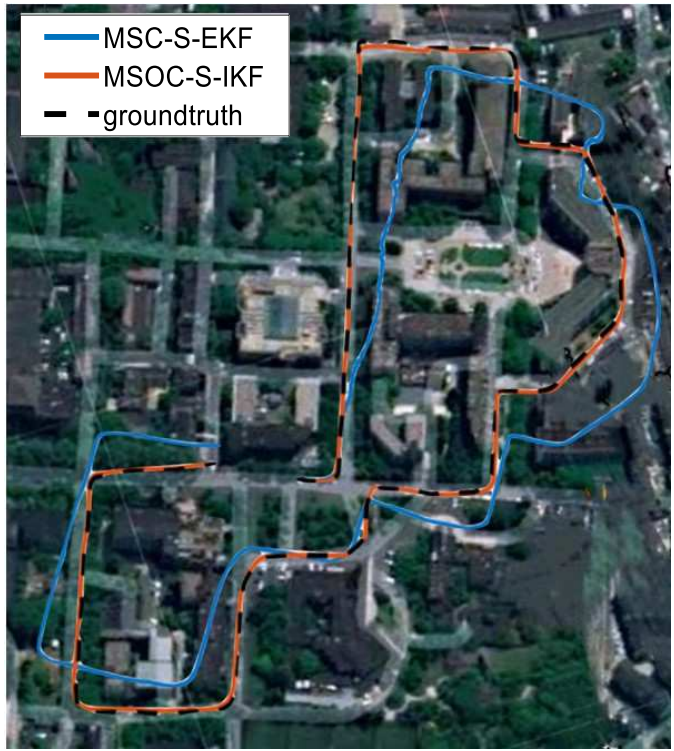}
   \caption{The trajectory comparison (YQ2) between MSC-S-EKF and MSOC-S-IKF on the satellite map}
   \label{fig:YQ2_comp}
   \vspace{-0.5cm}
\end{figure}

\begin{table*}[t]
    \centering
    \caption{The ATE results ($m$) of different algorithms on different datasets}
    \vspace{-2mm}
    \begin{tabular}{c|c|c|c|c|c|c|c}
    \hline
    \hline
    \multicolumn{2}{c|}{\diagbox{Datasets}{Algorithms}}
         &Open-VINS&VINS-Fusion&MSC-S-EKF&MSC-EKF&MSC-IKF&\textbf{MSOC-S-IKF} \\
         \hline
    \multirow{2}{*}{\makecell[c]{V102\\(75.89m)}}&\makecell[c]{local position}&0.059&0.120&0.044&0.062&0.053&\textbf{0.036}\\
    &\makecell[c]{map position}&\textbf{—}&0.105&\textbf{0.039}&0.043&0.072&\textbf{0.039}\\
    \cline{2-8}
     \multirow{2}{*}{\makecell[c]{V103\\(78.98m)}}&\makecell[c]{local position}&0.090&0.168&0.047&0.053&\textbf{0.046}&0.047\\
    &\makecell[c]{map position}&\textbf{—}&0.145&\textbf{0.052}&0.063&0.062&0.057\\
    \hline
    \multirow{2}{*}{\makecell[c]{V202\\(86.23m)}}&\makecell[c]{local position}&0.065&0.147&0.056&0.094&0.062&\textbf{0.049}\\
    &\makecell[c]{map position}&\textbf{—}&0.081&0.049&0.093&0.092&\textbf{0.046}\\
    \cline{2-8}
    \multirow{2}{*}{\makecell[c]{V203\\(86.13m)}}&\makecell[c]{local position}&0.112&0.209&0.117&0.100&0.089&\textbf{0.086}\\
    &\makecell[c]{map position}&\textbf{—}&0.229&0.150&0.154&0.133&\textbf{0.112}\\
    \hline
    \multirow{2}{*}{\makecell[c]{MH02\\(73.70m)}}&\makecell[c]{local position}&0.131&0.107&0.065&0.115&0.048&\textbf{0.045}\\
    &\makecell[c]{map position}&\textbf{—}&0.059&0.030&0.044&0.045&\textbf{0.027}\\
    \cline{2-8}
     \multirow{2}{*}{\makecell[c]{MH03\\(130.93m)}}&\makecell[c]{local position}&0.149&0.172&0.088&0.149&0.110&\textbf{0.080}\\
    &\makecell[c]{map position}&\textbf{—}&0.247&0.081&0.192&0.188&\textbf{0.078}\\
    \cline{2-8}
    \multirow{2}{*}{\makecell[c]{MH04\\(91.75m)}}&\makecell[c]{local position}&0.192&\textbf{0.172}&0.236&0.864&0.267&0.186\\
    &\makecell[c]{map position}&\textbf{—}&0.428&0.350&0.728&0.584&\textbf{0.255}\\
    \cline{2-8}
    \multirow{2}{*}{\makecell[c]{MH05\\(97.59m)}}&\makecell[c]{local position}&0.246&\textbf{0.178}&0.289&0.309&0.219&0.198\\
    &\makecell[c]{map position}&\textbf{—}&0.401&0.526&0.824&0.508&\textbf{0.280}\\
    \hline
    \multirow{2}{*}{\makecell[c]{Urban39\\(10.67km)}}&\makecell[c]{local position}&10.491&\XSolidBrush&42.873&\XSolidBrush&\XSolidBrush&\textbf{2.875}\\
    &\makecell[c]{map position}&\textbf{—}&\XSolidBrush&18.030&\XSolidBrush&\XSolidBrush&\textbf{3.107}\\
    \hline
    \multirow{2}{*}{\makecell[c]{YQ2\\(1.30km)}}&\makecell[c]{local position}&5.159&\XSolidBrush&5.285&\XSolidBrush&10.826&\textbf{1.635}\\
    &\makecell[c]{map position}&\textbf{—}&\XSolidBrush&1.751&\XSolidBrush&20.940&\textbf{1.504}\\
    \cline{2-8}
     \multirow{2}{*}{\makecell[c]{YQ3\\(1.28km)}}&\makecell[c]{local position}&4.198&\XSolidBrush&3.425&\XSolidBrush&\XSolidBrush&\textbf{1.348}\\
    &\makecell[c]{map position}&\textbf{—}&\XSolidBrush&1.814&\XSolidBrush&\XSolidBrush&\textbf{1.406}\\
    \cline{2-8}
     \multirow{2}{*}{\makecell[c]{YQ4\\(0.93km)}}&\makecell[c]{local position}&5.267&\XSolidBrush&3.721&\XSolidBrush&\XSolidBrush&\textbf{1.633}\\
    &\makecell[c]{map position}&\textbf{—}&\XSolidBrush&3.384&\XSolidBrush&\XSolidBrush&\textbf{2.502}\\
    \hline
     \multirow{2}{*}{\makecell[c]{Office-Loop2\\(3.859km)}}&\makecell[c]{local position}&12.987&31.389&33.586&\XSolidBrush&\XSolidBrush&\textbf{3.666}\\
    &\makecell[c]{map position}&\textbf{—}&19.479&25.698&\XSolidBrush&\XSolidBrush&\textbf{4.048}\\
    \hline
    \hline
    \multicolumn{8}{l}{\textbf{—} means there is no such variable to evaluate; \XSolidBrush means this algorithm is failed to run the corresponding sequence.}
    \end{tabular}
    \label{tab:all_ate_comp}
    \vspace{-0.5cm}
\end{table*}

\textbf{Trajectory accuracy:} Table \ref{tab:all_ate_comp} lists ATE results of position ($m$) in the local frame (local position) and in the map frame (map position), which are derived from different algorithms on different dataset sequences. The ATE results of the local position are derived by aligning the local trajectories with the ground truths, whereas for the map position, we directly compare the computed trajectories in the map frame with the ground truths without alignments. All the results are the average of three runs. Since the data in Kaist, 4Seasons and YQ are collected from ground vehicles, the ATE results on these datasets are calculated in 2D (x-y plane). We can find that our consistent algorithm MSOC-S-IKF has great (or the best for most cases) performance for both local and map (global) positions, whereas VINS-Fusion, MSC-EKF, and MSC-IKF are failed to survive in the long-time and large-scale scene (Kaist and YQ). VINS-Fusion diverges due to the significant drifts and the overconfident belief in the noisy map, while MSC-EKF and MSC-IKF diverge mainly due to ignoring the uncertainty of the map information so that the estimators maintain incorrect covariance matrices which leads to wrong state updating. It is worth mentioning that although MSC-S-EKF, which cannot maintain the correct unobservable subspace, can successfully run through all of these dataset sequences and sometimes has competitive results for some evaluated variables (e.g., map position of YQ), the results of the variable (local-position) that is actually estimated by the estimator turn out to be bad. \textcolor{black}{For some sequences, the accuracy of MSC-S-EKF of local-pose is worse than the pure odometry (Open-VINS), like V203, MH04, Urban39, and Offie-Loop2.} 
Besides, the local trajectories of MSC-S-EKF as well as MSOC-S-IKF are plotted in Fig. \ref{fig:YQ2_comp}, where the two trajectories are aligned with the ground truth by the \textit{first pose} such that we can see how the local trajectories change over time. From Fig. \ref{fig:YQ2_comp}, we can see the trajectory of MSC-S-EKF wildly deviates from the ground truth after a short term running. This phenomenon stems from the inconsistent estimation, which introduces spurious information to the system. This makes the estimator produce wrong results even though sometimes the results of map position (derived though the estimated local pose and the \textit{augmented variable}) are not bad. 

In particular, we can find an interesting phenomenon from Table \ref{tab:all_ate_comp}: For EuRoC datasets, the values of the map keyframe poses are set the same as the ground truth, and the reprojection error of triangulated 3D features is around 0.45 pixel, which means the pre-build map have pretty good accuracy. For MSC-IKF that can naturally maintain the correct observability of the \textit{augmented system}, it is still inferior to MSOC-S-IKF in the vast majority of cases. This demonstrates the necessity to consider the uncertainty of the map information.

\begin{figure*}[t]
    \centering
    \subfigure[Urban39 local]{\includegraphics[width=0.24\textwidth]{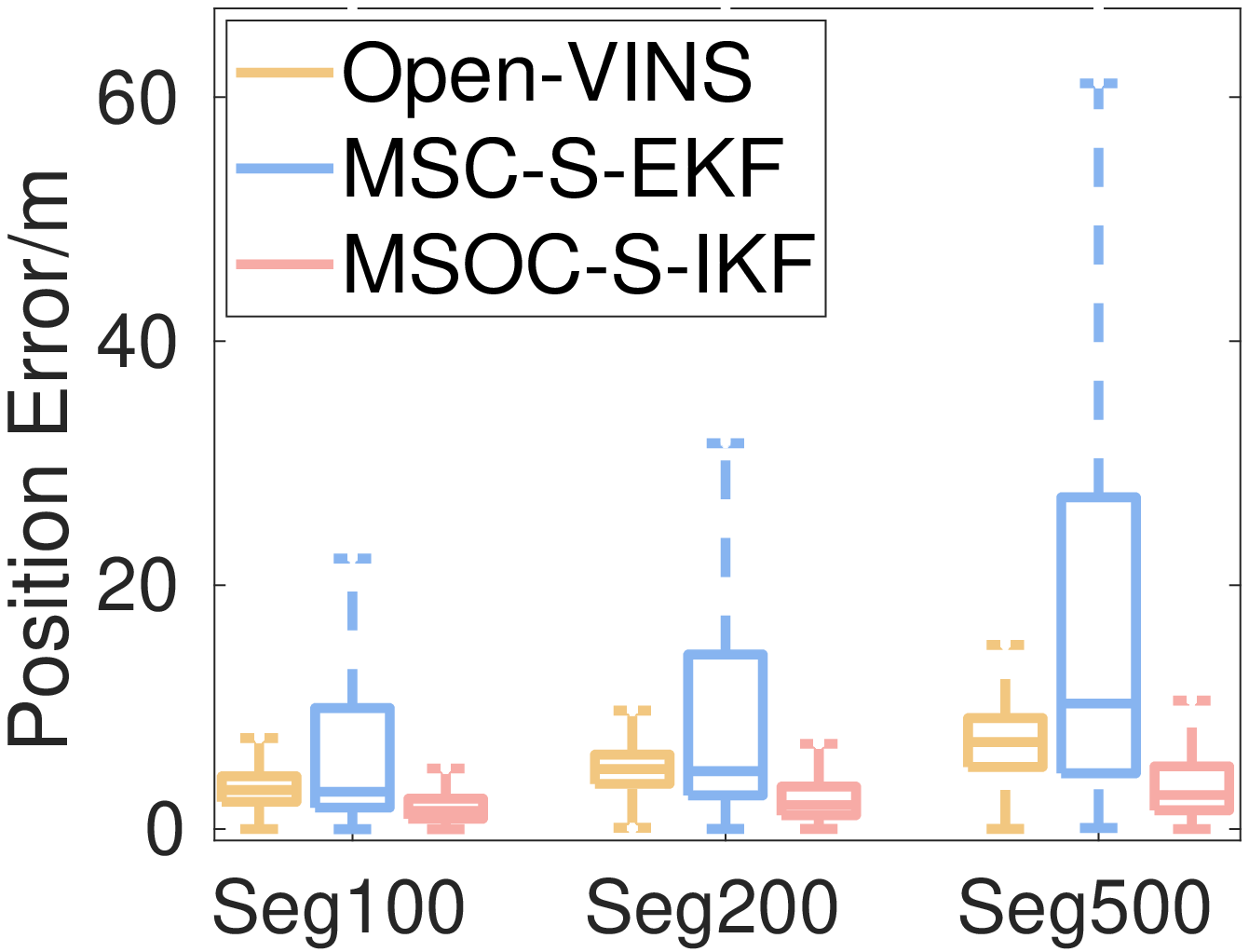}}
    \vspace{-0.12cm}
    \subfigure[YQ2 local]{\includegraphics[width=0.24\textwidth]{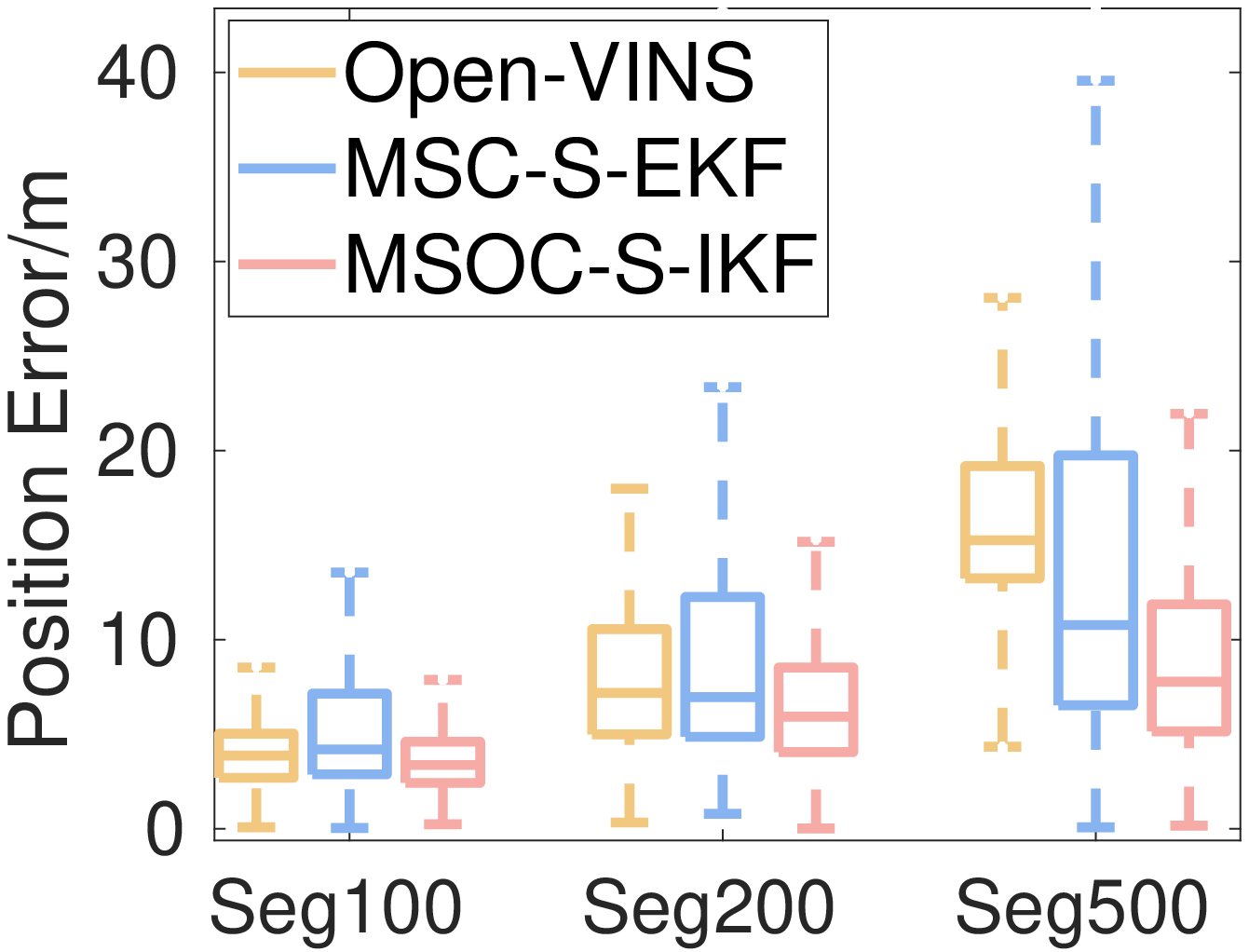}}
    \vspace{-0.12cm}
    \subfigure[YQ3 local]{\includegraphics[width=0.24\textwidth]{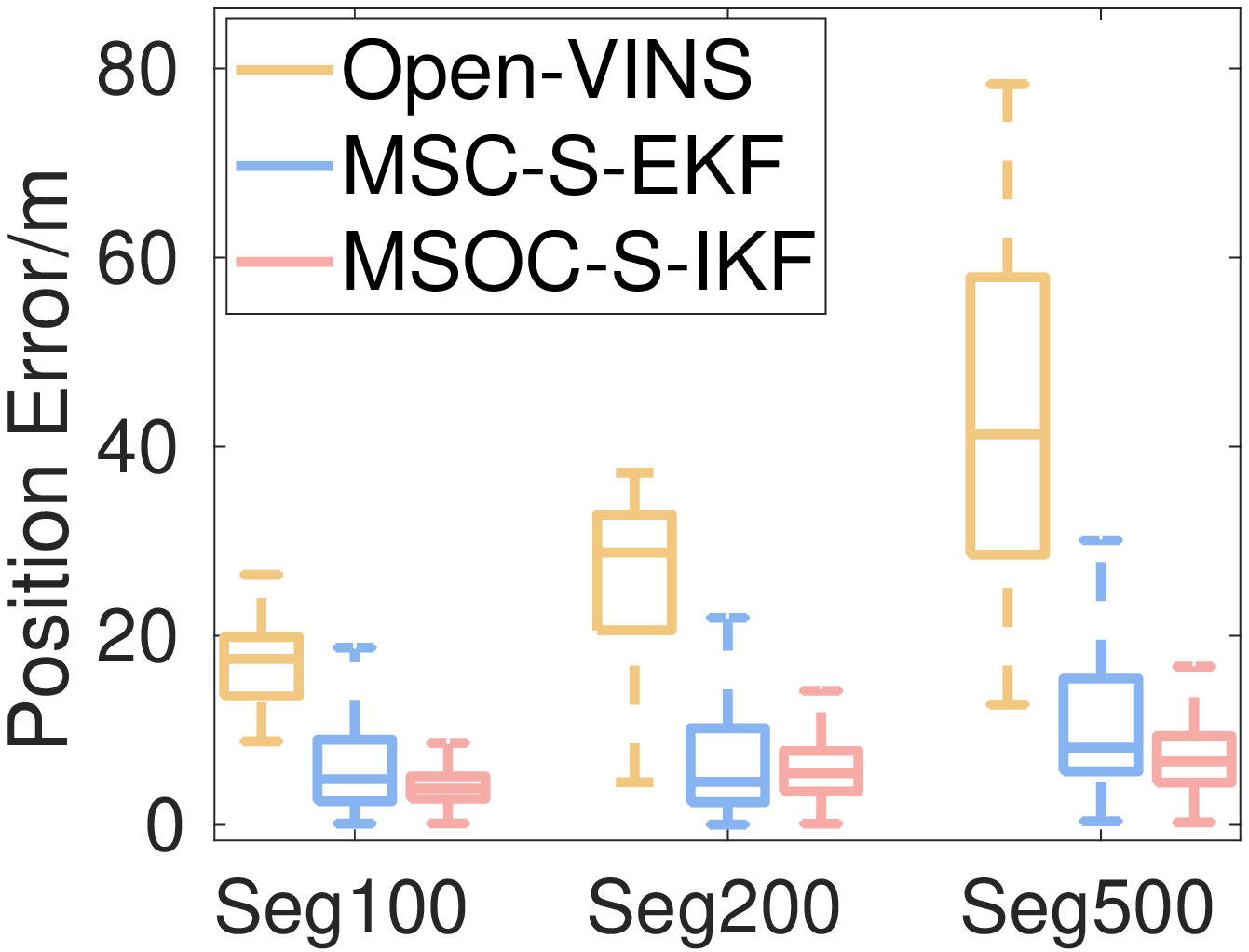}}
    \vspace{-0.12cm}
    \subfigure[YQ4 local]{\includegraphics[width=0.24\textwidth]{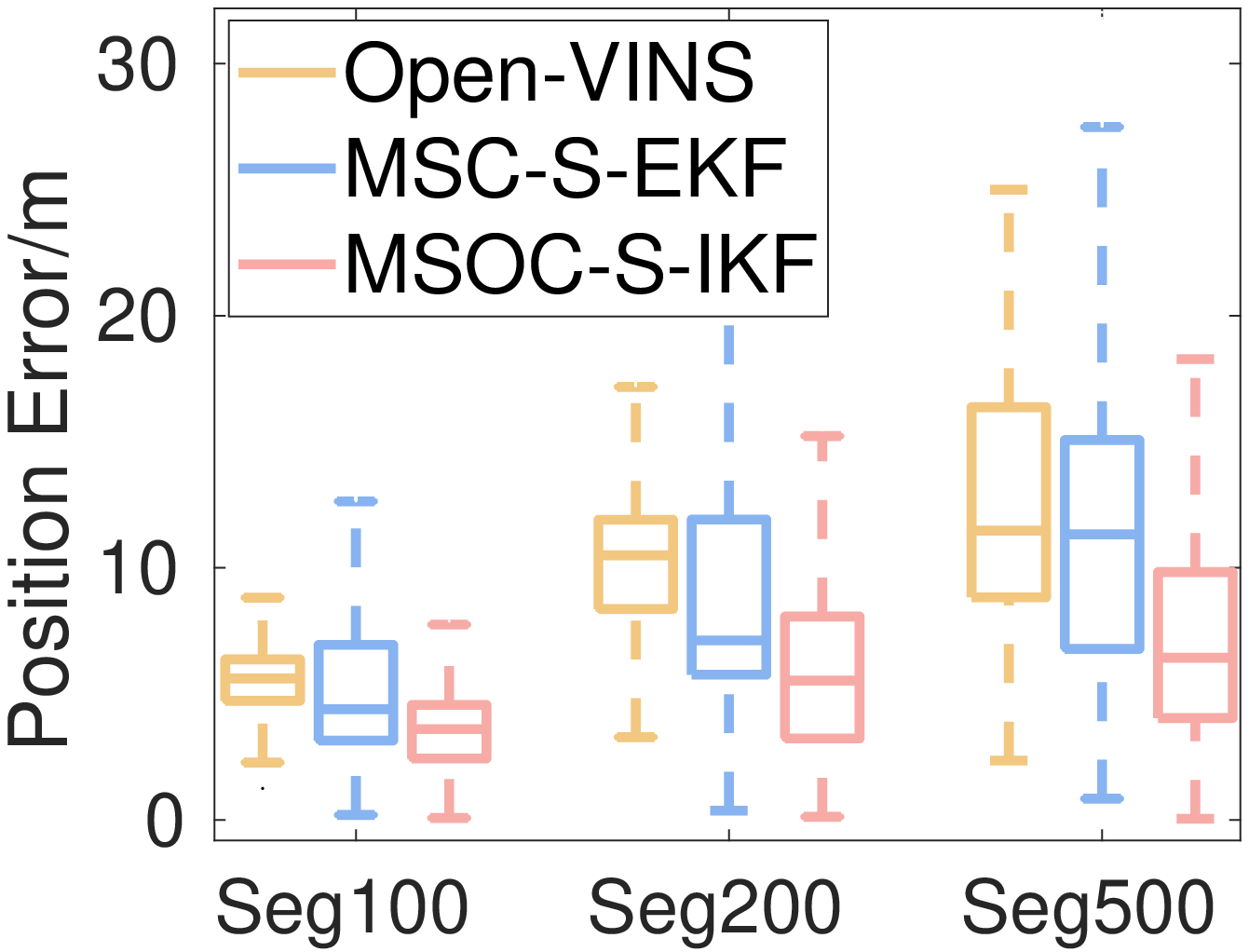}}
    \vspace{-0.12cm}
    \subfigure[Urban39 global]{\includegraphics[width=0.24\textwidth]{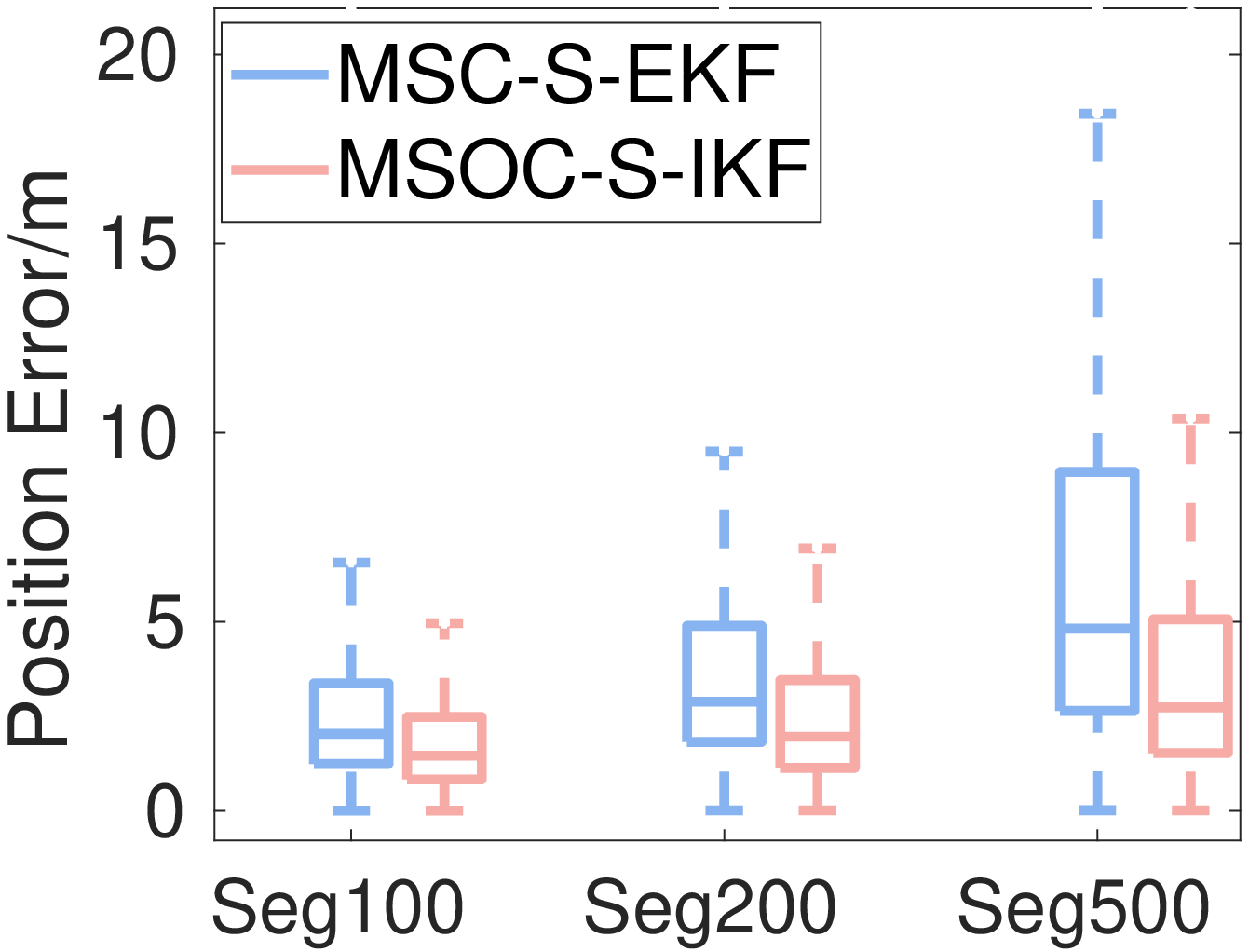}}
    \vspace{-0.0cm}
    \subfigure[YQ2 global]{\includegraphics[width=0.24\textwidth]{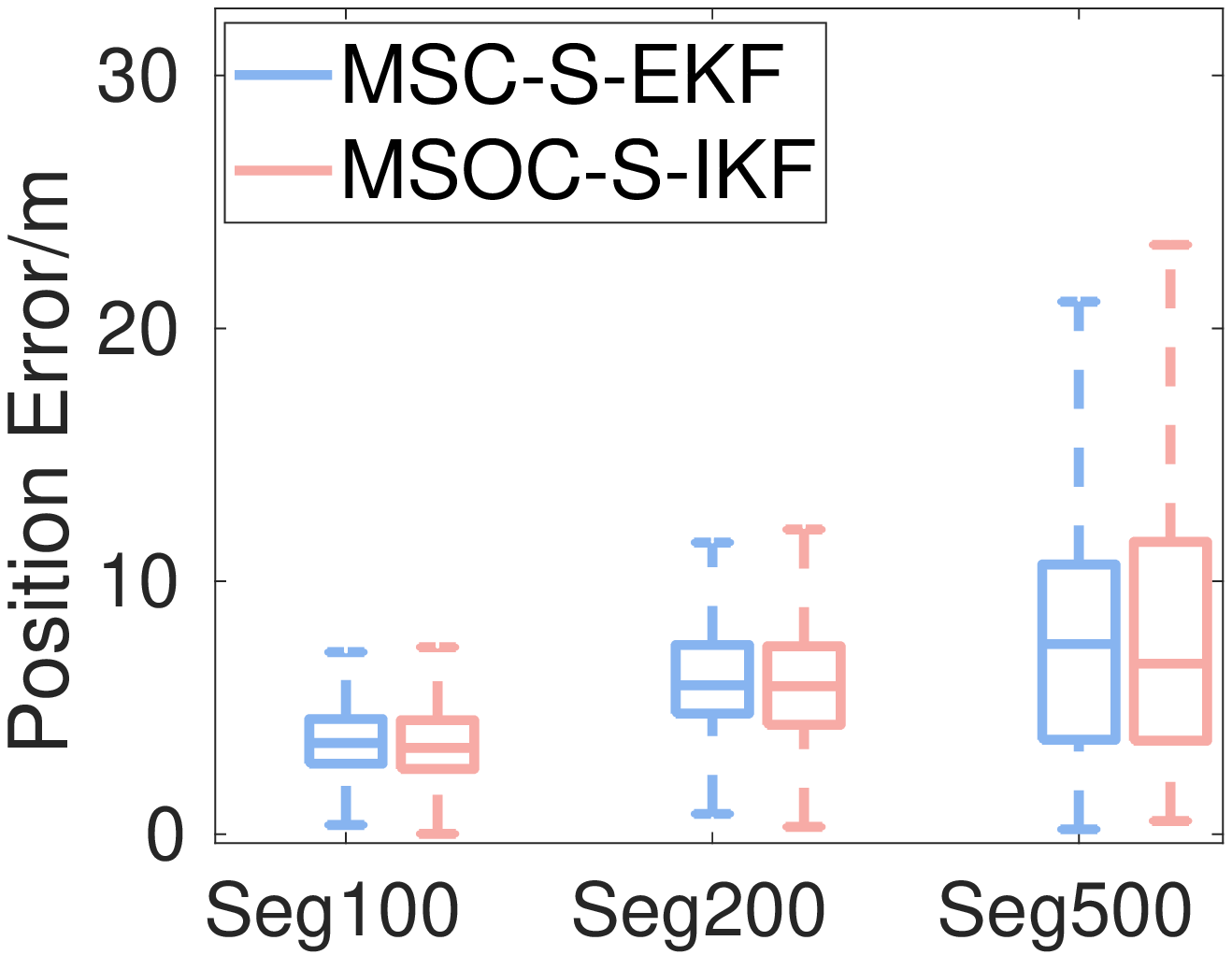}}
    \vspace{-0.0cm}
  \subfigure[YQ3 global]{\includegraphics[width=0.24\textwidth]{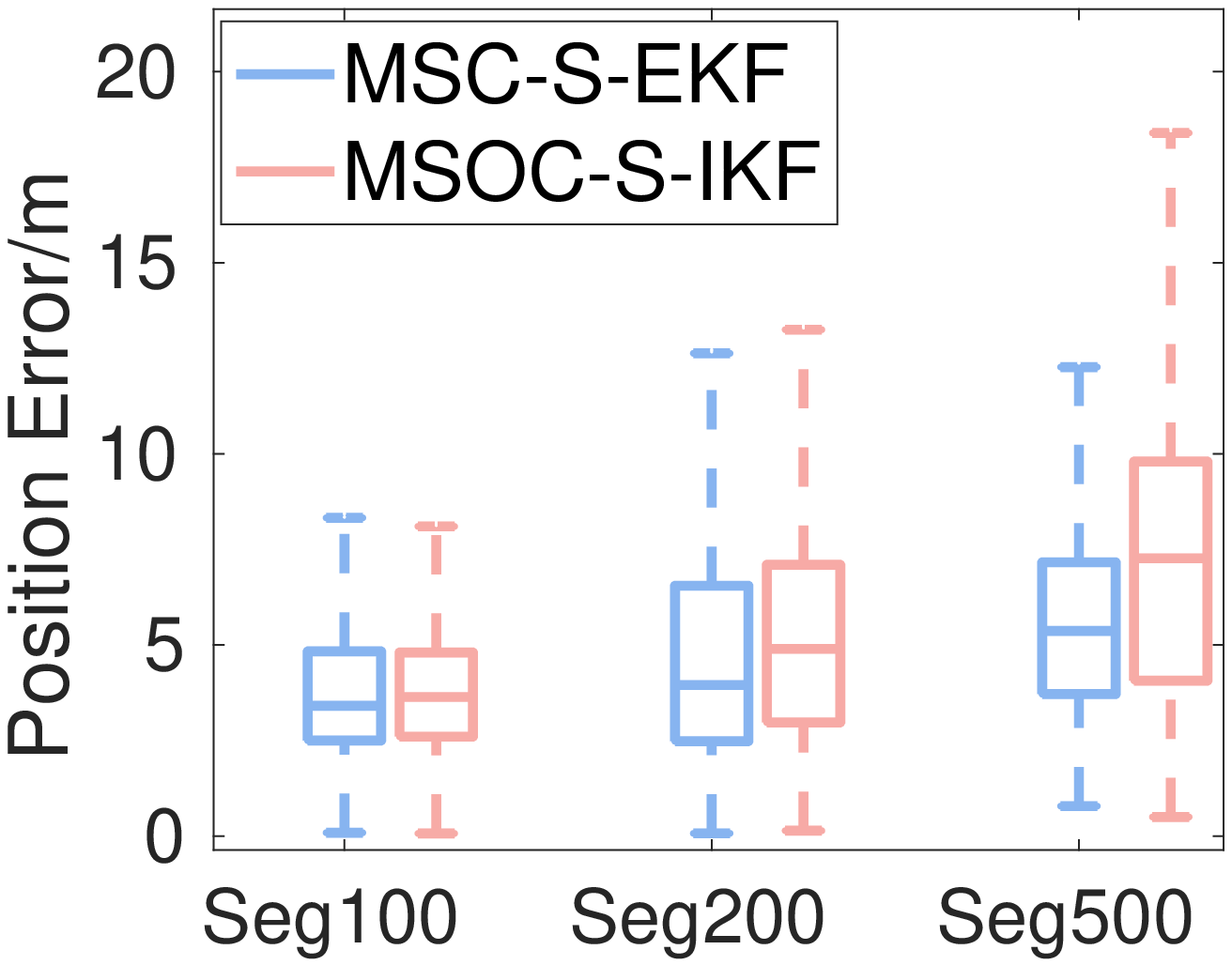}}
  \vspace{-0.0cm}
    \subfigure[YQ4 global]{\includegraphics[width=0.24\textwidth]{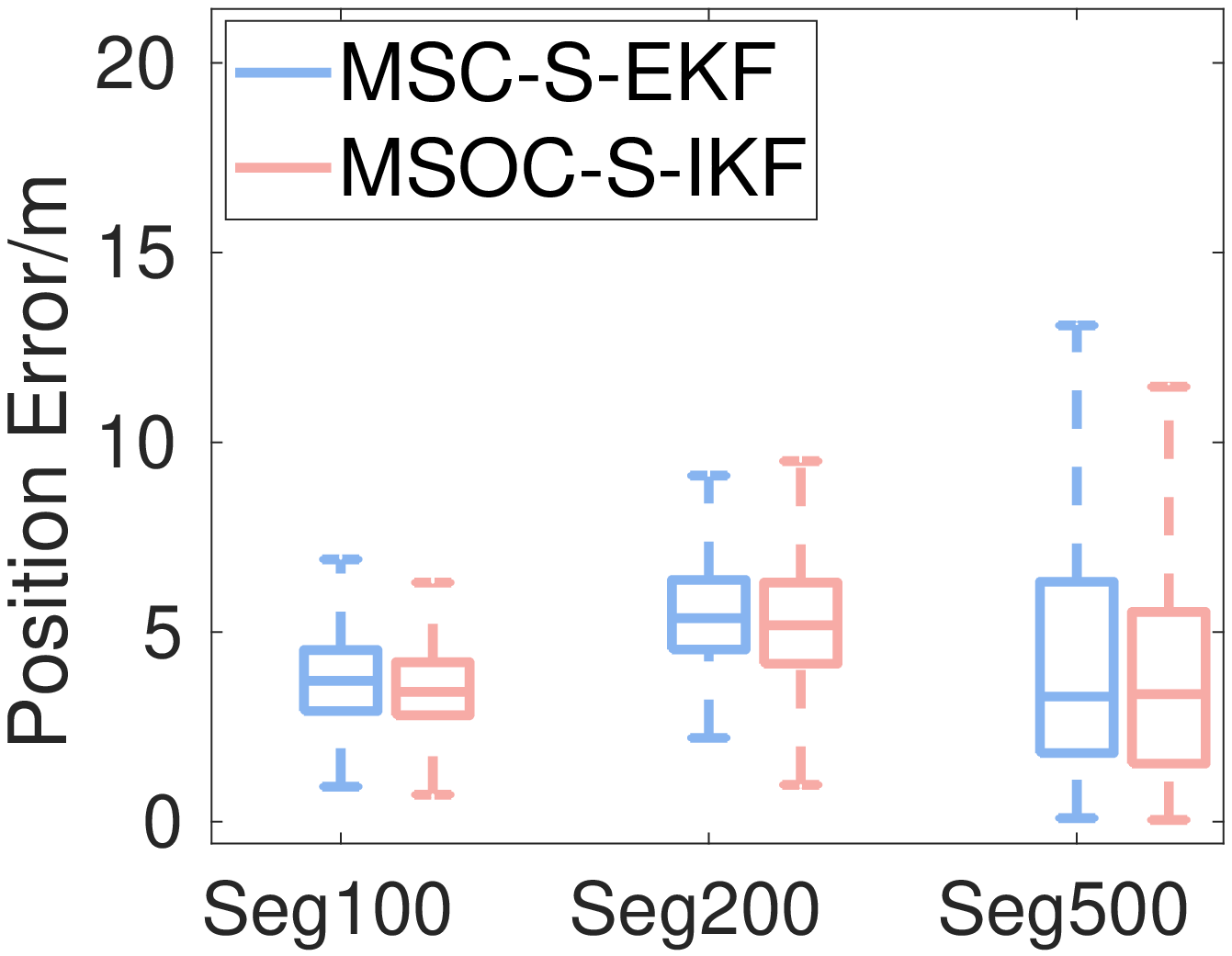}}
    \vspace{-0.0cm}
    \caption{The RPE results of different algorithms on different datasets}
    \label{fig:RPE}
    \vspace{-0.5cm}
\end{figure*}

\textcolor{black}{To see the drift of the localization and the smoothness of the trajectory, in Fig. \ref{fig:RPE}, we give the box plots of RPEs of Open-VINS, MSC-S-EKF, and MSOC-S-IKF for the position in both local and map reference frames. As defined in (\ref{eq:RPE}), we select the segment length $l$ as $100m, 200m$ and $500m$. 
We can find that, with the segment length increasing, the position errors also increase, which indicates the existence of the drift. However, as MSOC-S-IKF consistently fuses the map information, the drift is alleviated, which can be illustrated by comparing the local position errors of Open-VINS and MSOC-S-IKF. On the contrary, MSC-S-EKF has apparent drift, and the RPE for each segment length is also large. This is due to the fact that MSC-S-EKF is inconsistent, and spurious information is introduced to the system. Moreover, we can find that, for the local positions, the deviation of MSC-S-EKF is much larger than that of MSOC-S-IKF, which indicates that the RPE of MSC-S-EKF changes much more apparently than that of MSOC-S-IKF, i.e., the smoothness of the trajectory of MSC-I-EKF in the local reference system is much better than that of MSC-S-EKF. This conclusion can also be illustrated in Fig. \ref{fig:YQ2_comp}, where the trajectory of MSC-S-EKF is severely distorted. }

\textbf{Real-time efficiency:} The computing efficiency of our proposed algorithm is also validated. The average time consumptions per step of VINS-Fusion and MSOC-S-IKF on EuRoC sequences are given in Fig. \ref{fig:time_comp}, with the pure odometry Open-VINS as a baseline. One can see that our proposed filter based solution is more time-saving than VINS-Fusion, thus is more appropriate for on-board deployment.
\vspace{-2mm}

\begin{figure}[t]
\centering
    \includegraphics[width=1\linewidth]{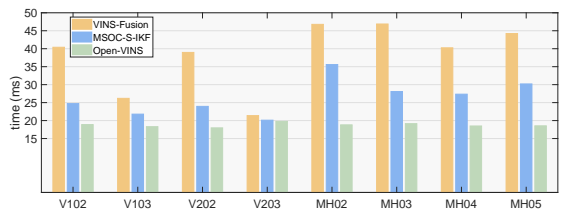}
    \vspace{-7mm}
   \caption{The average time consumptions (ms) per step of different methods on EuRoC}
   \label{fig:time_comp}
   \vspace{-0.5cm}
\end{figure}
\subsection{Discussion}

\textcolor{black}{Through the simulations and the real world experiments, we validate that MSC-IKF is consistent for the \textit{perfect augmented system}, but inconsistent for the \textit{imperfect augmented system} due to ignoring the uncertainty of the \textit{imperfect map}. On the contrary, because MSOC-S-IKF not only correctly maintains the observability of the \textit{imperfect augmented system}, but also considers the uncertainty of the map, the trajectory accuracy in both the local odometry frame and the map frame is the best in the vast majority of situations.}

\textcolor{black}{From experiments, there is an interesting phenomenon. Although MSC-S-EKF is inconsistent, it can successfully run on all the datasets. And for many sequences, like YQ2 and YQ3, MSC-S-EKF has competitive trajectory accuracy in the map frame. This is probably because there is only one relative transformation between the local odometry frame and the map (\textit{augmented variable}) is considered in our problem, which makes the impact of inconsistency caused by the \textit{augmented variable} less significant on the trajectory accuracy in the map frame. If there are multiple types of sensors with their corresponding maps, there will be multiple \textit{augmented variables}, then the impact of inconsistency caused by these \textit{augmented variables} might be more significant. For example, if a vehicle is equipped with a camera  and a GPS sensor, and a pre-built visual map and GPS signals are employed to provide global information, then there would be a relative transformation between the local odometry frame and the visual map frame and a relative transformation between the local odometry frame and the GPS measurements frame. Under this kind of situation, the system needs to estimate multiple \textit{augmented variables}. Besides, another situation where multiple \textit{augmented variables} are considered can also occur in multi-robot localization scenarios.}


\textcolor{black}{Moreover, as shown in Table \ref{tab:all_ate_comp}, Fig. \ref{fig:YQ2_comp} and Fig. \ref{fig:RPE}, the accuracy and the smoothness of the local trajectory of MSC-S-EKF are much worse than MSOC-S-IKF. This could make the trajectory planning unfeasible for MSC-S-EKF, because in most practical applications, trajectory planning is carried out in a local odometry frame, making localization accuracy in the local odometry frame matter. Meanwhile, localization in the map (global) frame is used to carry out global path planning. Therefore, we need good localization accuracy in both local and global frames, and our proposed consistent algorithm, MSOC-S-IKF, can give promising results. 
}

\vspace{-2mm}
\section{Conclusion}
In this paper, a general situation of map-based VIL is considered, where the pre-built map is imperfect and gravity-unaligned. The map-based VIL problem is formulated as maintaining the local VIO and estimating the \textit{augmented variable} simultaneously. Based on a special Lie group, Schmidt filter, and the multi-state observability constarint techinique, we propose a consistent and efficient map-based VIL algorithm, MSOC-S-IKF. By observability-based analysis and computational complexity analysis, we theoretically demonstrate the consistency and efficiency of our proposed algorithm, respectively. Results of simulations and real-data experiments further validate the correctness of the theory and the effectiveness of our proposed algorithm. Future works include extending the MSOC-S-IKF to the distributed system to perform multiple robots localization. Besides, how to add loop closure into this algorithm consistently and efficiently will also be our future research direction.   

\vspace{-2mm}

\ifCLASSOPTIONcaptionsoff
  \newpage
\fi



%

%
\vspace{-5mm}
\begin{IEEEbiography}[{\includegraphics[width=1in,clip,keepaspectratio]{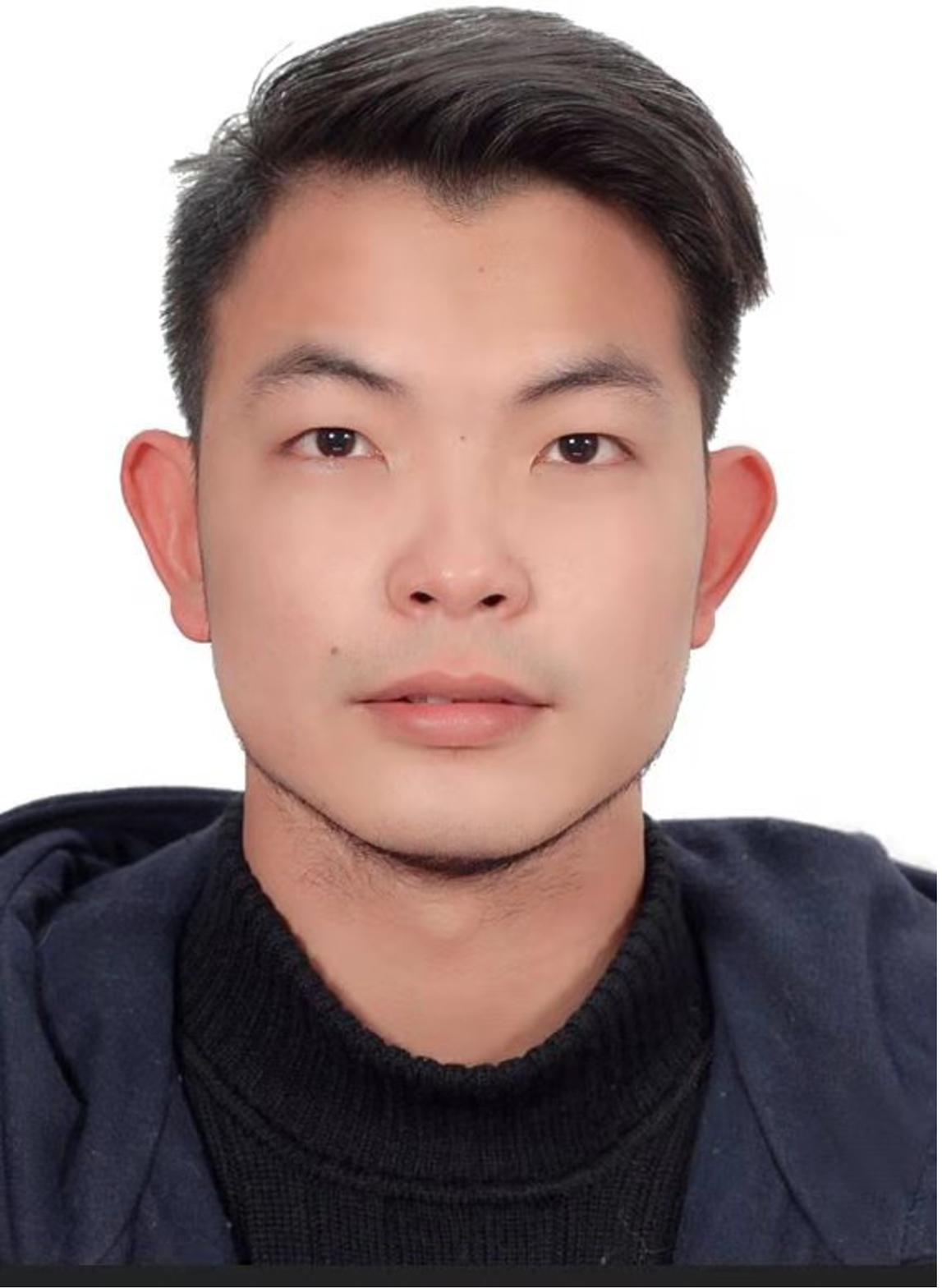}}]{Zhuqing Zhang} received his B.S from School of Astronautic, Northwestern Polytechnical University in 2017 and received his M.S. from School of Aeronautics and Astronautics, Shanghai Jiao Tong University in 2020. \\
\quad He is currently a Ph.D. candidate in the Department of Control Science and Engineering, Zhejiang University, Hangzhou, P.R. China. His current research interests include multi sensor fusion localization and SLAM.
\end{IEEEbiography}
\vspace{-5mm}
\begin{IEEEbiography}[{\includegraphics[width=1in,clip,keepaspectratio]{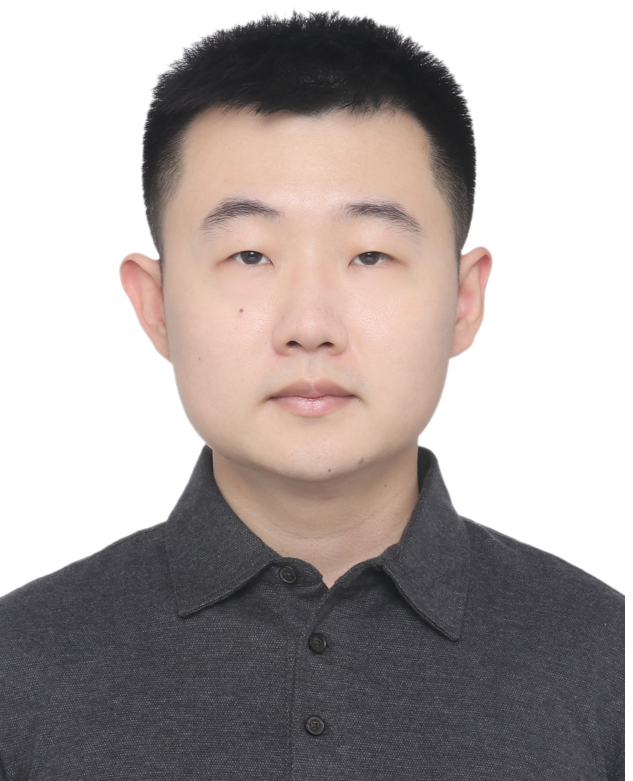}}]{Yang Song} received his B.S. and M.S. from School of Mathematics and Statistics, Beijing Institute of Technology in 2017 and 2020, respectively.\\
\quad He is currently a Ph.D. candidate in the Robotics Institute, University of Technology Sydney, Australia. His current research interests are EKF, SLAM and optimization.
\end{IEEEbiography}
\vspace{-5mm}

\begin{IEEEbiography}[{\includegraphics[width=1in,clip,keepaspectratio]{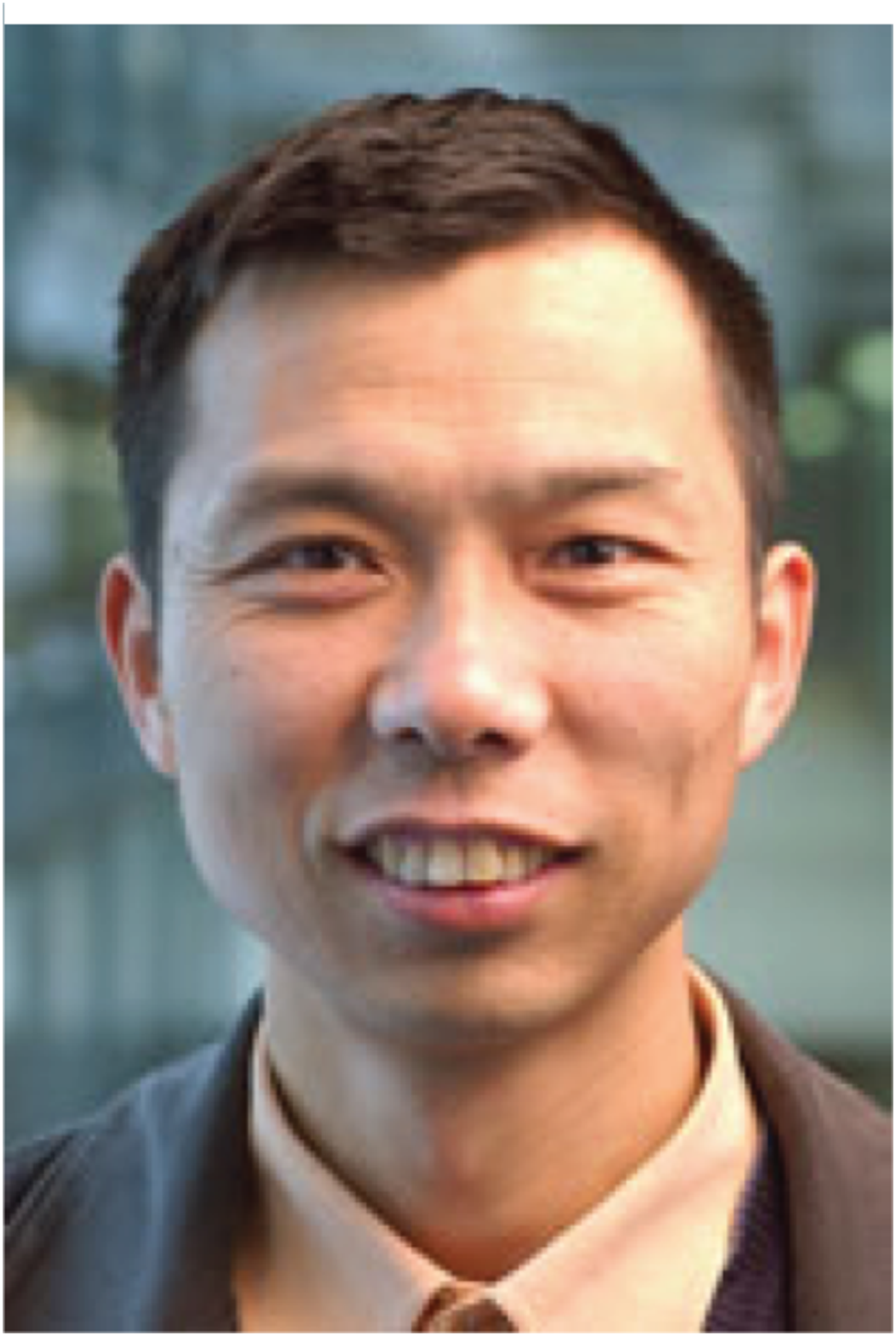}}]{Shoudong Huang} received the Bachelor and Master degrees in Mathematics, Ph.D. in Automatic Control from Northeastern University, PR China in 1987, 1990, and 1998, respectively. He is currently an Associate Professor in the Robotics Institute, University of Technology Sydney, Australia. His research interests include nonlinear control systems and mobile robots simultaneous localization and mapping (SLAM), exploration and navigation.
\end{IEEEbiography}
\vspace{-5mm}
\begin{IEEEbiography}[{\includegraphics[width=1in,clip,keepaspectratio]{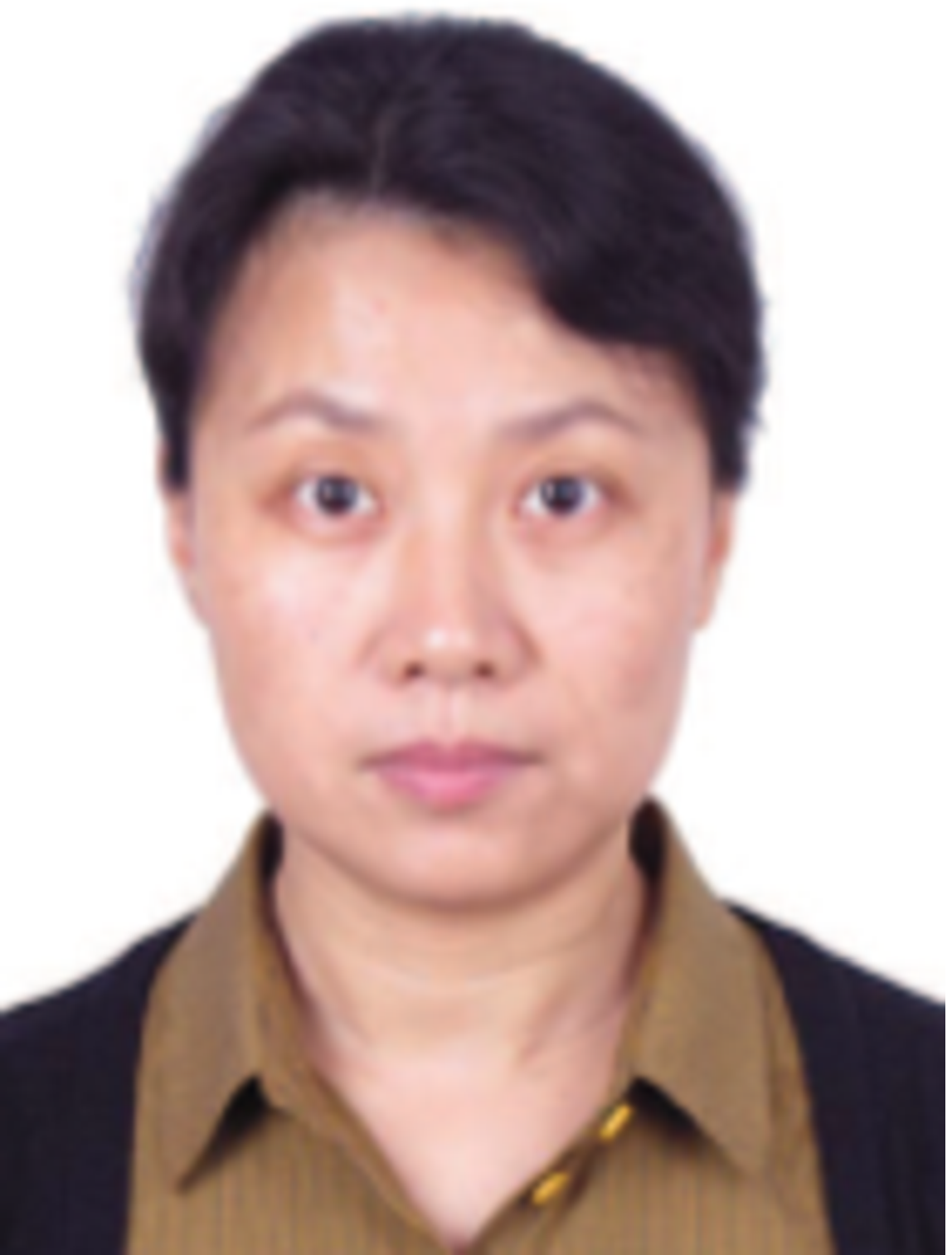}}]{Rong Xiong} received her PhD in Control Science and Engineering from the Department of Control Science and Engineering, Zhejiang University, Hangzhou, P.R. China in 2009. \\
\quad She is currently a Professor in the Department of Control Science and Engineering, Zhejiang University, Hangzhou, P.R. China. Her latest research interests include motion planning and SLAM.
\end{IEEEbiography}
\vspace{-5mm}
\begin{IEEEbiography}[{\includegraphics[width=1in,clip,keepaspectratio]{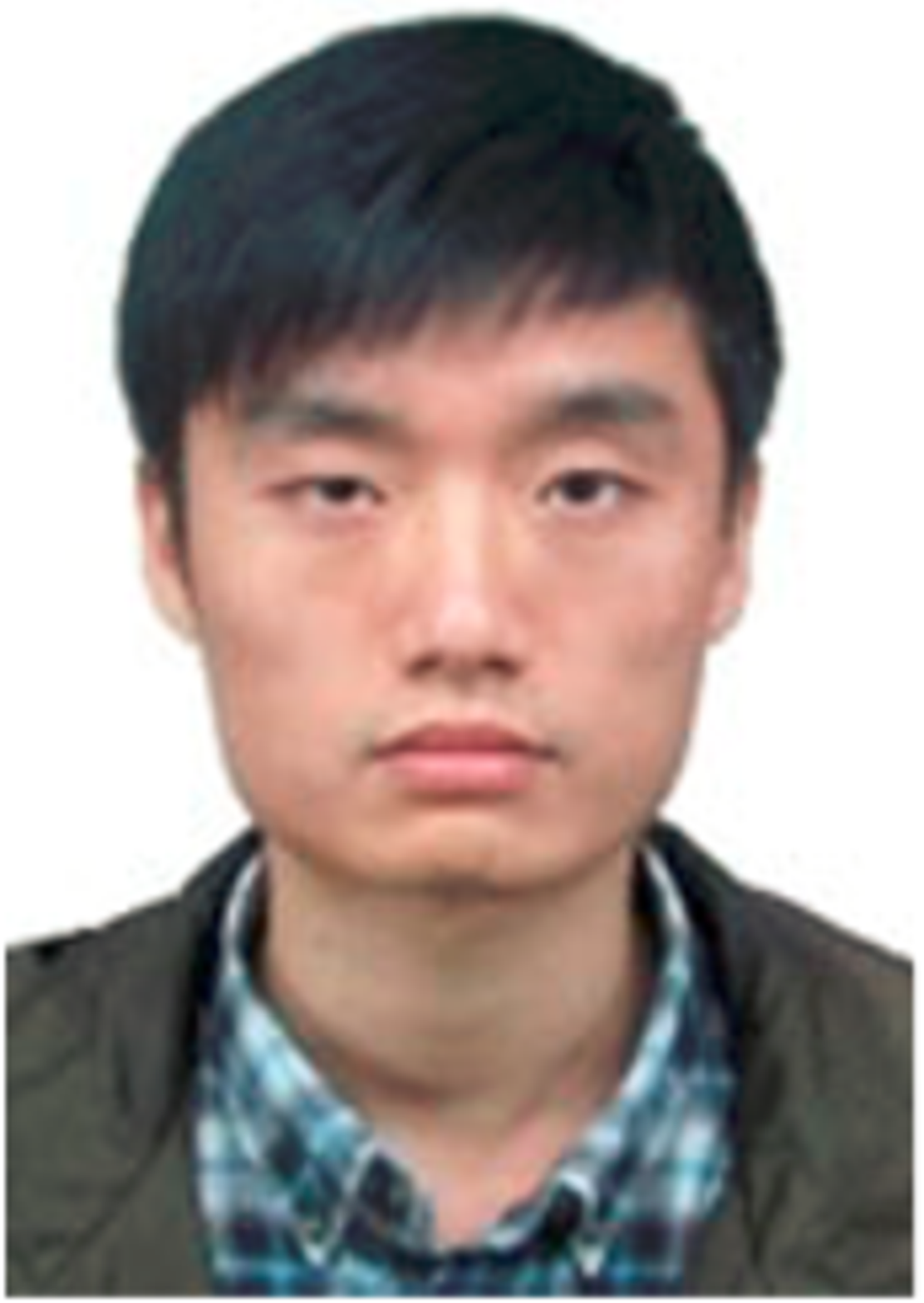}}]{Yue Wang} received his PhD in Control Science and Engineering from Department of Control Science and Engineering, Zhejiang University, Hangzhou, P.R. China in 2016. \\
\quad He is currently an Associate Professor in the Department of Control Science and Engineering, Zhejiang University, Hangzhou, P.R. China. His latest research interests include mobile robotics and robot perception.
\end{IEEEbiography}






\end{document}